\documentclass[10pt]{article} 
\usepackage[accepted]{Style_files/tmlr}


\usepackage{amsmath,amsfonts,bm}









\def\eqref#1{equation~\ref{#1}}









\def\1{\bm{1}}





\def\rmA{{\mathbf{A}}}

\def\rmC{{\mathbf{C}}}
\def\rmD{{\mathbf{D}}}

\def\rmS{{\mathbf{S}}}

\def\rmV{{\mathbf{V}}}





\DeclareMathAlphabet{\mathsfit}{\encodingdefault}{\sfdefault}{m}{sl}
\SetMathAlphabet{\mathsfit}{bold}{\encodingdefault}{\sfdefault}{bx}{n}
\newcommand{\tens}[1]{\bm{\mathsfit{#1}}}

\def\tX{{\tens{X}}}
\def\tY{{\tens{Y}}}













\DeclareMathOperator*{\argmax}{arg\,max}
\DeclareMathOperator*{\argmin}{arg\,min}

\newcommand{\CLD}{\mathtt{CLD}}
\newcommand{\CLDbold}{\mathbf{CLD}}

\newcommand{\thetaS}{\theta_{\rmS}}
\newcommand{\thetaC}{\theta_{\rmC}}
\newcommand{\GradV}{G_{\rmV}}
\newcommand{\DeltaV}{\vec{\Delta}'_{\rmV}}
\newcommand{\gammatc}{\gamma_{\rmC}^t}
\newcommand{\norm}[1]{\left\lVert#1\right\rVert_2}

\newcommand{\flarge}{f_{\text{large}}}
\newcommand{\plarge}{p_{\text{large}}}

\usepackage{hyperref}
\usepackage{url}

\usepackage[capitalize,noabbrev]{cleveref}
\usepackage{graphicx}
\usepackage{subcaption}
\usepackage{mathtools} 
\usepackage{amsthm}   
\usepackage{amssymb} 
\usepackage{amsmath} 
\usepackage{multirow}
\usepackage{booktabs, arydshln}       
\usepackage[ruled,vlined,linesnumbered]{algorithm2e}
\usepackage{enumitem}
\usepackage{lscape}   

\SetKwInput{KwIn}{Input}
\SetKwInput{KwOut}{Output}

\newtheorem{assumption}{Assumption}
\crefname{assumption}{Assumption}{Assumptions}
\newtheorem{definition}{Definition}
\newtheorem{lemma}{Lemma}

\newtheorem{theorem}{Theorem}
\newtheorem{corollary}{Corollary}
\newtheorem{remark}{Remark}

\usepackage[dvipsnames]{xcolor}
\usepackage[normalem]{ulem} 
\newif\ifrebuttal

\ifrebuttal
  \newcommand{\rev}[1]{{\color{red}#1}}           
  \newcommand{\del}[1]{{\color{Gray}\sout{#1}}}   
  \newcommand{\revstart}{\begingroup\color{red}}
  \newcommand{\revend}{\endgroup}
\else
  \newcommand{\rev}[1]{#1}
  \newcommand{\del}[1]{}
  \newcommand{\revstart}{}
  \newcommand{\revend}{}
\fi

\title{Coresets from Trajectories: Selecting Data via Correlation of Loss Differences}


\author{\name Manish Nagaraj \email mnagara@purdue.edu \\
      \addr Electrical and Computer Engineering \\
      Purdue University
      \AND
      \name Deepak Ravikumar \email dravikum@purdue.edu \\
      \addr Electrical and Computer Engineering \\
      Purdue University
      \AND
      \name Kaushik Roy \email kaushik@purdue.edu \\
      \addr Electrical and Computer Engineering \\
      Purdue University}



\begin{document}

\maketitle

\begin{abstract}
Deep learning models achieve state-of-the-art performance across domains but face scalability challenges in real-time or resource-constrained scenarios. 
To address this, we propose \textit{Correlation of Loss Differences} ($\CLD$), a simple and scalable metric for coreset selection that identifies the most impactful training samples by measuring their alignment with the loss trajectories of a held-out validation set.  
$\CLD$ is highly efficient, requiring only per-sample loss values computed at training checkpoints, and avoiding the costly gradient and curvature computations used in many existing subset selection methods. 
We develop a general theoretical framework that establishes convergence guarantees for $\CLD$-based coresets, demonstrating that the convergence error is upper-bounded by the alignment of the selected samples and the representativeness of the validation set. 
On CIFAR-100 and ImageNet-1k, $\CLD$-based coresets typically outperform or closely match state-of-the-art methods across subset sizes, and remain within 1\% of more computationally expensive baselines even when not leading.
$\CLD$ transfers effectively across architectures (ResNet, VGG, DenseNet), enabling proxy-to-target selection with $<1\%$ degradation. 
Moreover, $\CLD$ is stable when using only early checkpoints, incurring negligible accuracy loss. 
Finally, $\CLD$ exhibits inherent bias reduction via per-class validation alignment, obviating the need for additional stratified sampling. 
Together, these properties make $\CLD$ a principled, efficient, stable, and transferable tool for scalable dataset optimization. \footnote{\href{https://github.com/manishnagaraj/CLD_Coresets_from_Trajectories}{The code is available on GitHub.}}
\end{abstract}

\section{Introduction}
\label{sec:Introduction}

Deep learning models rely on large and diverse datasets to achieve state-of-the-art performance across a wide range of tasks. 
However, training on such datasets is increasingly constrained by compute and memory budgets, especially in real-time or resource-limited settings. 
This raises a fundamental question: \textit{Which subsets of data most effectively support generalization?} 
A natural answer is offered by \textit{coresets}, compact, representative subsets of training data that retain full-dataset performance when used for training.

Coresets support a range of applications including active learning~\citep{coreset_activelearning1}, neural architecture search~\citep{coreset_nas1, coreset_nas2}, dataset distillation~\citep{coreset_dc}, and continual learning~\citep{coreset_cl_1, bilevelcoresets_borsos2020}. 
However, most existing approaches are either based on heuristic criteria unrelated to generalization~\citep{forgetting_toneva2018, initial_beloudah2020, icarl_rebuffi2017}, or expensive second-order or bilevel optimization~\citep{gradmatch_killamsetty2021, tracin_pruthi2020, slocurves_garg2023, glister_killamsetty2021, refinedcoreset_xia2024, bilevelcoresets_borsos2020}, limiting scalability.

We propose a simple, scalable, and theoretically grounded alternative to coreset generation using a metric we define as the \textbf{Correlation of Loss Differences} ($\CLD$). 
This metric quantifies how closely the loss trajectory of a training sample aligns with the average validation loss trajectory during training (\cref{fig:overview}, left). 
Since the validation set reflects the test distribution, high positive $\CLD$ samples are likely to contribute positively to generalization. Selecting such samples yields compact coresets that preserve, or even improve, test accuracy by filtering out ambiguous or harmful examples (\cref{fig:overview}, right).

Beyond simplicity, $\CLD$ provides strong theoretical guarantees.  
We prove that training on high-$\CLD$ samples achieves convergence in population risk with an error bound that closely matches full-data training, where the excess error is explicitly governed by the sample alignment parameter $\kappa$ and the validation representativeness $\delta$ (see \cref{thm:cld_coreset_convergence}).
Our theory reveals that high $\CLD$ is not just sufficient but also necessary to minimize the convergence error-bound under coreset training.

We validate these findings across CIFAR-100 and ImageNet-1k, where $\CLD$-selected coresets typically outperform or closely match state-of-the-art methods across a wide range of coreset sizes, and remain within 1\% of more computationally expensive baselines even when not leading.
Unlike these methods, $\CLD$ \del{avoids costly gradients, pairwise similarities, or second-order statistics. 
It requires only per-sample loss values computed at training checkpoints, yielding significant gains in both computational and storage efficiency.}
\rev{requires only per-sample loss values, allowing it to avoid the costly gradients, Jacobians, or feature embeddings used by many influence- and similarity-based selectors. Concretely, the only computational overhead beyond a standard training run on the full dataset of size $N$ is a single forward pass at each checkpoint over a small, held-out validation set of size $Q$. Since the validation set is typically much smaller than the training set ($Q \ll N$), this lightweight approach yields significant gains in both computational and storage efficiency, which we quantify in our full cost analysis summarized in \cref{tab:coresets_overheads}.}

An additional strength of $\CLD$ is its robustness; the metric remains stable when computed using sparsely sampled training checkpoints and is consistent across random seeds, making it practical for large-scale or budgeted deployments. 
Furthermore, $\CLD$ coresets transfer effectively across architectures. 
\rev{This is one of the key advantages of $\CLD$.} 
Coresets computed using small proxy models (e.g., ResNet-18) generalize to larger models (e.g., ResNet-50, DenseNet) with performance drops consistently under $1\%$.
\rev{Hence we can compute coresets using a smaller proxy model (e.g., ResNet\mbox{-}18) and reuse it to train larger target models (e.g., ResNet\mbox{-}50/VGG/DenseNet) with minimal loss in accuracy, while substantially reducing selection cost.}

\begin{figure*}[t]
    \centering
    \includegraphics[width=1\linewidth]{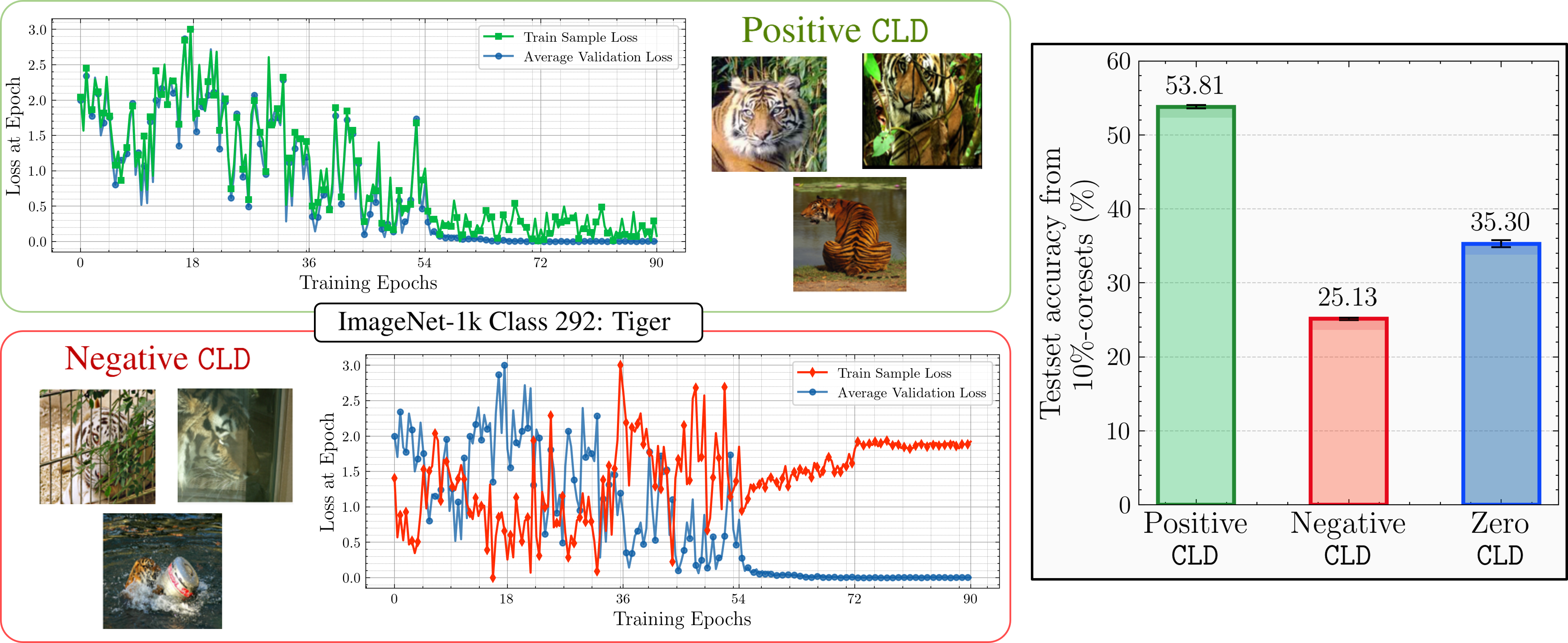}
    \vspace{-3mm}
    \caption{\textbf{Correlation of Loss Differences (\texttt{CLD}) at a glance.}
    \textbf{Left:} \emph{ImageNet‑1k} ``Tiger'' samples illustrating varying \texttt{CLD} scores.  
    High-\texttt{CLD} samples (top row) closely track the validation loss trajectory, indicating informative and representative data.  
    Low/negative-\texttt{CLD} samples (bottom row) significantly deviate, typically corresponding to atypical, ambiguous, or mislabeled examples. 
    \textbf{Right:} Performance comparison of coresets formed by selecting equal-sized subsets of the highest 10\% positive, lowest 10\% negative, and 10\% zero-valued \texttt{CLD} samples of ImageNet-1k on ResNet-18. Coresets with high-positive \texttt{CLD} samples achieve superior accuracy over various seeds.}
    \vspace{-5mm}
    \label{fig:overview}
\end{figure*}

\textbf{Summary of contributions:}
\begin{enumerate}
    \item \textbf{Correlation of Loss Differences for Coreset Selection.} We introduce $\CLD$, a simple and scalable metric for coreset construction based on the correlation between a training sample’s loss differences and the average validation loss trajectory, serving as a proxy for generalization, in \cref{sec:cld}.

    \item \textbf{Theoretical Guarantees.} We develop a general convergence framework showing that training on high-$\CLD$ samples yields population risk close to full-data training, with the suboptimality explicitly governed by sample alignment and validation representativeness, in \cref{sec:theory}.

    \item \textbf{Experimental Validation.} We show that on CIFAR-100 and ImageNet-1k, $\CLD$-selected coresets typically outperform or closely match state-of-the-art methods across a wide range of subset sizes, and remain within 1\% of more expensive baselines when not leading, in \cref{sec:experiments}.

    \item \textbf{Efficiency, Transferability, and Stability.} $\CLD$ avoids gradient and curvature computations, incurs minimal compute and storage cost, transfers across architectures via proxy models, and remains stable under checkpoint subsampling and random seeds, making it highly practical for large-scale settings. We discuss these in \cref{sec:coreset_overheads} and \cref{sec:Discussion}.
\end{enumerate}

\section{Related Literature}
\label{sec:related_work}
The need for scalable coreset methods has led to a variety of approaches, which can be broadly grouped into \textit{score-based}, \textit{optimization-based}, and \textit{training property-based} methods.

\textbf{\textit{Score-based methods}} select training samples according to predefined metrics, often in the feature space or based on model prediction confidence.
Some methods score a sample by its distance to a class center~\citep{icarl_rebuffi2017, endtoend_castro2018, initial_beloudah2020}, to the class feature median (e.g., $\mathtt{Moderate}$~\citep{xia2022moderate}), or to other samples~\citep{sener2018active}. 
$\mathtt{Cal}$~\citep{cal_margatina2021} identifies contrastive examples via the KL divergence between predictive distributions, while $\mathtt{Herding}$~\citep{herding_chen2010} selects representative samples using a kernel-based approach in Hilbert space. 
Other methods rely on prediction probabilities from a (possibly proxy) model. 
For example, $\mathtt{Forgetting}$~\citep{forgetting_toneva2018} counts how often a sample is misclassified after previously being correct, $\mathtt{GraNd}$ and $\mathtt{EL2N}$~\citep{E2LN_grand_paul2021} rank examples by their early-training loss gradient norm or $l_2$ error norm, respectively, and $\mathtt{AUM}$~\citep{pleiss2020_aum} scores the \emph{area under the margin} across training to flag potential label issues.
While these approaches are simple and avoid expensive gradient or Hessian computation, they often inherit biases from their heuristic scoring rules and offer no convergence guarantees. 
To mitigate bias, methods such as $\mathtt{CCS}$~\citep{zheng_ccs} employ stratified sampling for diversity, while $\mathbb{D}^2$-$\mathtt{Pruning}$~\citep{modelpred_d2_maharana2023} combines difficulty (prediction variance) and diversity (feature density) in a graph-based framework. 
Nonetheless, such strategies often make restrictive assumptions to avoid noisy samples (e.g., $\mathtt{CCS}$ discards up to $30\%$ of data) and still lack theoretical generalization guarantees.

\textbf{\textit{Optimization-based methods}} formulate coreset selection as an explicit optimization problem, often with provable convergence guarantees.
$\mathtt{CRAIG}$~\citep{craig_mirzasoleiman2020} and $\mathtt{GradMatch}$~\citep{gradmatch_killamsetty2021} select samples that align with the full-data gradient direction, while $\mathtt{Glister}$~\citep{glister_killamsetty2021} maximizes held-out validation log-likelihood. 
Bilevel optimization~\citep{bilevelcoresets_borsos2020} has been used to leverage influence functions~\citep{Koh2017} for selecting samples with maximal generalization benefit, and $\mathtt{GraphCut}$~\citep{graphcut_iyer2021} uses submodular information measures as the objective. 
Recently, $\mathtt{BoundarySet}$-$\mathtt{CCS}$~\citep{mindboundary_yang2024} minimized decision boundary reconstruction error while ensuring class diversity. 
While theoretically grounded, these approaches often require repeated optimization loops, making them computationally expensive for large datasets.

\textbf{\textit{Training property-based methods}} exploit the dynamics of training to assess sample importance.
$\mathtt{SloCurv}$~\citep{slocurves_garg2023} uses second-order loss curvature statistics to identify samples with better generalization potential, and $\mathtt{TracIn}$~\citep{tracin_pruthi2020} tracks gradient alignment with a validation set. 
$\mathtt{TDDS}$~\citep{zhang2024_tdds} extends this idea by projecting each sample’s gradient onto the accumulated gradient to quantify its true contribution, and by monitoring this projection across multiple iterations to account for fluctuations in importance over time.
Although effective, such methods depend on costly first- or second-order statistics, limiting scalability. 

\textbf{\textit{Scalable training property-based methods}} reduce this overhead by measuring per-sample dynamics using only forward-pass signals.
$\mathtt{Dyn}$-$\mathtt{Unc}$~\citep{uncertainity_he2024_dynunc} summarizes the variability of the true-class probability over sliding windows and prunes by thresholding aggregated uncertainty after training.
$\mathtt{DUAL}$~\citep{cho2025_DUAL} combines this uncertainty score with a difficulty term and uses a pruning-ratio–adaptive \emph{Beta sampling} schedule to reweight selection at high pruning ratios.
In contrast, our method $\mathtt{CLD}$ ranks examples by how well their \emph{loss-difference trajectories} align with the \emph{class-wise validation loss trajectory}, providing an explicit generalization-alignment criterion.
All three approaches admit identical minimal logging, one scalar per example per checkpoint (probability for $\mathtt{Dyn}$-$\mathtt{Unc}$/$\mathtt{DUAL}$; loss for $\mathtt{CLD}$), so compute and storage are directly comparable.
Unlike uncertainty-only methods, however, $\mathtt{CLD}$’s alignment objective supports a convergence guarantee (\cref{sec:theory}), and it avoids ratio-specific sampling schedules and their hyperparameters; we report robustness at extreme cases (e.g., coresets of size $5\%$) under identical logging (\cref{sec:experiments}).
\textit{In short, $\mathtt{CLD}$ combines the practicality of score-based metrics (low compute/storage via scalar logging) with a validation-aligned training-dynamics signal that admits a convergence guarantee.}
\section{Preliminaries and Problem Setup}
\label{sec:prelims}
We consider the supervised learning problem of learning a mapping from the input space to the output space, $\tX \to \tY$, where $\tX \subseteq \mathbb{R}^d$ and $\tY \subseteq \mathbb{R}$. 
The training dataset $\rmS$ consists of $N$ samples drawn from an unknown distribution $\rmD = \tX \times \tY$, with each sample denoted as $\vec{z}_i = (\vec{x}_i, y_i)$. Thus, $\rmS = \{\vec{z}_1, \dots, \vec{z}_N\}$.
Additionally, we assume access to a query (held-out validation) set $\rmV \sim \rmD^Q = \{\vec{q}_1, \dots, \vec{q}_Q\} $ containing $Q$ samples, which represents the true distribution $\rmD$.

A learning algorithm $\rmA$ (e.g., SGD) is used to train a model with parameters $\theta \in \mathbb{R}^p$ on the training set $\rmS$ over $T$ iterations.
We denote the model parameters at iteration $t$ of training on $\rmS$ as $\thetaS^t$, where $\thetaS^0$ corresponds to the random initialization prior to the first update.
The performance of the model at step $t$ on a sample $\vec{z}_m$ is evaluated using a loss function $\ell(\thetaS^t, \vec{z}_m): \mathbb{R}^p \times \mathbb{R}^d \to \mathbb{R}$, which quantifies the prediction error on $\vec{z}_m$ at that point in training.

The goal of training is to minimize the \textit{population risk} $R_{\rmD}(\theta)$,
\begin{equation}
 \argmin_\theta R_{\rmD}(\theta) = \argmin_\theta \mathop{\mathbb{E}}_{\vec{z} \sim \rmD}[\ell(\theta, \vec{z})].   
\end{equation}
However, since $\rmD$ is unknown, we instead minimize the \textit{empirical risk} $\hat{R}(\theta, \rmS) 
$,
\begin{equation}
\argmin_\theta \hat{R}(\theta, \mathcal{S}) = \argmin_\theta \left( \frac{1}{N} \sum_{m=1}^{N} \ell(\theta, \vec{z}_m) \right).    
\end{equation}

The gradient of the loss with respect to the parameters $\theta$ at step $t$ for a sample $\vec{z}_i$ is denoted as $\nabla_{\theta} \ell(\thetaS^t, \vec{z}_i)$. 
The average gradient over the validation set ($\GradV$) is,
\begin{equation}
\GradV(\thetaS^t) = \frac{1}{Q} \sum_{j=1}^{Q} \nabla_{\theta} \ell(\thetaS^t, \vec{q}_j)    
\end{equation}

\section{Correlation of Loss Differences (CLD)}
\label{sec:cld}
We now define the core quantity used in our method, the correlation of per-sample loss trajectories with the validation set.
\paragraph{Loss Trajectories} For every sample $\vec{z}$ we record the per‑iteration change in loss during the model training run, and collect these
$T$ increments in a \emph{loss‑difference trajectory}
\begin{equation}
\vec\Delta(\vec z)\;=\;
\bigl(\ell(\thetaS^{1},\vec z)-\ell(\thetaS^{0},\vec z),\;
      \dots,\;
      \ell(\thetaS^{T},\vec z)-\ell(\thetaS^{T-1},\vec z)\bigr)\in\mathbb R^{T}.
\end{equation}
The \emph{validation‑average trajectory} is defined similarly as,
\begin{equation}
    \DeltaV\;=\;
\left(\dfrac{1}{Q}\sum_{j=1}^Q\left[\ell\left(\thetaS^{1},\vec q_j\right)-\ell\left(\thetaS^{0},\vec q_j\right)\right],\;
      \dots,\;
      \dfrac{1}{Q}\sum_{j=1}^Q\left[\ell\left(\thetaS^{T},\vec q_j\right)-\ell\left(\thetaS^{T-1},\vec q_j\right)\right]\right)\in\mathbb R^{T}.    
\end{equation}

\begin{definition}[Correlation of Loss Differences ($\CLD$)]
\label{def:cld}
The $\CLD$ score of a training sample $\vec{z}_m \in \rmS$ is the correlation between the sample's loss trajectory $\vec{\Delta}_m$ and the average loss trajectory of the validation set $\rmV$:
\begin{equation}
\CLD(\vec{z}_m) \coloneqq \rho\left( \vec{\Delta}_m, \DeltaV \right),
\end{equation}
\end{definition}
where $\rho$ is the correlation metric. 
In our experiments, we employ Pearson correlation~\citep{pearsoncorr} due to its scale invariance and computational simplicity.
Intuitively, the $\CLD$ metric quantifies how well a training sample's loss dynamics align with the aggregate loss trajectory of the validation set, which serves as a proxy for generalization behavior. 
Samples with higher $\CLD$ values are deemed more influential and can thus be prioritized for coreset construction. We investigate this hypothesis both theoretically and empirically in the following sections.

While \cref{def:cld} defines $\CLD$ using a global validation trajectory, our practical implementation uses class-specific averages to ensure semantic alignment; see \cref{sec:CLD_coreset_selection}.

\subsection{Coreset Selection Procedure}
\label{sec:CLD_coreset_selection}
We first train a source model $\thetaS$ on the full dataset $\rmS$, recording per-epoch losses for all training and validation samples. 
\rev{This logging piggybacks on the standard training loop; no additional passes over the training set are required. Consequently, all training samples are scored at the chosen checkpoints.}
We then compute $\vec{\Delta}_m$ and the \textit{class-specific average validation trajectory} $\vec{\Delta}'_{\rmV, c}$ for each class $c$.
\begin{align}
\vec{\Delta}'_{\rmV,c}
\coloneqq &
\left(
\frac{1}{\lvert \rmV_c\rvert}\sum_{\vec{q}_j\in \rmV_c}\left[\ell(\thetaS^{1},\vec q_j)-\ell(\thetaS^{0},\vec q_j)\right],
\,\ldots,\,
\frac{1}{\lvert \rmV_c\rvert}\sum_{\vec{q}_j\in \rmV_c}\left[\ell(\thetaS^{T},\vec q_j)-\ell(\thetaS^{T-1},\vec q_j)\right]
\right)
\in \mathbb{R}^{T},
\\
\text{where} \quad \rmV_c & \coloneqq \{\, \vec{q}_j \in \rmV : y_{q_j}=c \,\}.
\end{align}

In accordance with standard practice for coreset selection, we score samples within each class independently. 
For a training sample $\vec z_m\in\rmS$ with label $y_m=c$, its $\CLD$ score is the Pearson correlation between its trajectory and the corresponding class-specific validation trajectory:
\begin{equation}
\CLD(\vec z_m) \;\coloneqq\; \rho \bigl(\,\vec\Delta(\vec z_m),\, \vec{\Delta}'_{\rmV, c}\bigr) \quad \forall \vec{z}_m : y_m = c.
\end{equation}
After computing all scores, we select the top\mbox{-}$k_c$ training samples in each class $c$ to form a class-balanced coreset
\begin{equation}
\rmC \;=\; \bigcup_{c=1}^{C} \rmC_c,
\qquad
\rmC_c \;=\; \operatorname{Top}\!-\!k_c\bigl(\{\vec z_m \in \rmS : y_m=c\},\, \CLD\bigr),
\end{equation}
with total size fixed in advance as $k=\sum_{c=1}^C k_c$. 
This per-class selection strategy ensures both label balance and stability of dynamics within semantic categories, improving the robustness and interpretability of the resulting coreset.

A key advantage is architectural flexibility.
$\CLD$ scores can be computed using a proxy model and transferred to larger or deeper architectures. 
The full coreset selection procedure is summarized in \cref{appendix:algorithm}.

\section{Theoretical Analysis of \texttt{CLD}-Coresets}
\label{sec:theory}
We now provide a theoretical justification for selecting high-$\CLD$ samples, showing that such coresets yield convergence guarantees close to full-data training under the following assumptions.

\begin{assumption}[$L$-smoothness]
\label{ass:L_smooth}
For every fixed sample $\vec z$, define $f(\theta)\coloneqq \ell(\theta,\vec z)$. Then $f$ is $L$-smooth in $\theta$, i.e.,
\begin{equation}
f(y)\;\le\;f(x)\;+\;\langle\nabla f(x),\,y-x\rangle \;+\;\frac{L}{2}\,\norm{y-x}^{2}, \qquad \forall x,y\in\mathbb{R}^p.
\end{equation}
Consequently, both the population risk $R_{\rmD}(\cdot)$ and the empirical risk $\hat{R}(\cdot)$ are also $L$-smooth. 
\end{assumption}

\medskip
\begin{assumption}[Bounded Gradient Norm]
\label{ass:bounded_grads}
There exists $B>0$ such that, for all $\theta$ and every training sample $\vec z_m \in \rmS$ and validation sample $\vec q_j \in \rmV$,
\(\norm{\nabla_\theta \ell(\theta,\vec z_m)} \le B,\) and \(\norm{\nabla_\theta \ell(\theta,\vec q_j)} \le B.\)
Consequently, for any index set $\mathcal C \subseteq \{1,\ldots,N\}$,
\begin{equation}
\norm{\frac{1}{|\mathcal C|}\sum_{m\in\mathcal C}\nabla_\theta \ell(\theta,\vec z_m)} \le B,
\qquad
\norm{\GradV(\theta)} \le B,
\end{equation}
where $\displaystyle \GradV(\theta) \coloneqq \frac{1}{Q}\sum_{j=1}^{Q} \nabla_\theta \ell(\theta,\vec q_j)$ is the validation-average gradient.
\end{assumption}

\medskip
\begin{assumption}[Validation Representativeness]
\label{ass:validation_representativeness}
With probability at least $1-\delta'$, the validation gradient $\GradV(\cdot)$ at every iterate $\theta$ encountered during training satisfies
\begin{equation}
\norm{ \GradV(\theta) - \nabla_\theta R_{\rmD}(\theta) } \leq \delta,
\quad \text{where} \quad \delta = \mathcal{O}(B/\sqrt{Q}).
\end{equation}
\end{assumption}

These assumptions mirror those commonly adopted in analyses of training dynamics~\citep{tracin_pruthi2020, datamodels_ilyas2022, 2020_datarepresentativeness_validassump, 2022_datarepresentivity_validassump, 2022_deepactive_validassump}; we merely state \cref{ass:validation_representativeness} explicitly for transparency, even though it is typically invoked implicitly and is widely regarded as reasonable. 

\begin{remark}[Per-Class Validation Trajectories]
In our implementation, we compute $\CLD$ using class-specific validation trajectories $\vec{\Delta}'_{\rmV, c}$ rather than a single global trajectory. 
This refinement aligns with standard coreset practices that enforce class balance, which reduces the variance of the correlation estimates by matching each training sample with the validation subset most relevant to its semantic label.
The theoretical guarantees stated here continue to hold, as long as the per-class validation subsets satisfy the representativeness condition in \cref{ass:validation_representativeness} when interpreted class-conditionally.
\end{remark}

\begin{theorem}[Convergence with $\mathtt{CLD}$-Coresets]
\label{thm:cld_coreset_convergence}
Consider a gradient descent algorithm trained over $T$ iterations on a training dataset $\rmS$ with a held-out validation set $\rmV$. 
Given \cref{ass:L_smooth,ass:bounded_grads,ass:validation_representativeness}, let the learning rate satisfy $0<\eta\le 1/L$. 
Let $\thetaC^t$ denote the parameters at iteration $t$ when training on a coreset $\rmC$.

Then, training on the coreset $\rmC$, consisting of samples with high $\CLD$ scores:
\begin{equation}
\rmC \;=\; \bigl\{\, \vec z_m \in \rmS \;:\; \CLD(\vec z_m) \ge 1-\epsilon \,\bigr\},
\qquad \epsilon>0,\;\; \epsilon \to 0,
\end{equation}
guarantees that
\begin{equation}
\min_{0 \le t < T} \; 
\norm{ \nabla_\theta R_{\rmD}( \thetaC^t ) }^2
\;\le\;
\frac{2 \bigl[ R_{\rmD}(\thetaC^0) - R_{\inf} \bigr]}{ \eta T } + L\eta B^2 + \left( B \sqrt{2\kappa} + \delta \right)^2,
\end{equation}
where $R_{\inf} := \inf_{\theta} R_{\rmD}(\theta)$, and $\kappa \ge 0$ is an \emph{alignment-gap} term that quantifies the mismatch between the average coreset gradient and the validation proxy gradient $\GradV(\theta)$ along training (see \cref{appendix:Theoretical_framework} for the formal definition). Intuitively, $\kappa$ decreases as the selected samples’ $\CLD$ scores increase and as the coreset grows, and $\kappa \to 0$ as $\epsilon \to 0$.
\end{theorem}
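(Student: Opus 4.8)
The plan is to run the standard nonconvex descent analysis directly on the population risk $R_{\rmD}$, treating the coreset gradient as a biased surrogate for $\nabla_\theta R_{\rmD}$ and controlling the bias through the validation proxy. Write the coreset gradient at iterate $\thetaC^t$ as $g^t \coloneqq \frac{1}{|\rmC|}\sum_{m\in\rmC}\nabla_\theta\ell(\thetaC^t,\vec z_m)$, so that the update reads $\thetaC^{t+1}=\thetaC^t-\eta g^t$. Applying to $R_{\rmD}$ the descent inequality implied by $L$-smoothness (\cref{ass:L_smooth}) gives
\begin{equation}
R_{\rmD}(\thetaC^{t+1}) \;\le\; R_{\rmD}(\thetaC^t) \;-\; \eta\langle \nabla_\theta R_{\rmD}(\thetaC^t),\,g^t\rangle \;+\; \frac{L\eta^2}{2}\norm{g^t}^2 .
\end{equation}
The first move is to rewrite the inner product by adding and subtracting $\nabla_\theta R_{\rmD}(\thetaC^t)$, isolating the error vector $e^t \coloneqq g^t - \nabla_\theta R_{\rmD}(\thetaC^t)$ that measures how far the descent direction strays from the true population direction.

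Next I would split $e^t$ into the two sources promised by the bound: the gap between the coreset gradient and the validation proxy, $g^t - \GradV(\thetaC^t)$, and the gap between the validation proxy and the population gradient, $\GradV(\thetaC^t) - \nabla_\theta R_{\rmD}(\thetaC^t)$. The latter is bounded by $\delta$ directly from \cref{ass:validation_representativeness}, while the former is exactly the alignment gap; invoking the formal definition of $\kappa$ from \cref{appendix:Theoretical_framework} yields $\norm{g^t - \GradV(\thetaC^t)} \le B\sqrt{2\kappa}$, so the triangle inequality gives $\norm{e^t}\le B\sqrt{2\kappa}+\delta$. Substituting $\langle \nabla_\theta R_{\rmD},g^t\rangle = \norm{\nabla_\theta R_{\rmD}}^2 + \langle \nabla_\theta R_{\rmD},e^t\rangle$, applying Young's inequality $\langle \nabla_\theta R_{\rmD},e^t\rangle \ge -\tfrac12\norm{\nabla_\theta R_{\rmD}}^2 - \tfrac12\norm{e^t}^2$, bounding the curvature term via $\norm{g^t}\le B$ from \cref{ass:bounded_grads}, and using the stepsize regime $\eta\le 1/L$, I obtain the per-step inequality
\begin{equation}
\frac{\eta}{2}\norm{\nabla_\theta R_{\rmD}(\thetaC^t)}^2 \;\le\; R_{\rmD}(\thetaC^t)-R_{\rmD}(\thetaC^{t+1}) \;+\; \frac{\eta}{2}\norm{e^t}^2 \;+\; \frac{L\eta^2B^2}{2}.
\end{equation}
Summing this telescoping inequality over $t=0,\dots,T-1$, using $R_{\rmD}(\thetaC^T)\ge R_{\inf}$, dividing by $\eta T/2$, and replacing the running average by its minimum produces exactly the claimed bound: the telescoped endpoints give $2[R_{\rmD}(\thetaC^0)-R_{\inf}]/(\eta T)$, the curvature term aggregates to $L\eta B^2$, and $\tfrac1T\sum_t\norm{e^t}^2 \le (B\sqrt{2\kappa}+\delta)^2$ supplies the residual term.

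The main obstacle is the step that converts the $\CLD$ score into the gradient-alignment bound $\norm{g^t-\GradV(\thetaC^t)}\le B\sqrt{2\kappa}$, because $\CLD$ is defined on loss-\emph{difference} trajectories whereas the descent argument needs a statement about \emph{gradients}. The bridge is the first-order relation $\ell(\thetaS^{t+1},\vec z)-\ell(\thetaS^{t},\vec z)\approx \langle\nabla_\theta\ell(\thetaS^t,\vec z),\,\thetaS^{t+1}-\thetaS^t\rangle$, which shows that each coordinate of a loss-difference trajectory is, to first order in $\eta$, the projection of that sample's gradient onto the shared update direction. High correlation between $\vec{\Delta}(\vec z)$ and $\DeltaV$ therefore forces the selected gradients to be directionally consistent with the validation gradient, and as $\epsilon\to 0$ (so $\CLD\to 1$ for every retained sample) this consistency drives the averaged mismatch $\kappa\to 0$. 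Making this rigorous requires the appendix definition of $\kappa$ together with control of the linearization remainder, which is exactly what $L$-smoothness provides; once that link is established, everything downstream is the routine telescoping argument sketched above.
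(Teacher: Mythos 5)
Your proposal is correct and follows essentially the same route as the paper: the identical $L$-smooth descent inequality on $R_{\rmD}$, the same decomposition of the error vector into the coreset-vs-validation gap (bounded by $B\sqrt{2\kappa}$) plus the validation-vs-population gap (bounded by $\delta$), the same Cauchy--Schwarz/Young step with equal weights, and the same telescoping to the minimum-gradient bound. Your closing discussion of the Taylor-expansion bridge from loss-difference correlations to gradient alignment is precisely the content of the paper's supporting Lemmas 1--3 in \cref{appendix:Theoretical_framework}.
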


\begin{proof}[Proof Sketch]
The proof is based on the observation that the change in population risk across training steps can be approximated by the inner product between the gradient of the risk and the update direction.
A high $\CLD$ score implies a strong correlation between a sample's loss-change trajectory and that of the validation set, which in turn suggests consistent alignment between the sample's gradient and the validation gradient.

Due to the $L$-smoothness of the loss function, this alignment persists even when training on the coreset, allowing us to bound the cosine similarity between the average coreset gradient and the true risk gradient.
This leads to a controlled approximation error in the optimization update.
The total error term $(B\sqrt{2\kappa} + \delta)^2$ is governed by three factors: the $\CLD$ scores of selected samples, the deviation between the coreset and full-data parameter trajectories, and the quality of the validation set as a proxy for the true distribution.
Full details and supporting lemmas are provided in \cref{appendix:Theoretical_framework}.
\end{proof}

\paragraph{Interpreting the Theory}
Under the stated assumptions, training on the full dataset $\rmS$ yields the convergence bound
\begin{equation}
\min_{0 \le t < T} \; \norm{ \nabla_\theta R_{\rmD}(\thetaS^t) }^2
\le \frac{2 \left[ R_{\rmD}(\thetaS^0) - R_{\inf} \right]}{\eta T} + L\eta B^2.
\end{equation}
\cref{thm:cld_coreset_convergence} shows that training on a high-$\CLD$ coreset achieves a similar bound, up to an additive deviation term $(B \sqrt{2\kappa} + \delta)^2$. This deviation captures the alignment of the coreset with the validation dynamics ($\kappa$), and the representativeness of the validation set ($\delta$)\rev{, with $\delta = O(B/\sqrt{Q})$ decreasing in the validation size $Q$, and $\kappa$ decreasing as the CLD selection is tightened (smaller $\epsilon$) or as the coreset size $k$ increases (see \cref{rem:alignment_transfer} and \cref{rem:coreset_size_kappa_influence} in \cref{appendix:Theoretical_framework})}.

The alignment term $\kappa$ reflects both the informativeness of selected samples and the size of the coreset. Higher $\CLD$ scores indicate stronger agreement with validation loss trajectories and thus tighter gradient alignment, reducing $\kappa$. Additionally, larger coresets more faithfully approximate full-data training dynamics, also lowering $\kappa$. \rev{ \;We note that making $k$ extremely small can increase trajectory deviation $\|\delta_t\|$, which is reflected inside $\kappa$ (Remark~\ref{rem:coreset_size_kappa_influence}).} When the coreset size is fixed, the theorem implies that selecting higher-$\CLD$ samples improves convergence by minimizing this deviation. Thus, $\CLD$-based selection emerges as a principled and necessary criterion for preserving the optimization behavior of full-data training.

\begin{corollary}[Necessity of High $\CLD$ for Good Coresets]
\label{corollary:cld_coreset_convergence}
Under the hypotheses of \cref{thm:cld_coreset_convergence}, achieving convergence rates comparable to full-data training \emph{necessarily} requires that the selected samples exhibit near-maximal $\CLD$ scores and that the validation set provides a reliable proxy for the true risk gradient. Fulfilling these necessary conditions ensures the optimization dynamics induced by the coreset remain well-aligned with those of full-data training.
\end{corollary}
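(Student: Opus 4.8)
The plan is to establish this corollary as the contrapositive companion to \cref{thm:cld_coreset_convergence}, showing that the additive deviation term $(B\sqrt{2\kappa}+\delta)^2$ in that bound is not an artifact of loose estimation but is order-tight, so that no selection rule leaving either $\kappa$ or $\delta$ bounded away from zero can match the full-data rate. Throughout I write $g_{\rmC}(\theta)\coloneqq \frac{1}{|\rmC|}\sum_{m\in\rmC}\nabla_\theta\ell(\theta,\vec z_m)$ for the average coreset gradient and $e_t\coloneqq g_{\rmC}(\thetaC^t)-\nabla_\theta R_{\rmD}(\thetaC^t)$ for the true-risk gradient error along the coreset trajectory. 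The decomposition $e_t=\bigl(g_{\rmC}(\thetaC^t)-\GradV(\thetaC^t)\bigr)+\bigl(\GradV(\thetaC^t)-\nabla_\theta R_{\rmD}(\thetaC^t)\bigr)$ separates the two error sources: the first summand is the coreset-versus-validation alignment gap, controlled from above by $B\sqrt{2\kappa}$ via the definition of $\kappa$ in \cref{appendix:Theoretical_framework}, and the second is the validation-versus-truth gap, controlled by $\delta$ via \cref{ass:validation_representativeness}. The first step is to record that, by construction, $\kappa$ is monotone in the selected $\CLD$ scores with $\kappa\to 0$ iff $\epsilon\to 0$; hence ``near-maximal $\CLD$'' is exactly the condition $\kappa\to 0$, and the two clauses of the corollary collapse into the single statement that comparable convergence forces both $\kappa\to 0$ and $\delta\to 0$.

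The core of the argument is a matching lower bound obtained at the approximate stationary points toward which coreset gradient descent is driven. Because the coreset update is $\thetaC^{t+1}=\thetaC^t-\eta\,g_{\rmC}(\thetaC^t)$, the same telescoping that yields the upper bound forces $\min_{t}\norm{g_{\rmC}(\thetaC^t)}\to 0$ as $T\to\infty$; at any such near-stationary iterate $\tau$ we have $\norm{g_{\rmC}(\thetaC^\tau)}\le\xi$ for arbitrarily small $\xi$, so that by the reverse triangle inequality
\begin{equation}
\norm{\nabla_\theta R_{\rmD}(\thetaC^\tau)}\;=\;\norm{g_{\rmC}(\thetaC^\tau)-e_\tau}\;\ge\;\norm{e_\tau}-\xi.
\end{equation}
Thus the residual true-risk gradient that coreset training can never eliminate is precisely the gradient error $\norm{e_\tau}$, and $\min_{0\le t<T}\norm{\nabla_\theta R_{\rmD}(\thetaC^t)}^2$ is bounded below by $(\norm{e_\tau}-\xi)^2$. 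It then remains to lower-bound $\norm{e_\tau}$ by $\Omega(B\sqrt{2\kappa}+\delta)$, which I would do under a mild non-cancellation hypothesis asserting that the alignment-gap vector and the validation-gap vector are not anti-aligned at the critical iterate; this rules out the degenerate case where two individually large errors cancel, and under it $\norm{e_\tau}\ge c\,(B\sqrt{2\kappa}+\delta)$ for an absolute constant $c>0$. Combining the two inequalities, if coreset training attains the full-data bound up to an extra slack $\varepsilon$, then $c^2(B\sqrt{2\kappa}+\delta)^2\lesssim\varepsilon$, forcing $\kappa=\mathcal{O}(\varepsilon/B^2)$ and $\delta=\mathcal{O}(\sqrt{\varepsilon})$; by the monotone equivalence this is exactly near-maximal $\CLD$ together with a reliable validation proxy, establishing necessity. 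The closing clause of the corollary — that fulfilling these conditions suffices for alignment — is then immediate from \cref{thm:cld_coreset_convergence}, since $\kappa\to 0$ and $\delta\to 0$ collapse the deviation term and recover the full-data rate.

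I expect the main obstacle to be the lower bound on $\norm{e_\tau}$, for two reasons. First, $\kappa$ enters \cref{thm:cld_coreset_convergence} only as an upper bound on the alignment gap, so I must return to its formal definition in \cref{appendix:Theoretical_framework} and verify that it is \emph{attained} (order-tight) at the worst iterate rather than merely dominating it; if $\kappa$ is defined as a supremum of the realized gaps this is automatic, but the constant factors must be tracked carefully. Second, the non-cancellation hypothesis is genuinely indispensable: without it one can engineer instances where the coreset-validation error and the validation-truth error cancel, making $\norm{e_\tau}$ small even for large $\kappa$ and $\delta$, so the necessity claim holds only generically or under such a transversality condition, which I would state explicitly as the precise sense in which the two conditions are ``necessary.'' Handling this cancellation cleanly, and making ``comparable convergence rate'' rigorous as the statement that the additive deviation term is $o(1)$ relative to the full-data residual, are the two places where the argument demands the most care.
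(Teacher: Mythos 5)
Your proposal sets out to prove something strictly stronger than what the paper establishes. The paper gives no standalone proof of \cref{corollary:cld_coreset_convergence}: it is read directly off \cref{thm:cld_coreset_convergence} as a statement about the \emph{error bound}, not about realized convergence. In the paper's logic, the excess term $(B\sqrt{2\kappa}+\delta)^2$ vanishes only if $\kappa\to 0$ and $\delta\to 0$; by the construction of $\kappa$ in \cref{lem:alignment_stability} (namely $\kappa=\epsilon'_t+\tfrac{4L}{B}\norm{\delta_t}+\tfrac{3L^2}{B^2}\norm{\delta_t}^2$, with $\epsilon'_t\to 0$ exactly when the selection threshold $\epsilon\to 0$), driving $\kappa\to 0$ forces near-maximal $\CLD$ scores, and driving $\delta\to 0$ is precisely \cref{ass:validation_representativeness}. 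That is the whole argument --- ``necessary to minimize the convergence error-bound,'' as the paper itself phrases it in the introduction. Your reading of the corollary as a genuine converse (comparable \emph{actual} convergence $\Rightarrow$ small $\kappa$ and $\delta$) is more ambitious than anything the paper attempts, and it is worth flagging that distinction; but the proof you sketch does not close.

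There are two concrete gaps. First, the step bounding $\min_{0\le t<T}\norm{\nabla_\theta R_\rmD(\thetaC^t)}^2$ from below by $(\norm{e_\tau}-\xi)^2$ is invalid: your reverse-triangle inequality holds only at near-stationary iterates $\tau$ of the \emph{coreset} dynamics, whereas the minimum ranges over all iterates. At an iterate where $\norm{\gammatc}$ is not small, the true-risk gradient $\nabla_\theta R_\rmD(\thetaC^t)=\gammatc-e_t$ can still be small through cancellation between $\gammatc$ and $e_t$, so a lower bound established only at coreset-stationary iterates does not lower-bound the minimum over all $t$. Second, your argument needs the realized gap $\norm{\gammatc-\GradV(\thetaC^t)}$ to be of order $B\sqrt{2\kappa}$, i.e., the bound of \cref{lem:subset_grad_approx} to be order-tight. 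But in the paper $\kappa$ is a smoothness-based \emph{upper-bound} construct derived from the selection threshold and the trajectory deviation, not a supremum of realized gaps: samples with low $\CLD$ (large $\epsilon$, hence large $\kappa$) can nevertheless have gradients nearly parallel to $\GradV$, so large $\kappa$ does not force a large realized gap, and the contrapositive you need fails without further assumptions. Your non-cancellation hypothesis addresses only the second issue, and it is not among the hypotheses of \cref{thm:cld_coreset_convergence}, so even granting it you would prove a conditional variant, not the corollary as stated. The statement is defensible only in the paper's weaker, bound-level sense; if you want the true converse, you must either redefine $\kappa$ as the realized worst-case alignment gap or exhibit the lower bound at every iterate, neither of which your sketch provides.
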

\section{Experimental Evaluation}
\label{sec:experiments}
We evaluate $\CLD$ empirically, focusing on its effectiveness and transferability.

\paragraph{Experimental Setup}
We benchmark on CIFAR-100~\citep{cifar} and ImageNet-1k~\citep{imagenet}. 
CIFAR-100 has $50{,}000$ training and $10{,}000$ test images across $100$ classes; ImageNet-1k has $\sim\!1.28$M training images and a $50{,}000$-image validation set across $1{,}000$ classes. 
For each random seed, we form a \emph{classwise} held-out validation split from the training data ($10\%$ for CIFAR-100; $1\%$ for ImageNet-1k), ensuring equal per-class representation; a different split is generated per seed, and the resulting train/validation partitions are reused across all baselines for fairness. 
Unless otherwise specified, ResNet-18~\citep{resnet} is the default architecture for $\CLD$ scoring and for training on selected coresets. 
Coresets are constructed \emph{per seed} in a class-balanced manner by selecting, within each class, the top-ranked samples under $\CLD$. 
Subset sizes range from $0.2\%$–$100\%$ on CIFAR-100 and $0.1\%$–$100\%$ on ImageNet-1k. 
We report the mean and standard deviation over $5$ independent seeds.

\begin{figure}[t]
  \centering
  \includegraphics[width=\linewidth]{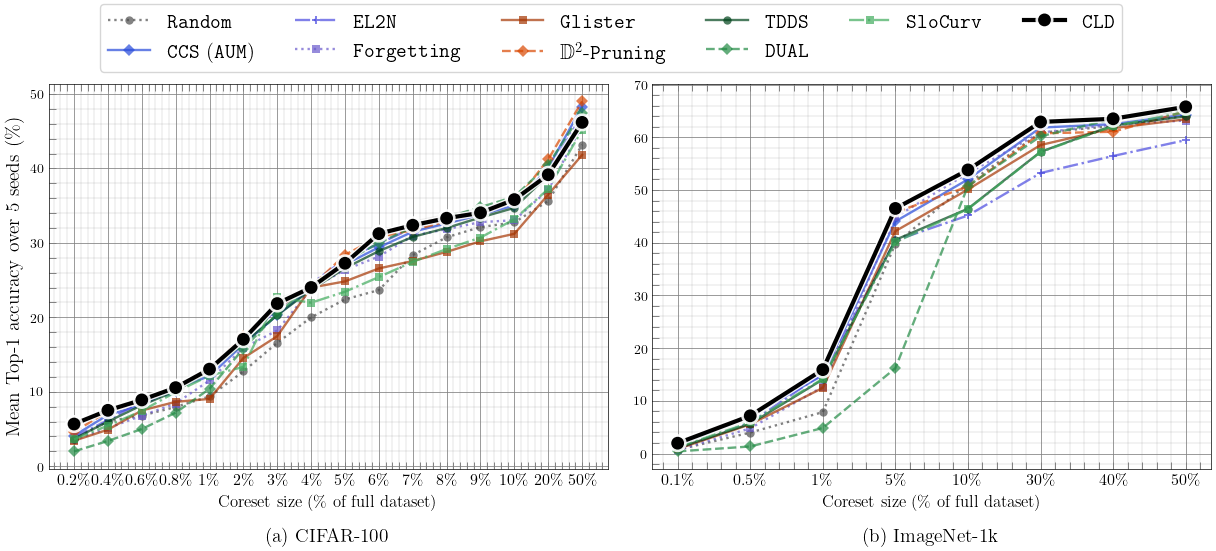}
  \vspace{-5mm}
  \caption{Test accuracy (mean over five seeds) for representative coreset selection methods on CIFAR-100 and ImageNet-1k with ResNet-18. $\CLD$ consistently matches or outperforms baselines across dataset sizes. \textit{(X-axis uses a non-uniform coreset-size grid.)} Color map: blues (score-based), oranges (optimization-based), greens (training-property-based), black ($\CLD$). \rev{For reference, the \emph{full-data mean top-1 accuracy over five seeds} is 70.95 on CIFAR-100 and 69.91 on ImageNet-1k.} Complete numerical results are available in \cref{appendix:coreset_gen}.}
  \label{fig:coreset_baselines}
\end{figure}

\noindent\textit{Baselines.} We compare against representative state-of-the-art methods from three families: 
\textit{score-based} ($\mathtt{Forgetting}$~\citep{forgetting_toneva2018}, $\mathtt{EL2N}$~\citep{E2LN_grand_paul2021}, and $\mathtt{CCS}$~\citep{zheng_ccs} using $\mathtt{AUM}$~\citep{pleiss2020_aum}), 
\textit{optimization-based} ($\mathtt{Glister}$~\citep{glister_killamsetty2021}, $\mathbb{D}^2$-$\mathtt{Pruning}$~\citep{modelpred_d2_maharana2023}), 
and \textit{training-property–based} ($\mathtt{TDDS}$~\citep{zhang2024_tdds}, $\mathtt{SloCurv}$~\citep{slocurves_garg2023}, $\mathtt{DUAL}$~\citep{cho2025_DUAL}), plus $\mathtt{Random}$. 
We use implementations from the DeepCore~\citep{deepcore} library when available, and otherwise rely on official GitHub repositories. 
All methods are run under a consistent training setup (40 pretraining epochs where required), without any additional fine-tuning or regularization.
To ensure fairness, all baselines, including ours, select and train coresets using the same backbone (ResNet-18).

\noindent\textit{Transferability protocol.} On ImageNet-1k we additionally test cross-architecture transfer. 
We compute \textbf{Transfer} coresets using ResNet-18 and apply them to ResNet-34, ResNet-50, VGG-19, and DenseNet-121. 
We compare this to an \textbf{Oracle} setting where each target model computes its own $\CLD$ scores and coresets from its dynamics.

\begin{figure}[t]
  \centering
  \includegraphics[width=\linewidth]{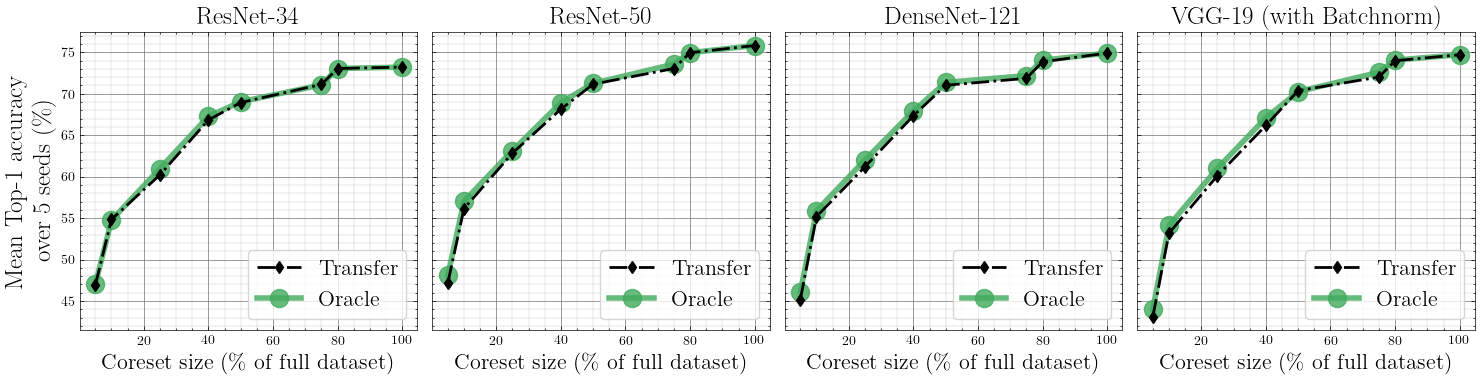}
  \vspace{-5mm}
  \caption{\textit{Transferability of $\CLD$-based coresets across architectures on ImageNet-1k.}
  Each subplot reports test accuracy (mean over five runs) for target models trained on coresets of varying sizes.
  \textbf{Transfer} coresets (dashed black, diamonds) are selected using ResNet-18; \textbf{Oracle} coresets (solid green, circles) are computed by the target itself. 
  Transferred coresets are within $1\%$ of oracle coresets across all targets and sizes.}
  \label{fig:coreset_transferability}
  \vspace{-4mm}
\end{figure}

\paragraph{Results and Observations}
\Cref{fig:coreset_baselines} summarizes performance on CIFAR-100 and ImageNet-1k compared to other methods. 
Across both datasets, $\CLD$ consistently matches or outperforms the strongest baselines from each family. 
The most competitive alternatives are $\mathbb{D}^2$-$\mathtt{Pruning}$ and $\mathtt{DUAL}$, though both degrade at very small coreset sizes. 
On CIFAR-100, $\mathtt{Glister},\; \mathtt{DUAL}, \; \mathtt{CCS}$ ($\mathtt{AUM}$) can slightly edge out $\CLD$ at larger subsets (by $<\!1\%$), whereas $\CLD$ consistently leads on ImageNet-1k. 
At large subset sizes, $\CLD$ converges to full-data performance with negligible deviation from the strongest baseline. 
Full numerical tables (and additional methods beyond those plotted) are deferred to \cref{appendix:coreset_gen} to avoid clutter.
A complementary analysis of the subset fraction required to match full-data accuracy is presented in \cref{app:fullsubset}.

\noindent For cross-architecture transfer on ImageNet-1k, \cref{fig:coreset_transferability} shows that \textbf{Transfer} coresets selected with ResNet-18 closely track \textbf{Oracle} coresets computed by the target model itself. 
The gap remains below $1\%$ across ResNet-34, ResNet-50, VGG-19, and DenseNet-121 and across coreset sizes, including transfers across architecture families (ResNet $\rightarrow$ DenseNet/VGG). 

\paragraph{Takeaways}
$\CLD$ achieves near-optimal accuracy across subset sizes while incurring the lowest compute and storage overhead among strong baselines, and its coresets transfer effectively from lightweight proxies to larger targets. 
Together, these properties make $\CLD$ a scalable, reliable choice for coreset selection in both single-architecture and cross-architecture regimes.

\section{Computational and Storage Efficiency}
\label{sec:coreset_overheads}
Beyond accuracy, a practical coreset method should keep both \emph{compute} and \emph{storage} costs low. 
\texttt{CLD} does so by relying only on per-sample \emph{loss scalars} that standard training already produces; no per-sample gradients, Hessians, or pairwise similarities are required. 
\rev{Practically, this entails scoring the full training set at a small number of checkpoints, but the scores are exactly the per-sample losses computed during training, i.e., no extra inference sweeps.}
The only extra work is a forward-only sweep over a small held-out query set each proxy epoch ($Q \ll N$) to record query losses. 
In contrast, gradient/adversarial methods incur extra backward passes, while similarity/nearest-neighbor methods require feature extraction and large feature caches. 
We quantify the \emph{end-to-end} compute cost (selection \emph{plus} training on the selected coreset) and the storage overhead in detail in \cref{appendix:details_overheads_coresets} and summarize the symbolic complexity below.

\begin{table}[ht]
\centering
\caption{End-to-end compute and \emph{storage overhead} for $\CLD$ and baselines. Storage column lists \emph{method-specific extras during selection only}. Notation in \cref{sec:coreset_overheads}.}
\label{tab:coresets_overheads}
\vspace{0.35em}
\renewcommand{\arraystretch}{1.3}
\setlength{\tabcolsep}{3pt}
\begin{tabular}{l p{0.56\linewidth} p{0.28\linewidth}}
\toprule
\textbf{Method} &
\textbf{\shortstack{Computational cost\\(selection $+$ coreset training)}} &
\textbf{\shortstack{Storage overhead \\(method-specific extras)}} \\
\midrule
$\mathtt{Herding}$ 
& $3NT_{\text{proxy}}f \;+\; Nf \;+\; \mathcal{O}(N d k) \;+\; 3kT\flarge$
& $\mathcal{O}(N d)$ \;[features]\; (+ optional $\mathcal{O}(k^2)$ Gram) \\
\hdashline[0.8pt/2pt]
$\mathtt{Forgetting}$
& $3N T_{\text{early}}\flarge \;+\; 3k T_{\text{late}}\flarge$
& $\mathcal{O}(N)$ \; [per-sample counter] \\
\hdashline[0.8pt/2pt]
$\mathtt{AUM}$
& $3NT_{\text{proxy}}f \;+\; 3kT\flarge$
& $\mathcal{O}(N)$ \;[running sums] \\
\hdashline[0.8pt/2pt]
$\mathtt{Cal}$
& $3kT_{\text{proxy}}f \;+\; U f \;+\; \mathcal{O}(U\kappa d) \;+\; 3kT\flarge$
& $\mathcal{O}(U d)$ \;[proxy features] \\
\hdashline[0.8pt/2pt]
$\mathtt{GraNd}$
& $3N T_{\text{early}} R \flarge \;+\; 3N T_{\text{early}} R \flarge \;+\; 3kT_{\text{late}}\flarge$
& $\mathcal{O}(N)$ \;[scores/logs] \\
\hdashline[0.8pt/2pt]
$\mathtt{EL2N}$
& $3N T_{\text{early}} R \flarge \;+\; 3kT_{\text{late}}\flarge$
& $\mathcal{O}(N)$ \;[scores] \\
\hdashline[0.8pt/2pt]
$\mathtt{Moderate}$
& $3NT_{\text{proxy}}f \;+\; Nf \;+\; \mathcal{O}(N d + N\log N) \;+\; 3kT\flarge$
& $\mathcal{O}(N d)$ \;[proxy features] \\
\hdashline[0.8pt/2pt]
$\mathbb{D}^2 \texttt{-Pruning}$
& $3NT_{\text{proxy}}f \;+\; Nf \;+\; \mathcal{O}(N\kappa d) \;+\; \mathcal{O}(H N \kappa) \;+\; 3kT\flarge$
& $\mathcal{O}(N d) + \mathcal{O}(N\kappa)$ \;[features + $k$NN graph] \\
\hdashline[0.8pt/2pt]
$\mathtt{CRAIG}$
& $3NT_{\text{proxy}}f \;+\; \mathcal{O} \big(A\,(Nk + N\log(1/\epsilon))\,D_{\text{eff}}\big) \;+\; 3kT\flarge$
& $\mathcal{O} \big(N(F{+}c)\big)$ \;[per-anchor embeddings] \\
\hdashline[0.8pt/2pt]
$\mathtt{Glister}$
& $3kT\flarge \;+\; \mathcal{O} \big((kQ + N\log(1/\epsilon))\,\flarge\,T/\gamma\big)$
& $\mathcal{O}(Q)$ \;[validation cache] \\
\hdashline[0.8pt/2pt]
$\mathtt{GraphCut}$
& $3NT_{\text{proxy}}f \;+\; Nf \;+\; \mathcal{O}(N^{2}k) \;+\; 3kT\flarge$
& $\mathcal{O}(N^{2})$ \;[pairwise similarities] \\
\hdashline[0.8pt/2pt]
$\mathtt{SloCurv}$
& $3NT_{\text{proxy}}f \;+\; 3N(R{+}1)f \;+\; 3kT\flarge$
& $\mathcal{O}(N) + \mathcal{O}(R d)$ \;[running stats + probe dirs] \\
\hdashline[0.8pt/2pt]
$\mathtt{TDDS}$
& $3NT_{\text{proxy}}f \;+\; 3NT_{\text{proxy}}f \;+\; 3kT\flarge$
& $\mathcal{O}(N J)$ \;[windowed logs] \\
\hdashline[0.8pt/2pt]
$\mathtt{Dyn\mbox{-}Unc}$
& $3NT_{\text{proxy}}f \;+\; 3kT\flarge$
& $\mathcal{O}(N J)$ \;[windowed logs] \\
\hdashline[0.8pt/2pt]
$\mathtt{DUAL}$
& $3N T_{\text{proxy,early}} f \;+\; 3kT\flarge$
& $\mathcal{O}(N J)$ \;[windowed logs] \\
\specialrule{1.1pt}{2pt}{2pt} 
\textbf{$\CLDbold$ (Ours)}
& $\mathbf{3NT_{\text{proxy}}f \;+\; Q T_{\text{proxy}} f \;+\; 3kT\flarge}$
& $\mathbf{\mathcal{O} \big((N{+}Q)T_{\text{proxy}}\big)}$ \;[loss logs] \\
\specialrule{1.1pt}{2pt}{0pt} 
\bottomrule
\end{tabular}
\vspace{-0.6em}
\end{table}

\paragraph{Notation and setup.}
We measure compute in floating-point operations (FLOPs) and report storage overheads:
\begin{itemize}[nosep,leftmargin=1.4em]
  \item \textbf{Data and epochs.} $N$ training samples, $Q$ query samples; $T$ epochs for the \emph{large} model, $T_{\text{proxy}}$ for the \emph{proxy}; $T_{\text{early}}$ (early scoring), $T_{\text{proxy,early}}$ (early proxy epochs in \texttt{DUAL}).
  \item \textbf{Model cost convention.} Large model forward cost $\flarge$, proxy forward cost $f$ with $f \ll \flarge$. One backward $\approx 2$ forwards $\Rightarrow$ one training step $\approx 3$ forwards per example.
  \item \textbf{Subset/problem.} $k$ coreset size; $d$ input dimension; $c$ classes; $R$ repeats (restarts/probes); $\gamma$ reselection interval.
  \item \textbf{CRAIG embeddings.} $F$ penultimate-feature dimension; $D_{\text{eff}}\!\coloneqq\!F{+}c$ is the embedding size used by \texttt{CRAIG}.
  \item \textbf{Method-specific.} $J$ window length (\texttt{Dyn-Unc}/\texttt{DUAL}/\texttt{TDDS}); $H$ message-passing rounds ($\mathbb{D}^2$\texttt{-Pruning}); $\kappa$ $k$NN degree; $U$ unlabeled-pool size (\texttt{Cal}); $\gamma_{\text{anc}}$ anchor spacing and $A{=}T_{\text{proxy}}/\gamma_{\text{anc}}$ anchors (\texttt{CRAIG}); $\epsilon$ stochastic-greedy tolerance; $\lambda$ trade-off in \texttt{GraphCut}.
\end{itemize}

\paragraph{Results and observations (compute).}
As summarized in \cref{tab:coresets_overheads}, methods that score \emph{during early training of the large model} (e.g., \texttt{Forgetting}, \texttt{EL2N}, \texttt{GraNd}) require one or more full sweeps over all $N$ examples with the large network for $T_{\text{early}}$ epochs (and sometimes $R$ repeats), so their selection cost includes terms like $3NT_{\text{early}}R\,\flarge$, making them compute-inefficient even if the coreset used later is small.
Optimization-with-reselection methods (e.g., \texttt{Glister}) add frequent subset updates every $\gamma$ epochs, driving $\mathcal{O} \big((kQ + N\log(1/\epsilon))\,\flarge\,T/\gamma\big)$ on top of $3kT\,\flarge$.
Feature/graph–based selectors (\texttt{Herding}, \texttt{Moderate}, $\mathbb{D}^2$\texttt{-Pruning}, \texttt{Cal}) pay $3NT_{\text{proxy}}f$ plus at least one $Nf$ encoding pass (sometimes graph/$k$NN work).
By contrast, \texttt{CLD} uses only proxy training and cheap per-epoch query forwards:
\[
\text{Compute}_{\text{\texttt{CLD}}} \;=\; 3NT_{\text{proxy}}f \;+\; Q T_{\text{proxy}} f \;+\; 3kT\,\flarge,
\]
with no gradient/Hessian sweeps, no adversarial steps, and no pairwise similarities.

\paragraph{Results and observations (storage).}
To make storage comparisons transparent, \cref{tab:coresets_overheads} reports \emph{selection-stage storage overhead only}, i.e., method-specific extras beyond storing the large model’s weights.
Early-training methods (\texttt{Forgetting}, \texttt{EL2N}, \texttt{GraNd}, \texttt{AUM}) need only $\mathcal{O}(N)$ scalars; windowed-uncertainty methods (\texttt{Dyn-Unc}, \texttt{DUAL}, \texttt{TDDS}) add $\mathcal{O}(NJ)$ logs.
\texttt{CRAIG} stores $\mathcal{O}\!\big(N(F{+}c)\big)$ embeddings, similarity/feature methods cache $\mathcal{O}(Nd)$ (plus $\mathcal{O}(N\kappa)$ graphs), and \texttt{GraphCut} is $\mathcal{O}(N^2)$.
\texttt{CLD} uses only scalar loss logs $\mathcal{O}\!\big((N{+}Q)T_{\text{proxy}}\big)$.

\begin{figure}[t]
\centering
\includegraphics[width=0.7\linewidth]{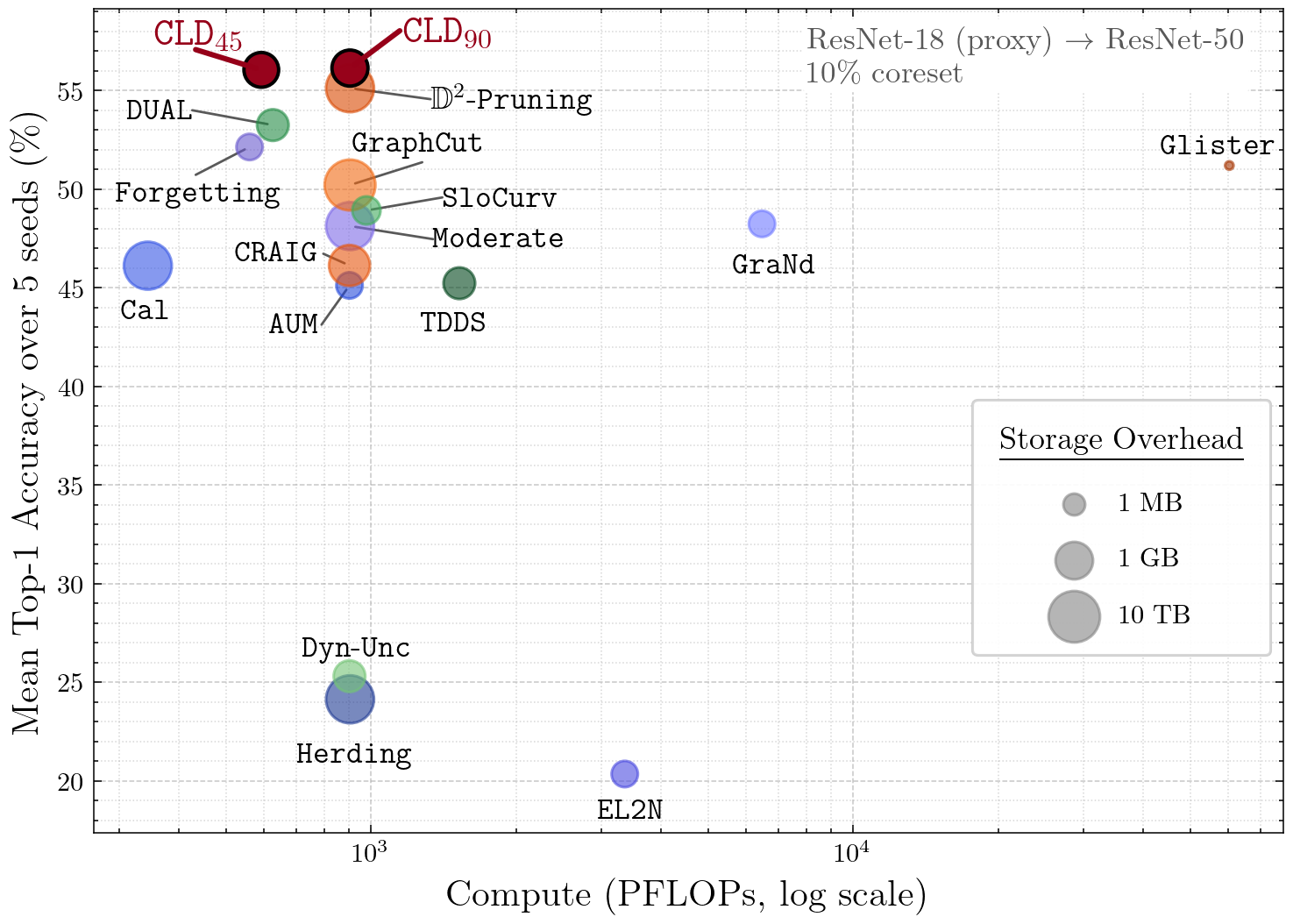}
\caption{Efficiency summary: Accuracy vs.\ Compute (x-axis, log scale); bubble size is proportional to the selection-stage storage overhead. 
The plot uses an illustrative setup for concreteness: selecting 10\% coresets of ImageNet-1k with a ResNet-18 proxy and training a ResNet-50 on the coreset (see \cref{appendix:details_overheads_coresets}). 
Both $\CLD_{90}$ (all proxy epochs) and $\CLD_{45}$ (first 45 proxy epochs) are shown; the latter achieves similar accuracy at roughly half the selection compute. 
A similar trend is observed for \texttt{DUAL} when restricted to early proxy epochs (which we discuss in \cref{sec:Discussion}), highlighting that early-epoch scoring can improve efficiency without harming performance.}
\label{fig:efficiency_bubbles}
\end{figure}

\paragraph{A visual summary.}
Figure~\ref{fig:efficiency_bubbles} summarizes the trade-off between accuracy (y-axis) and end-to-end compute (x-axis, log scale), with bubble size proportional to the \emph{selection-stage storage overhead}. 
Points in the upper-left with small bubbles are closest to the ``Pareto-efficient'' frontier, combining high accuracy with low compute and storage cost. 
To provide a concrete, quantitative context for these trade-offs, the plot is generated using an illustrative setup: selecting $10\%$ coresets of ImageNet-1k with a ResNet-18 proxy, then training a ResNet-50 on the chosen coreset (see \cref{appendix:details_overheads_coresets} for details). 
In this setting, methods that score \emph{during early training of the large model} (e.g., \texttt{GraNd}, \texttt{EL2N}) and those with frequent reselection (e.g., \texttt{Glister}) appear far to the right due to large compute costs, while feature/similarity-based selectors (\texttt{Herding}, \texttt{Moderate}, \texttt{Cal}, $\mathbb{D}^2$\texttt{-Pruning}) have large bubbles from $\mathcal{O}(Nd)$ feature caches. 
\texttt{CLD} lies near the efficient frontier: its selection cost is proxy-only plus lightweight query forwards, and its storage is just scalar loss logs. 
We also show $\CLD_{90}$ (scores derived using loss values from all proxy epochs) and $\CLD_{45}$ (first 45 epochs only); the latter cuts selection compute nearly in half while preserving accuracy (discussed in \cref{sec:Discussion}). 
This mirrors observations for \texttt{DUAL}, which also achieves minimal accuracy drop by using only early proxy epochs, underscoring that temporal truncation can further improve efficiency without sacrificing performance.
For clarity, the figure uses shades of blue for score-based methods, shades of orange for optimization-based methods, shades of green for training-property-based methods, and dark red for $\CLD$.

\section{Discussion}
\label{sec:Discussion}
We discuss practical considerations and empirical findings that further illustrate the applicability of $\CLD$.

\begin{figure}[t]
  \centering
  \begin{subfigure}[b]{0.53\textwidth}
    \centering
    \includegraphics[width=\linewidth]{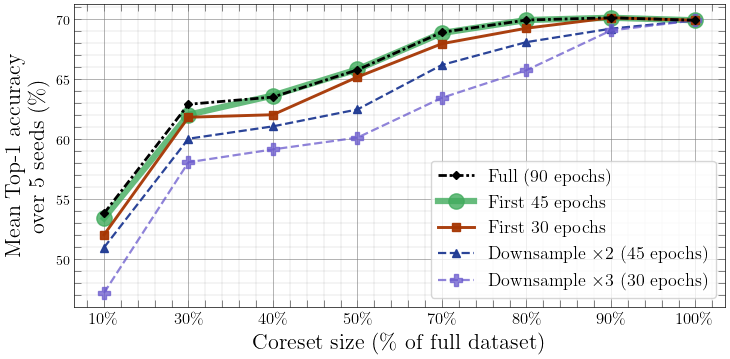}
    \caption{Test accuracy when \texttt{CLD} is computed from early or subsampled checkpoints. Stability holds even at coarse resolution. \textit{(X-axis uses a non-uniform coreset-size grid.)}}
    \label{fig:downsampling}
  \end{subfigure}
  \hfill
  \begin{subfigure}[b]{0.46\textwidth}
    \centering
    \includegraphics[width=\linewidth]{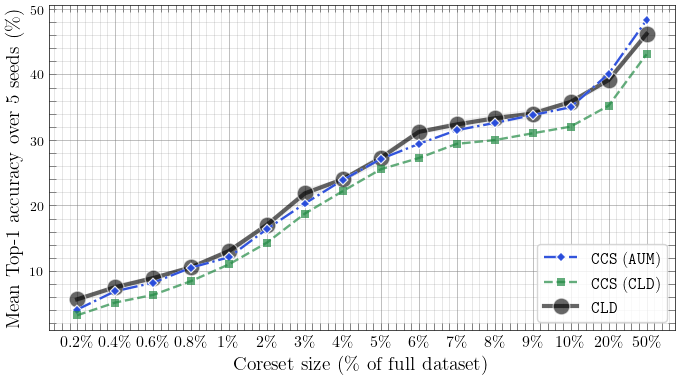}
    \caption{\textit{Ablation on bias reduction.} $\mathtt{CCS}$-style stratified sampling on top of $\CLD$ consistently reduces accuracy across coreset sizes on CIFAR-100.}
    \label{fig:cld_ccs_ablation}
  \end{subfigure}
  \caption{\textbf{Stability and bias.} (a) $\CLD$ is stable under reduced temporal resolution; (b) external stratified sampling is unnecessary, and often harmful, for $\CLD$.}
  \label{fig:stability_bias}
  \vspace{-3mm}
\end{figure}

\paragraph{Stability under temporal subsampling.}
We first assess robustness to reduced temporal resolution on ImageNet-1k with ResNet-18. 
$\CLD$ is computed either from the first 30 or 45 checkpoints (out of 90) or from trajectories subsampled at \(2\times\) or \(3\times\) lower frequency, while training still runs for all 90 epochs. 
As shown in \cref{fig:downsampling}, using only early checkpoints (30/45) yields accuracy nearly identical to the full-trajectory setting, indicating that the informative signal is captured early in training. 
\(2\times\) subsampling preserves accuracy, and even \(3\times\) subsampling incurs only a minor degradation due to reduced temporal resolution. 

\paragraph{Bias reduction and stratified sampling.}
Several methods incorporate bias-reduction mechanisms, such as $\mathtt{CCS}$~\citep{zheng_ccs}, which stratifies selection across score percentiles to promote diversity. 
In contrast, $\CLD$ leverages per-class validation trajectories, yielding a generalization-aware signal that naturally balances classes and downweights noisy or redundant samples. 
As shown in \cref{fig:cld_ccs_ablation}, applying $\mathtt{CCS}$-style stratified sampling on top of $\CLD$ scores consistently \emph{reduces} accuracy across coreset sizes on CIFAR-100. 
This contrasts with metrics like $\mathtt{AUM}$, where $\mathtt{CCS}$ can be beneficial; for $\CLD$, percentile quotas perturb its validation-aligned ranking and reintroduce less informative points.

\paragraph{Validation proxy: composition and size matter.}
Our theoretical guarantees for $\CLD$ require that the validation set be a faithful proxy for the test distribution (\cref{ass:validation_representativeness} in \cref{sec:theory}). 
To understand this, we ask: \textit{How sensitive is $\CLD$ to the composition of the validation set, and does bias in this validation set affect downstream coreset quality?}
To probe this, we split CIFAR-100’s $50\text{k}$ training set into a classwise $25\text{k}/25\text{k}$ train/pool partition. 
From the pool, we constructed $5000$ example validation sets using five heuristics based on publicly available memorization scores~\citep{FZ_infl_feldman2020}, denoted $\mathtt{mem}$. 
Intuitively, low-$\mathtt{mem}$ points correspond to canonical, stereotypical examples (often helpful for transfer and exploited by methods such as \texttt{SloCurv}), while high-$\mathtt{mem}$ points surface atypical or mislabeled examples. 
These heuristics, therefore, let us bias the validation set toward ``typical” or ``atypical” regions of the data.
We trained ResNet-18 proxies on the $25\text{k}$ train split, computed $\CLD$ with respect to each heuristic-based validation set, built class-balanced coresets, and fine-tuned ResNet-18 across coreset sizes (five seeds; see \cref{fig:valset_perf}). 
As shown in \cref{fig:valset_hist}, the non-random heuristics yield validation sets with markedly different $\mathtt{mem}$ profiles.
Two clear patterns emerge. 
First, validation sets biased toward \textsc{Highest-$\mathtt{mem}$} examples consistently degrade performance across coreset sizes: overrepresenting atypical or mislabeled samples lowers accuracy and hinders learning. 
Second, validation sets built from \textsc{Lowest-$\mathtt{mem}$} examples generally support stronger generalization, but retaining \emph{some} high-$\mathtt{mem}$ points is beneficial for capturing long-tail behavior likely present in the test distribution. 
In our runs, \textsc{Proportional} sampling, drawing examples according to the pool’s original $\mathtt{mem}$ distribution, was the most reliable overall, and would likely improve further if mislabeled points were filtered or downweighted (which we did \emph{not} do in this study).
These findings underscore that $\CLD$’s effectiveness depends critically on the quality of the validation signal: coresets cannot exceed the fidelity of the validation dynamics they are aligned with. 
This motivates future work on principled procedures to build clean, representative validation sets (e.g., robust to label noise) and to select validation sizes that balance reliability with efficiency.

\begin{figure}[t]
  \centering
  \begin{subfigure}[b]{0.49\textwidth}
    \centering
    \includegraphics[width=\linewidth]{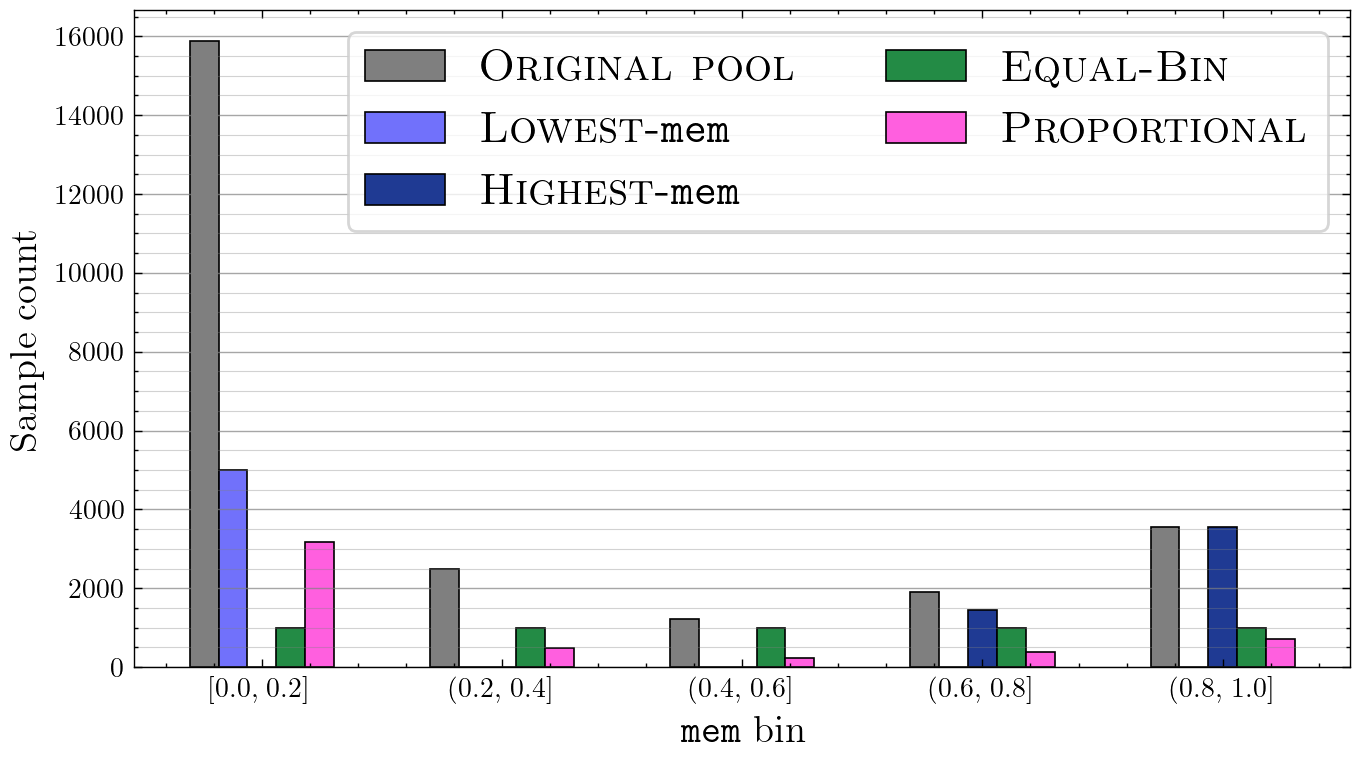}
    \caption{\textbf{$\mathtt{mem}$ Scores of Validation Sets Tested.} Counts per $\mathtt{mem}$ bin for each heuristic
    (\textsc{Random}, \textsc{Lowest-$\mathtt{mem}$}, \textsc{Highest-$\mathtt{mem}$}, \textsc{Equal-Bin}, \textsc{Proportional}); the pool’s original distribution is included for reference.}
    \label{fig:valset_hist}
  \end{subfigure}
  \hfill
  \begin{subfigure}[b]{0.49\textwidth}
    \centering
    \includegraphics[width=\linewidth]{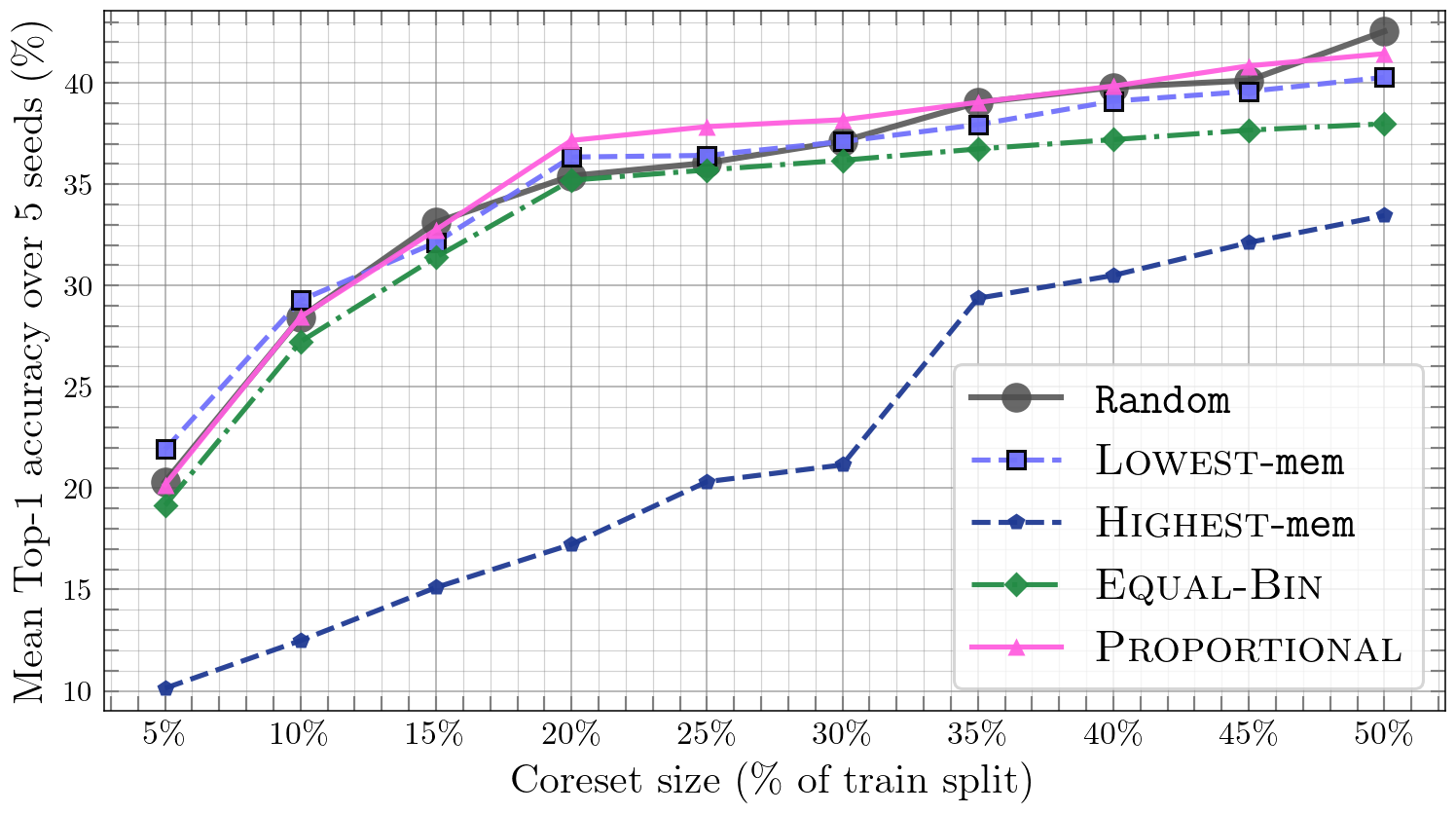}
    \caption{\textbf{Effect of Validation Set Composition} CIFAR-100, ResNet-18; mean over five seeds (shaded bands: std). Validation sets are built by the five $\mathtt{mem}$-based heuristics. No label cleaning was applied.}
    \label{fig:valset_perf}
  \end{subfigure}
  \caption{Validation proxy study on CIFAR-100: (a) distributional differences induced by $\mathtt{mem}$-based heuristics; (b) downstream impact on $\CLD$ coreset performance.}
  \label{fig:valset_exp}
  \vspace{-3mm}
\end{figure}

\noindent Overall, $\CLD$ already achieves implicit bias reduction via validation alignment; external stratified sampling is unnecessary and often counterproductive. We provide further results, including seed-wise stability (\cref{app:seed_stability}) and connections to influence functions and training data attribution methods (\cref{appendix:TDA}), in the appendix.

\rev{\paragraph{Loss Differences as a Gradient-Free Proxy for Influence.}
The $\CLD$ metric is built on correlating per\mbox{-}example \emph{loss differences}, a deliberate choice over simpler signals like raw losses or gradient norms. 
This choice is theoretically motivated: as we formalize in \cref{lem:high_cld_grad_alignment}, a first\mbox{-}order expansion shows that the loss difference $\Delta\ell(\theta^{t};z)$ approximates the gradient inner product $\langle \nabla_\theta \ell(\theta^{t-1};z),\,\delta\theta^{t-1}\rangle$. 
This insight positions $\CLD$ as an efficient proxy for the alignment dynamics tracked by influence methods that compute gradient similarities, such as \texttt{TracIn}~\citep{tracin_pruthi2020}.
The primary advantage of this proxy approach is \textbf{computational efficiency}. Methods like \texttt{TracIn} are powerful but expensive, requiring the storage and processing of high-dimensional per-sample gradients. $\CLD$ captures the same underlying alignment dynamics while avoiding this overhead entirely. 
Simpler signals, such as raw losses or gradient norms, are even cheaper but conceptually flawed, as they are confounded by signal drift or discard crucial directional information. 
Empirically, $\CLD$'s performance is comparable to \texttt{TracIn} on key attribution metrics, including the \textit{Linear Datamodeling Score} (a measure of how well scores correlate with true sample influence) and \textit{prediction brittleness} (a measure of how critical top-ranked samples are to model predictions). See \cref{appendix:TDA} for the full analysis and a broader discussion of how $\CLD$ relates to influence methods.}

\rev{\paragraph{Beyond Supervised Vision: Scope and Caveats.}
Because $\CLD$ requires only per\mbox{-}example training losses and a small validation proxy, its core criterion can be adapted to other supervised settings (e.g., contrastive learning with a validation loss, or object detection by aggregating per\mbox{-}instance losses to a per\mbox{-}image score). In this work, we intentionally limit our scope to supervised image classification for controlled comparisons and do not claim empirical validation outside this domain.
Applying $\CLD$ to modern language model fine\mbox{-}tuning, however, is non-trivial due to challenges like the ``squeezing effect''~\citep{ren2025_learning}. As fine\mbox{-}tuning progresses, the model concentrates probability mass onto the exact training sequences, which can paradoxically lower the assigned probabilities of similar but non-identical validation samples. This causes a divergence between training and validation loss trajectories, meaning that training samples promoting generalization are not guaranteed to have a high $\CLD$ score. Corroborating this finding, \citet{xia2024_LESS} observe that, unlike in vision, minimizing validation loss does not reliably improve model performance in instruction tuning. Adapting a $\CLD$\mbox{-}style selector for LLMs will therefore require developing task\mbox{-}appropriate generalization proxies, which is an important direction for future work.}
\section{Limitations}
\label{sec:Limitations}
While $\CLD$ is scalable and effective, it does have limitations. 
First, it requires access to training loss trajectories across multiple checkpoints, which may not be feasible in settings where models are deployed as black boxes or when fine-tuning from pretrained checkpoints without full retraining.
Second, although $\CLD$ requires a hold-out validation set, this reduces the number of samples for training.

\section{Conclusion}
\label{sec:Conclusion}
We introduced \textit{Correlation of Loss Differences} ($\CLD$), a simple, scalable metric for identifying data that aligns with generalization. 
By relying only on per-sample losses across training checkpoints, without gradients, pairwise similarities, or second-order information, $\CLD$ enables principled coreset selection with low compute and storage cost.
Across CIFAR-100 and ImageNet-1k, $\CLD$ matches or outperforms state-of-the-art methods over a wide range of subset sizes and attains full-data accuracy with substantially smaller subsets. 
$\CLD$-selected coresets also transfer across architectures (ResNet, VGG, DenseNet) with $<\!1\%$ degradation, remain stable when using only early checkpoints, and inherently reduce bias via per-class validation alignment, obviating additional stratified sampling. 
Our theory further shows that the convergence gap under coreset training is controlled by sample–validation alignment and the representativeness of the validation set. 
Taken together, these properties make $\CLD$ a practical tool for large-scale, budgeted training and a principled foundation for future work on robust validation design and budget-aware data selection.

\newpage
\section*{Acknowledgements}
This work was supported in part by the Center for the Co-Design of Cognitive Systems (CoCoSys), a DARPA-sponsored JUMP 2.0 center, the Semiconductor Research Corporation (SRC), the National Science Foundation, and Collins Aerospace. We are also thankful to Efstathia Soufleri, Akshita Gupta, Utkarsh Saxena, Amitangshu Mukherjee, and Sakshi Choudhary for their helpful discussions and feedback.

\bibliography{main}

@article{tracin_pruthi2020,
  title={Estimating training data influence by tracing gradient descent},
  author={Pruthi, Garima and Liu, Frederick and Kale, Satyen and Sundararajan, Mukund},
  journal={Advances in Neural Information Processing Systems (NeurIPS)},
  volume={33},
  pages={19920--19930},
  year={2020}
}

@inproceedings{Arnoldi_schioppa2022,
  title={Scaling up influence functions},
  author={Schioppa, Andrea and Zablotskaia, Polina and Vilar, David and Sokolov, Artem},
  booktitle={Proceedings of the AAAI Conference on Artificial Intelligence},
  volume={36},
  pages={8179--8186},
  year={2022}
}

@inproceedings{trak_2023park,
  title={TRAK: attributing model behavior at scale},
  author={Park, Sung Min and Georgiev, Kristian and Ilyas, Andrew and Leclerc, Guillaume and Madry, Aleksander},
  booktitle={Proceedings of the 40th International Conference on Machine Learning (ICML)},
  pages={27074--27113},
  year={2023}
}

@inproceedings{Koh2017,
  title={Understanding black-box predictions via influence functions},
  author={Koh, Pang Wei and Liang, Percy},
  booktitle={International Conference on Machine Learning (ICML)},
  pages={1885--1894},
  year={2017},
  organization={PMLR}
}

@article{dataclensing_hara2019,
  title={Data cleansing for models trained with SGD},
  author={Hara, Satoshi and Nitanda, Atsushi and Maehara, Takanori},
  journal={Advances in Neural Information Processing Systems (NeurIPS)},
  volume={32},
  year={2019}
}

@inproceedings{influence_fragile_basu2021,
  title={Influence Functions in Deep Learning Are Fragile},
  author={Basu, S and Pope, P and Feizi, S},
  booktitle={International Conference on Learning Representations (ICLR)},
  year={2021}
}

@article{representer_yeh2018,
  title={Representer point selection for explaining deep neural networks},
  author={Yeh, Chih-Kuan and Kim, Joon and Yen, Ian En-Hsu and Ravikumar, Pradeep K},
  journal={Advances in neural information processing systems (NeurIPS)},
  volume={31},
  year={2018}
}

@inproceedings{Randselect_wojnowicz2016,
  title={“Influence sketching”: Finding influential samples in large-scale regressions},
  author={Wojnowicz, Mike and Cruz, Ben and Zhao, Xuan and Wallace, Brian and Wolff, Matt and Luan, Jay and Crable, Caleb},
  booktitle={2016 IEEE International Conference on Big Data (Big Data)},
  pages={3601--3612},
  year={2016},
  organization={IEEE}
}

@article{FZ_infl_feldman2020,
  title={What neural networks memorize and why: Discovering the long tail via influence estimation},
  author={Feldman, Vitaly and Zhang, Chiyuan},
  journal={Advances in Neural Information Processing Systems (NeurIPS)},
  volume={33},
  pages={2881--2891},
  year={2020}
}

@inproceedings{datamodels_ilyas2022,
  title={Datamodels: Predicting Predictions from Training Data},
  author={Ilyas, Andrew and Park, Sung Min and Engstrom, Logan and Leclerc, Guillaume and Madry, Aleksander},
  booktitle={Proceedings of the 39th International Conference on Machine Learning (ICML)},
  year={2022}
}

@inproceedings{source_bae2024,
  title={Training Data Attribution via Approximate Unrolling},
  author={Bae, Juhan and Lin, Wu and Lorraine, Jonathan and Grosse, Roger Baker},
  booktitle={The Thirty-eighth Annual Conference on Neural Information Processing Systems (NeurIPS)},
  year={2024}
}

@article{revisitinglds_karthikeyan2021,
  title={Revisiting methods for finding influential examples},
  author={Karthikeyan, K and S{\o}gaard, Anders},
  journal={CoRR, abs/2111.04683},
  year={2021}
}

@inproceedings{coreset_activelearning1,
  title={Selection via Proxy: Efficient Data Selection for Deep Learning},
  author={Coleman, C and Yeh, C and Mussmann, S and Mirzasoleiman, B and Bailis, P and Liang, P and Leskovec, J and Zaharia, M},
  booktitle={International Conference on Learning Representations (ICLR)},
  year={2020}
}

@inproceedings{coreset_nas1,
  title={Accelerating neural architecture search via proxy data},
  author={Na, Byunggook and Mok, Jisoo and Choe, Hyeokjun and Yoon, Sungroh},
  booktitle = {Proceedings of the Thirtieth International Joint Conference on
               Artificial Intelligence, {IJCAI-21}},
  publisher = {International Joint Conferences on Artificial Intelligence Organization},
  year      = {2021},
}

@article{coreset_nas2,
  title={Core-set sampling for efficient neural architecture search},
  author={Shim, Jae-hun and Kong, Kyeongbo and Kang, Suk-Ju},
  journal={arXiv preprint arXiv:2107.06869},
  year={2021}
}

@article{coreset_cl_1,
  title={Gradient based sample selection for online continual learning},
  author={Aljundi, Rahaf and Lin, Min and Goujaud, Baptiste and Bengio, Yoshua},
  journal={Advances in neural information processing systems (NeurIPS)},
  volume={32},
  year={2019}
}

@inproceedings{coreset_dc,
  title={Dataset distillation by matching training trajectories},
  author={Cazenavette, George and Wang, Tongzhou and Torralba, Antonio and Efros, Alexei A and Zhu, Jun-Yan},
  booktitle={Proceedings of the IEEE/CVF Conference on Computer Vision and Pattern Recognition (CVPR)},
  pages={4750--4759},
  year={2022}
}

@inproceedings{icarl_rebuffi2017,
  title={icarl: Incremental classifier and representation learning},
  author={Rebuffi, Sylvestre-Alvise and Kolesnikov, Alexander and Sperl, Georg and Lampert, Christoph H},
  booktitle={Proceedings of the IEEE conference on Computer Vision and Pattern Recognition (CVPR)},
  pages={2001--2010},
  year={2017}
}

@inproceedings{endtoend_castro2018,
  title={End-to-end incremental learning},
  author={Castro, Francisco M and Mar{\'\i}n-Jim{\'e}nez, Manuel J and Guil, Nicol{\'a}s and Schmid, Cordelia and Alahari, Karteek},
  booktitle={Proceedings of the European conference on computer vision (ECCV)},
  pages={233--248},
  year={2018}
}

@inproceedings{initial_beloudah2020,
  title={Initial classifier weights replay for memoryless class incremental learning},
  author={Belouadah, Eden and Popescu, Adrian and Kanellos, Ioannis},
  booktitle={BMVC 2020-The 31st British Machine Vision Virtual Conference (BMVC)},
  pages={1--13},
  year={2020}
}

@inproceedings{craig_mirzasoleiman2020,
  title={Coresets for data-efficient training of machine learning models},
  author={Mirzasoleiman, Baharan and Bilmes, Jeff and Leskovec, Jure},
  booktitle={International Conference on Machine Learning (ICML)},
  pages={6950--6960},
  year={2020},
  organization={PMLR}
}

@inproceedings{gradmatch_killamsetty2021,
  title={Grad-match: Gradient matching based data subset selection for efficient deep model training},
  author={Killamsetty, Krishnateja and Durga, Sivasubramanian and Ramakrishnan, Ganesh and De, Abir and Iyer, Rishabh},
  booktitle={International Conference on Machine Learning (ICML)},
  pages={5464--5474},
  year={2021},
  organization={PMLR}
}

@article{bilevelcoresets_borsos2020,
  title={Coresets via bilevel optimization for continual learning and streaming},
  author={Borsos, Zal{\'a}n and Mutny, Mojmir and Krause, Andreas},
  journal={Advances in neural information processing systems (NeurIPS)},
  volume={33},
  pages={14879--14890},
  year={2020}
}

@inproceedings{mindboundary_yang2024,
  title={Mind the Boundary: Coreset Selection via Reconstructing the Decision Boundary},
  author={Yang, Shuo and Cao, Zhe and Guo, Sheng and Zhang, Ruiheng and Luo, Ping and Zhang, Shengping and Nie, Liqiang},
  booktitle={Forty-first International Conference on Machine Learning (ICML)},
  year={2024}
}

@inproceedings{refinedcoreset_xia2024,
  title={Refined Coreset Selection: Towards Minimal Coreset Size under Model Performance Constraints},
  author={Xia, Xiaobo and Liu, Jiale and Zhang, Shaokun and Wu, Qingyun and Wei, Hongxin and Liu, Tongliang},
  booktitle={Forty-first International Conference on Machine Learning (ICML)},
  year={2024}
}

@inproceedings{modelpred_d2_maharana2023,
  title={D2 pruning: Message passing for balancing diversity and difficulty in data pruning},
  author={Maharana, Adyasha and Yadav, Prateek and Bansal, Mohit},
  booktitle={The Twelfth International Conference on Learning Representations (ICLR)},
  year={2024}
}

@inproceedings{uncertainity_he2024_dynunc,
  title={Large-scale dataset pruning with dynamic uncertainty},
  author={He, Muyang and Yang, Shuo and Huang, Tiejun and Zhao, Bo},
  booktitle={Proceedings of the IEEE/CVF Conference on Computer Vision and Pattern Recognition (CVPR)},
  pages={7713--7722},
  year={2024}
}

@inproceedings{slocurves_garg2023,
  title={Samples with low loss curvature improve data efficiency},
  author={Garg, Isha and Roy, Kaushik},
  booktitle={Proceedings of the IEEE/CVF Conference on Computer Vision and Pattern Recognition (CVPR)},
  pages={20290--20300},
  year={2023}
}

@inproceedings{herding_chen2010,
  title={Super-samples from kernel herding},
  author={Chen, Yutian and Welling, Max and Smola, Alex},
  booktitle={Proceedings of the Twenty-Sixth Conference on Uncertainty in Artificial Intelligence},
  pages={109--116},
  year={2010}
}

@inproceedings{graphcut_iyer2021,
  title={Submodular combinatorial information measures with applications in machine learning},
  author={Iyer, Rishabh and Khargoankar, Ninad and Bilmes, Jeff and Asanani, Himanshu},
  booktitle={Algorithmic Learning Theory},
  pages={722--754},
  year={2021},
  organization={PMLR}
}

@inproceedings{glister_killamsetty2021,
  title={Glister: Generalization based data subset selection for efficient and robust learning},
  author={Killamsetty, Krishnateja and Sivasubramanian, Durga and Ramakrishnan, Ganesh and Iyer, Rishabh},
  booktitle={Proceedings of the AAAI Conference on Artificial Intelligence},
  volume={35},
  pages={8110--8118},
  year={2021}
}

@inproceedings{cal_margatina2021,
  title={Active Learning by Acquiring Contrastive Examples},
  author={Margatina, Katerina and Vernikos, Giorgos and Barrault, Lo{\"\i}c and Aletras, Nikolaos},
  booktitle={Proceedings of the 2021 Conference on Empirical Methods in Natural Language Processing (EMNLP)},
  pages={650--663},
  year={2021}
}

@inproceedings{forgetting_toneva2018,
  title={An Empirical Study of Example Forgetting during Deep Neural Network Learning},
  author={Toneva, Mariya and Sordoni, Alessandro and des Combes, Remi Tachet and Trischler, Adam and Bengio, Yoshua and Gordon, Geoffrey J},
  booktitle={International Conference on Learning Representations (ICLR)},
  year={2018}
}

@inproceedings{deepcore,
  title={Deepcore: A comprehensive library for coreset selection in deep learning},
  author={Guo, Chengcheng and Zhao, Bo and Bai, Yanbing},
  booktitle={International Conference on Database and Expert Systems Applications},
  pages={181--195},
  year={2022},
  organization={Springer}
}

@article{E2LN_grand_paul2021,
  title={Deep learning on a data diet: Finding important examples early in training},
  author={Paul, Mansheej and Ganguli, Surya and Dziugaite, Gintare Karolina},
  journal={Advances in neural information processing systems (NeurIPS)},
  volume={34},
  pages={20596--20607},
  year={2021}
}

@article{imagenet,
  title={Imagenet large scale visual recognition challenge},
  author={Russakovsky, Olga and Deng, Jia and Su, Hao and Krause, Jonathan and Satheesh, Sanjeev and Ma, Sean and Huang, Zhiheng and Karpathy, Andrej and Khosla, Aditya and Bernstein, Michael and others},
  journal={International journal of computer vision},
  volume={115},
  pages={211--252},
  year={2015},
  publisher={Springer}
}

@article{cifar,
  title={Cifar-100 (canadian institute for advanced research)},
  author={Krizhevsky, Alex and Nair, Vinod and Hinton, Geoffrey},
  journal={URL http://www. cs. toronto. edu/kriz/cifar. html},
  year={2009}
}

@inproceedings{resnet,
  title={Deep residual learning for image recognition},
  author={He, Kaiming and Zhang, Xiangyu and Ren, Shaoqing and Sun, Jian},
  booktitle={Proceedings of the IEEE conference on computer vision and pattern recognition (CVPR)},
  pages={770--778},
  year={2016}
}

@article{vgg,
  title={Very deep convolutional networks for large-scale image recognition},
  author={Simonyan, Karen and Zisserman, Andrew},
  journal={arXiv preprint arXiv:1409.1556},
  year={2014}
}

@inproceedings{densenet,
  title={Densely connected convolutional networks},
  author={Huang, Gao and Liu, Zhuang and Van Der Maaten, Laurens and Weinberger, Kilian Q},
  booktitle={Proceedings of the IEEE conference on computer vision and pattern recognition (CVPR)},
  pages={4700--4708},
  year={2017}
}

@misc{bearpaw_github,
  author       = {Wei Yang},
  title        = {PyTorch-Classification},
  year         = {2017},
publisher = {Github},
journal = {Github repository},
  howpublished = {\url{https://github.com/bearpaw/pytorch-classification}},
  commit         = {24f1c456f48c78133088c4eefd182ca9e6199b03},
}

@inproceedings{sgd,
  title={Large-scale machine learning with stochastic gradient descent},
  author={Bottou, L{\'e}on},
  booktitle={Proceedings of COMPSTAT'2010: 19th International Conference on Computational Statistics, Paris France, August 22-27, 2010 Keynote, Invited and Contributed Papers},
  pages={177--186},
  year={2010},
  organization={Springer}
}

@inproceedings{nesterov,
  title={On the importance of initialization and momentum in deep learning},
  author={Sutskever, Ilya and Martens, James and Dahl, George and Hinton, Geoffrey},
  booktitle={International conference on machine learning (ICML)},
  pages={1139--1147},
  year={2013},
  organization={PMLR}
}

@article{pearsoncorr,
  title={VII. Note on regression and inheritance in the case of two parents},
  author={Pearson, Karl},
  journal={proceedings of the royal society of London},
  volume={58},
  number={347-352},
  pages={240--242},
  year={1895},
  publisher={The Royal Society London}
}

@article{spearman1904,
  author = {Spearman, Charles},
  title = {The Proof and Measurement of Association Between Two Things},
  journal = {The American Journal of Psychology},
  volume = {15},
  number = {1},
  pages = {72--101},
  year = {1904},
  publisher = {University of Illinois Press},
  doi = {10.2307/1412159},
  url = {https://www.jstor.org/stable/1412159}
}

@inproceedings{2022_deepactive_validassump,
    author    = {Zhang, Mengying and Li, Mary L. and Liu, Tesi and Yu, Yong and Zhou, Zhi-Hua},
    title     = {Deep Active Learning by Leveraging Training Dynamics},
    booktitle = {Advances in Neural Information Processing Systems 35 (NeurIPS 2022)},
    pages     = {14737--14750},
    year      = {2022},
}

@article{2022_datarepresentivity_validassump,
    author    = {Clemmensen, Line H. and Kj{\ae}rsgaard, Rune D.},
    title     = {Data Representativity for Machine Learning and AI Systems},
    journal   = {arXiv preprint arXiv:2203.04706},
    year      = {2022},
    month     = {Mar},
    eprint    = {2203.04706},
    archiveprefix = {arXiv},
    primaryclass = {stat.ML},
    url       = {https://arxiv.org/abs/2203.04706},
}

@article{2020_datarepresentativeness_validassump,
    author    = {Veen, Duco and Grutters, Jan C. and Veta, Matthias F. M. F. and Viergever, Max A. and Pluim, Josien P. W.},
    title     = {The data representativeness criterion: Predicting the performance of supervised classification based on data set similarity},
    journal   = {PLOS ONE},
    year      = {2020},
    volume    = {15},
    number    = {8},
    pages     = {e0237009},
    doi       = {10.1371/journal.pone.0237009},
}

@inproceedings{xia2022moderate,
  title={Moderate coreset: A universal method of data selection for real-world data-efficient deep learning},
  author={Xia, Xiaobo and Liu, Jiale and Yu, Jun and Shen, Xu and Han, Bo and Liu, Tongliang},
  booktitle={The Eleventh International Conference on Learning Representations (ICLR)},
  year={2022}
}

@inproceedings{sener2018active,
  title={Active Learning for Convolutional Neural Networks: A Core-Set Approach},
  author={Sener, Ozan and Savarese, Silvio},
  booktitle={International Conference on Learning Representations (ICLR)},
  year={2018}
}

@inproceedings{zheng_ccs,
  title={Coverage-centric Coreset Selection for High Pruning Rates},
  author={Zheng, Haizhong and Liu, Rui and Lai, Fan and Prakash, Atul},
  booktitle={The Eleventh International Conference on Learning Representations (ICLR)},
  year={2022}
}

@article{pleiss2020_aum,
  title={Identifying mislabeled data using the area under the margin ranking},
  author={Pleiss, Geoff and Zhang, Tianyi and Elenberg, Ethan and Weinberger, Kilian Q},
  journal={Advances in Neural Information Processing Systems (NeurIPS)},
  volume={33},
  pages={17044--17056},
  year={2020}
}

@inproceedings{zhang2024_tdds,
  title={Spanning training progress: Temporal dual-depth scoring (tdds) for enhanced dataset pruning},
  author={Zhang, Xin and Du, Jiawei and Li, Yunsong and Xie, Weiying and Zhou, Joey Tianyi},
  booktitle={Proceedings of the IEEE/CVF Conference on Computer Vision and Pattern Recognition (CVPR)},
  pages={26223--26232},
  year={2024}
}

@inproceedings{
cho2025_DUAL,
title={Lightweight Dataset Pruning without Full Training via Example Difficulty and Prediction Uncertainty},
author={Yeseul Cho and Baekrok Shin and Changmin Kang and Chulhee Yun},
booktitle={Forty-second International Conference on Machine Learning (ICML)},
year={2025},
}

@inproceedings{xia2024_LESS,
  title={LESS: Selecting Influential Data for Targeted Instruction Tuning},
  author={Xia, Mengzhou and Malladi, Sadhika and Gururangan, Suchin and Arora, Sanjeev and Chen, Danqi},
  booktitle={International Conference on Machine Learning (ICML)},
  pages={54104--54132},
  year={2024},
  organization={PMLR}
}

@inproceedings{ren2025_learning,
title={Learning Dynamics of {LLM} Finetuning},
author={Yi Ren and Danica J. Sutherland},
booktitle={The Thirteenth International Conference on Learning Representations (ICLR)},
year={2025},
}
\bibliographystyle{Style_files/tmlr}

\newpage
\appendix
\section{\texttt{CLD}-Coreset Selection Algorithm}
\label{appendix:algorithm}

For completeness, we provide the full pseudocode for the coreset selection procedure described in \cref{sec:CLD_coreset_selection}. This algorithm computes per-class $\CLD$ scores by correlating each training sample’s loss trajectory with the corresponding class-specific validation trajectory, and selects a fixed number of top-scoring samples per class to form a class-balanced coreset.

\begin{algorithm}[ht]
\small
\caption{Coreset Selection Using $\CLD$ (Per-Class)}
\label{alg:coreset}
\KwIn{Training set $\rmS = \{\vec z_m\}_{m=1}^N$, Validation set $\rmV = \{\vec q_j\}_{j=1}^Q$, Class budgets $\{k_c\}_{c=1}^C$ with $k=\sum_{c=1}^C k_c$, Epochs $T$, Initial params $\thetaS^{0}$, Loss $\ell$, Optimizer $\mathcal{A}$, Hyperparameters $\lambda$}
\KwOut{Coreset $\rmC$ of size $k$}

\For{$t \gets 1$ \KwTo $T$}{
  $\thetaS^t \gets \mathcal{A}(\thetaS^{t-1}, \rmS, \lambda)$ \tcp{update model}
  \For{$m \gets 1$ \KwTo $N$}{Store $\ell(\thetaS^t, \vec z_m)$ \tcp{record train loss}}
  \For{$j \gets 1$ \KwTo $Q$}{Store $\ell(\thetaS^t, \vec q_j)$ \tcp{record val loss}}
}

\For{$m \gets 1$ \KwTo $N$}{
  $\vec\Delta(\vec z_m) \gets \left[ \ell(\thetaS^t, \vec z_m) - \ell(\thetaS^{t-1}, \vec z_m) \right]_{t=1}^{T}$
}
\For{$j \gets 1$ \KwTo $Q$}{
  $\vec\Delta(\vec q_j) \gets \left[ \ell(\thetaS^t, \vec q_j) - \ell(\thetaS^{t-1}, \vec q_j) \right]_{t=1}^{T}$
}

\For{$c \gets 1$ \KwTo $C$}{
  $\rmS_c \gets \{ \vec z_m \in \rmS : y_m = c \}$\;
  $\rmV_c \gets \{ \vec q_j \in \rmV : y_j = c \}$\;
  $\vec{\Delta}'_{\rmV, c} \gets \frac{1}{|\rmV_c|} \sum_{\vec q_j \in \rmV_c} \vec\Delta(\vec q_j)$\;
  \ForEach{$\vec z_m \in \rmS_c$}{
    $\CLD(\vec z_m) \gets \rho\!\bigl(\vec\Delta(\vec z_m),\, \vec{\Delta}'_{\rmV, c}\bigr)$
  }
  $\rmC_c \gets \text{top-}k_c \text{ elements of } \rmS_c \text{ by } \CLD(\vec z_m)$\;
}
$\rmC \gets \bigcup_{c=1}^C \rmC_c$\;
\Return{$\rmC$}
\end{algorithm}

\rev{\emph{Implementation note.} Loss values in Steps 3–6 are logged as part of the normal training loop; no additional passes are introduced.}

\section{Detailed Theoretical Framework}
\label{appendix:Theoretical_framework}
In this appendix, we provide the complete theoretical framework supporting the results stated in \cref{sec:theory}. 
We first outline the detailed lemmas establishing gradient alignment and approximation properties of $\CLD$-selected coresets. 
We then conclude with a full proof of the convergence guarantee presented in \cref{thm:cld_coreset_convergence}.

\revstart
\subsection{Roadmap and Notation}
\label{appendix:roadmap_notation}

\textbf{Proof Outline.} Our main convergence guarantee, Theorem~\ref{thm:cld_coreset_convergence}, is built upon three supporting lemmas:
\begin{itemize}
    \item \textbf{Lemma~\ref{lem:high_cld_grad_alignment}}, which establishes that a high $\CLD$ score implies strong alignment between a sample's gradient and the validation gradient.
    \item \textbf{Lemma~\ref{lem:alignment_stability}}, which shows that this alignment is preserved during coreset training, provided the coreset and full-data parameter trajectories remain close.
    \item \textbf{Lemma~\ref{lem:subset_grad_approx}}, which leverages this stable alignment to bound the approximation error between the average coreset gradient and the true population risk gradient.
\end{itemize}

\medskip
\noindent
\textbf{Notation Summary.} We use the following symbols throughout our analysis:
\begin{itemize}[itemsep=0pt]
    \item $L$: The smoothness constant of the loss function.
    \item $B$: An upper bound on the per-sample gradient norm.
    \item $k$: The size of the coreset.
    \item $Q$: The size of the validation set.
    \item $\epsilon$: A parameter controlling the strictness of CLD-based sample selection.
    \item $\kappa$: The alignment-gap term, quantifying the gradient mismatch from coreset selection.
    \item $\delta$: The validation representativeness error, which decays as $O(\sqrt{Q})$.
    \item $\eta$: The learning rate (step size) of the optimizer.
    \item $T$: The total number of training iterations.
    \item $\theta_S^t, \theta_C^t$: Model parameters at iteration $t$ when training on the full dataset and the coreset, respectively.
\end{itemize}
\revend
\subsection{Supporting Lemmas for \texorpdfstring{\cref{thm:cld_coreset_convergence}}{Theorem cld coreset convergence}}
\label{appendix:Theory_Lemmas}

\begin{lemma}[High $\CLD$ Implies Gradient Alignment]
\label{lem:high_cld_grad_alignment}
Consider a training sample $\vec{z}_m \in \rmS$ with $\CLD(\vec{z}_m) = \rho\!\bigl(\vec\Delta(\vec z_m), \DeltaV\bigr) \ge 1 - \epsilon$ for some small $\epsilon > 0$.
Let $\thetaS^t$ be the parameters obtained by running algorithm $\mathcal{A}$ on $\rmS$ for $t$ iterations.
Let $\delta\theta^{t-1} \coloneqq \thetaS^t - \thetaS^{t-1}$ be the parameter update at step $t$.

Suppose the learning algorithm is run for a sufficiently large number of iterations $T$.
Assume the sequence of parameter updates $\{\delta\theta^{t-1}\}_{t=1}^{T}$ is sufficiently varied.
This means the updates are not persistently orthogonal to any fixed non-zero vector direction in the relevant parameter subspace.

Then, for most training steps $t$ where $\GradV(\thetaS^t) \neq 0$, the sample gradient $\nabla_{\theta} \ell(\thetaS^t, \vec{z}_m)$ and the validation gradient $\GradV(\thetaS^t)$ are well-aligned:
\begin{equation}
\label{eq:lem:high_cld_grad_alignment_result}
\cos\left( \angle\left( \nabla_{\theta} \ell(\thetaS^t, \vec{z}_m), \GradV(\thetaS^t) \right) \right) \geq 1 - \epsilon'_{t},
\end{equation}
where $\epsilon'_{t} \to 0$ as $\epsilon \to 0$.
\end{lemma}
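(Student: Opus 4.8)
The plan is to reduce the statement about correlation of scalar loss-difference \emph{trajectories} to a statement about alignment of the underlying \emph{gradients}, via a first-order linearization of the per-step loss change. First I would expand each coordinate of the two trajectories to first order in the realized update $\delta\theta^{t-1}=\thetaS^{t}-\thetaS^{t-1}$. By $L$-smoothness (\cref{ass:L_smooth}),
\begin{equation}
\ell(\thetaS^{t},\vec z)-\ell(\thetaS^{t-1},\vec z)=\langle \nabla_\theta\ell(\thetaS^{t-1},\vec z),\,\delta\theta^{t-1}\rangle+r_t(\vec z),\qquad |r_t(\vec z)|\le\tfrac{L}{2}\norm{\delta\theta^{t-1}}^2,
\end{equation}
so the $t$-th coordinate of $\vec\Delta(\vec z_m)$ is, up to the remainder, $\langle\nabla_\theta\ell(\thetaS^{t-1},\vec z_m),\,\delta\theta^{t-1}\rangle$, and averaging over $\rmV$ makes the $t$-th coordinate of $\DeltaV$ equal to $\langle\GradV(\thetaS^{t-1}),\,\delta\theta^{t-1}\rangle$ up to an analogous remainder. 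Both trajectories are thus sequences of projections of the (time-varying) gradients onto the update directions, with second-order errors of size $O(L\eta^2B^2)$ under \cref{ass:bounded_grads}.

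Next I would translate the correlation hypothesis into a near-collinearity statement. Writing $u_t,v_t$ for the linearized coordinates and $\tilde u,\tilde v$ for their mean-centered versions, $\CLD(\vec z_m)=\rho(\vec\Delta(\vec z_m),\DeltaV)\ge 1-\epsilon$ is exactly $\langle\tilde u,\tilde v\rangle\ge(1-\epsilon)\norm{\tilde u}\norm{\tilde v}$, i.e.\ the angle between $\tilde u$ and $\tilde v$ is $O(\sqrt{\epsilon})$. A standard rearrangement then yields an approximate positive-affine relation $u_t\approx\alpha\,v_t+\beta$ with common slope $\alpha>0$ and intercept $\beta$, valid in an $\ell_2$ sense over $t=1,\dots,T$ with aggregate residual controlled by $\sqrt{\epsilon}\,\norm{\tilde v}$. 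Substituting the linearization gives $\langle\nabla_\theta\ell(\thetaS^{t-1},\vec z_m)-\alpha\,\GradV(\thetaS^{t-1}),\,\delta\theta^{t-1}\rangle\approx\beta$ for most $t$.

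The crux of the argument, and the main obstacle, is the passage from ``the residual gradient $\nabla_\theta\ell(\thetaS^{t-1},\vec z_m)-\alpha\,\GradV(\thetaS^{t-1})$ has nearly constant projection onto each step $\delta\theta^{t-1}$'' to ``the sample and validation gradients are directionally aligned at most steps.'' This is precisely where the \emph{sufficiently varied updates} hypothesis enters: if the directions $\{\delta\theta^{t-1}\}$ were persistently orthogonal to some fixed nonzero direction, the projections would carry no information about that direction and alignment could fail. I would argue over a window short enough that the gradients vary slowly (again via $L$-smoothness and bounded steps, so gradients drift by $O(L\eta B)$ per step) yet long enough that the steps sweep out a spanning family of directions; the only vector whose projection against a spanning family of varied directions is (near-)constant is one whose transverse component is negligible, forcing $\nabla_\theta\ell(\thetaS^{t-1},\vec z_m)\approx\alpha\,\GradV(\thetaS^{t-1})$ with $\alpha>0$. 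Since centering removes $\beta$ and the positive scale $\alpha$ cancels in the cosine, the resulting bound $\cos\angle(\nabla_\theta\ell(\thetaS^{t-1},\vec z_m),\GradV(\thetaS^{t-1}))\ge 1-\epsilon'_t$ depends only on the collinearity gap, so $\epsilon'_t\to 0$ as $\epsilon\to 0$. I expect the honest caveat to be that this final step is a genericity/plausibility argument rather than a worst-case deduction: making ``spanning set of varied directions'' quantitative (e.g.\ a lower bound on the smallest singular value of the stacked updates over a window) and controlling the second-order remainders uniformly are exactly the places where the ``for most steps'' qualifier and the non-degeneracy assumption do the real work.
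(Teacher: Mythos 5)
Your proposal follows essentially the same route as the paper's proof: first-order linearization of the per-step loss changes into gradient--update inner products, translation of the near-unit correlation into an approximate positive affine relation between the two projection sequences, and the ``sufficiently varied updates'' hypothesis to upgrade near-constant projections of the residual vector $\nabla_\theta\ell(\thetaS^{t-1},\vec z_m)-c\,\GradV(\thetaS^{t-1})$ into near-vanishing of that vector, followed by a continuity argument as $\epsilon\to 0$ (you are, if anything, more explicit about the remainder bounds and about the genericity of the final step). The one place you are slightly weaker is the intercept: the paper must argue separately that $K'\approx 0$ (since a nonzero constant projection onto a spanning family pins the residual to a specific \emph{nonzero} vector), whereas your remark that ``centering removes $\beta$'' does not suffice because the concluding cosine bound is between the uncentered gradients.
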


\revstart
\begin{proof}[Proof Outline]
Use first-order loss changes to relate per-example loss differences and validation loss differences at consecutive steps.
High correlation of differences implies an (approximately) positive linear link between their inner products with update directions, forcing the sample gradient to be a positive scalar multiple of the validation gradient under sufficiently varied updates.
Continuity then yields $\cos\!\left(\angle(\nabla \ell(\cdot,\vec z_m),\GradV)\right)\ge 1-\epsilon'_t$ with $\epsilon'_t\!\to\!0$ as $\epsilon\!\to\!0$.
\end{proof}
\revend

\begin{proof}
We first analyze the idealized case where the correlation is perfect ($\epsilon=0$) and the underlying approximations hold exactly, and then argue by continuity.

Assume the first-order Taylor expansions are exact for the loss changes:
\begin{equation}
\label{eqn:loss_diff_train_proxy}
\bigl(\vec\Delta(\vec z_m)\bigr)_t = \ell(\thetaS^{t}, \vec{z}_m) - \ell(\thetaS^{t-1}, \vec{z}_m) \approx \langle \nabla_\theta \ell(\thetaS^{t-1}, \vec{z}_m), \delta\theta^{t-1} \rangle = x_t,
\end{equation}
\begin{equation}
\label{eqn:loss_diff_val_proxy}
(\DeltaV)_t = \frac{1}{Q}\sum_{j=1}^{Q} \bigl( \ell(\thetaS^t, \vec q_j) - \ell(\thetaS^{t-1}, \vec q_j) \bigr) \approx \langle \GradV(\thetaS^{t-1}), \delta\theta^{t-1} \rangle = y_t.
\end{equation}
Assume perfect correlation $\rho(\vec{x}, \vec{y}) = 1$. \\
This implies an exact positive linear relationship $x_t = c\, y_t + K'$ for all $t$, where $c = \sigma_x / \sigma_y > 0$ and $K' = \overline{x} - c\,\overline{y}$. \\
Substituting the definitions of $x_t$ and $y_t$:
\begin{equation}
\langle \nabla_\theta \ell(\thetaS^{t-1}, \vec{z}_m), \delta\theta^{t-1} \rangle = c \,\langle \GradV(\thetaS^{t-1}), \delta\theta^{t-1} \rangle + K'.    
\end{equation}
Rearranging yields:
\begin{equation}
\langle \nabla_\theta \ell(\thetaS^{t-1}, \vec{z}_m) - c \,\GradV(\thetaS^{t-1}), \delta\theta^{t-1} \rangle = K'.    
\end{equation}
Since the mean of the loss trajectory will be smaller compared to the variance of the terms (losses eventually reduce to $0$), it is reasonable to assume $K' \approx 0$. \\
Thus, for $t=1,\dots,T$:
\begin{equation}
\langle \nabla_\theta \ell(\thetaS^{t-1}, \vec{z}_m) - c \,\GradV(\thetaS^{t-1}), \delta\theta^{t-1} \rangle = 0.
\end{equation}
Let $\vec{w}_{t-1} \coloneqq \nabla_\theta \ell(\thetaS^{t-1}, \vec{z}_m) - c \,\GradV(\thetaS^{t-1})$. \\
The vector $\vec{w}_{t-1}$ is exactly orthogonal to the update direction $\delta\theta^{t-1}$ at each step $t$.

Now, invoke the assumption that the sequence of updates $\{\delta\theta^{t-1}\}_{t=1}^{T}$ is sufficiently varied. \\
This means the updates are not persistently orthogonal to any fixed non-zero direction $\vec{w}_{t-1}$. \\
If $\vec{w}_{t-1}$ were non-zero, the variation in updates would eventually yield a $\delta\theta^{t-1}$ such that $\langle \vec{w}_{t-1}, \delta\theta^{t-1} \rangle \neq 0$. \\
Since the inner product is exactly zero for all $t$ in our idealized case, the only possibility consistent with the sufficient variation assumption is that $\vec{w}_{t-1}$ must be the zero vector. Thus:
\begin{equation}
\vec{w}_{t-1} = \nabla_\theta \ell(\thetaS^{t-1}, \vec{z}_m) - c \,\GradV(\thetaS^{t-1}) = \vec{0}.
\end{equation}
This signifies that the sample gradient is exactly a positive scalar multiple ($c>0$) of the validation gradient:
\begin{equation}
\nabla_\theta \ell(\thetaS^{t-1}, \vec{z}_m) = c \,\GradV(\thetaS^{t-1}).    
\end{equation}
Consequently, the vectors are perfectly collinear and point in the same direction (assuming $\GradV(\thetaS^{t-1}) \neq 0$). The angle $\gamma_{t-1}$ between them is exactly $0$. Therefore, in this idealized case:
\begin{equation}
\cos(\gamma_{t-1}) = \cos\left( \angle\left( \nabla_{\theta} \ell(\thetaS^{t-1}, \vec{z}_m), \GradV(\thetaS^{t-1}) \right) \right) = 1.    
\end{equation}
This derivation holds under the ideal conditions ($\epsilon=0$, exact Taylor approx., $K'=0$). \\
Since the involved operations are continuous, when the conditions are only approximately met (i.e., $\rho \ge 1-\epsilon$ with $\epsilon \to 0$, Taylor approx. is good, $K'$ is small), the resulting cosine similarity will be close to $1$. \\
We express this conclusion as 
\begin{equation}
\cos(\gamma_{t-1}) \ge 1 - \epsilon'_{t-1},    
\end{equation}
where the error $\epsilon'_{t-1} \to 0$ as $\epsilon \to 0$. \\
Assuming this alignment holds for most steps $t$ (implying alignment at step $t$ relies on properties at $t-1$), the lemma statement follows.
\end{proof}

\begin{remark}[On Update Sequence Variation]
\label{rem:update_variation}
The assumption regarding the update sequence $\{\delta\theta^{t-1}\}$ is that it exhibits enough variation over the trajectory to ensure that no fixed non-zero vector can remain orthogonal to all updates. This property is weaker than requiring the updates to span the entire parameter space, but it is sufficient for the argument. It essentially prevents the gradient difference vector from \textit{hiding} in a direction that the optimization process never explores. Stochastic optimization methods accumulating updates over many iterations (large $T$) are often expected to satisfy this sufficient variation condition. 
\end{remark}

\medskip
\begin{lemma}[Stability of Gradient Alignment]
\label{lem:alignment_stability}
Suppose the conditions in \cref{thm:cld_coreset_convergence} hold: specifically, $L$-smoothness and bounded gradients ($\norm{\nabla_{\theta} \ell(\theta, \vec{z})} \leq B$ for all $\vec{z}$).
Consider a coreset $\rmC$ constructed by selecting samples with high $\CLD$ scores:
\begin{equation}
\rmC = \left\{ \vec{z}_m \in \rmS : \CLD(\vec{z}_m) \ge 1 - \epsilon \right\},
\quad \text{with} \quad |\rmC| = k \; , \; \epsilon > 0, \; \epsilon \to 0.
\end{equation}
Assume that during training, the difference between the parameter trajectories satisfies $\norm{\thetaC^{t} - \thetaS^{t}} = \norm{\delta_t}$ at step $t$.

Then, for each sample $\vec{z}_m \in \rmC$, the cosine similarity between its gradient and the average validation gradient at step $t$ is lower bounded by
\begin{equation}
\label{eq:lem:alignment_stability_result}
\cos\left( \angle\left( \nabla_{\theta} \ell(\thetaC^t, \vec{z}_m), \GradV(\thetaC^t) \right) \right) 
\ge 1 - \kappa,
\end{equation}
where 
$
\kappa = \epsilon'_t + \frac{4L}{B} \norm{\delta_t} + \frac{3L^2}{B^2} \norm{\delta_t}^2,
$ and $\epsilon'_t \to 0$ as $\epsilon \to 0$.
\end{lemma}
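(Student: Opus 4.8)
The plan is to transfer the gradient alignment guaranteed by \cref{lem:high_cld_grad_alignment} from the full-data trajectory $\thetaS^t$ to the coreset trajectory $\thetaC^t$, paying only for the trajectory drift $\norm{\delta_t}$ via $L$-smoothness. First I would observe that since every $\vec{z}_m \in \rmC$ satisfies $\CLD(\vec{z}_m) \ge 1-\epsilon$, \cref{lem:high_cld_grad_alignment} already gives alignment at $\thetaS^t$, namely $\cos(\angle(\nabla_\theta \ell(\thetaS^t, \vec{z}_m), \GradV(\thetaS^t))) \ge 1 - \epsilon'_t$. Writing $\va \coloneqq \nabla_\theta \ell(\thetaS^t, \vec{z}_m)$, $\vb \coloneqq \GradV(\thetaS^t)$ and their coreset counterparts $\va' \coloneqq \nabla_\theta \ell(\thetaC^t, \vec{z}_m)$, $\vb' \coloneqq \GradV(\thetaC^t)$, this reads $\langle \va, \vb\rangle \ge (1-\epsilon'_t)\norm{\va}\norm{\vb}$.

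Next I would invoke $L$-smoothness to bound the gradient drift between the two trajectories. Since $\ell(\cdot, \vec{z}_m)$ is $L$-smooth, $\norm{\va' - \va} \le L\norm{\thetaC^t - \thetaS^t} = L\norm{\delta_t}$; averaging the same estimate over the $Q$ validation points (the mean of $L$-smooth functions is again $L$-smooth) gives $\norm{\vb' - \vb} \le L\norm{\delta_t}$. Setting $\Delta_a \coloneqq \va' - \va$ and $\Delta_b \coloneqq \vb' - \vb$, both perturbations thus have norm at most $L\norm{\delta_t}$.

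The core step is then a perturbation estimate for the cosine similarity $\langle \va', \vb'\rangle / (\norm{\va'}\norm{\vb'})$. Expanding $\langle \va', \vb'\rangle = \langle \va, \vb\rangle + \langle \va, \Delta_b\rangle + \langle \Delta_a, \vb\rangle + \langle \Delta_a, \Delta_b\rangle$ and applying Cauchy--Schwarz with the gradient bound $\norm{\va}, \norm{\vb} \le B$ lower-bounds the numerator by $(1-\epsilon'_t)\norm{\va}\norm{\vb} - 2BL\norm{\delta_t} - L^2\norm{\delta_t}^2$, while the triangle inequality gives $\norm{\va'}\norm{\vb'} \le (\norm{\va} + L\norm{\delta_t})(\norm{\vb} + L\norm{\delta_t})$ for the denominator. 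Treating the gradient magnitudes at their bound $\norm{\va} = \norm{\vb} = B$, dividing through by $B^2$, and writing $u \coloneqq L\norm{\delta_t}/B$ reduces the claim to the scalar inequality $\frac{(1-\epsilon'_t) - 2u - u^2}{(1+u)^2} \ge 1 - \bigl(\epsilon'_t + 4u + 3u^2\bigr)$; cross-multiplying by $(1+u)^2$ and cancelling leaves $2u\epsilon'_t + (9+\epsilon'_t)u^2 + 10u^3 + 3u^4 \ge 0$, which holds term-by-term. Substituting back $u = L\norm{\delta_t}/B$ yields exactly $\kappa = \epsilon'_t + \frac{4L}{B}\norm{\delta_t} + \frac{3L^2}{B^2}\norm{\delta_t}^2$, and $\kappa \to 0$ as $\epsilon \to 0$ and $\norm{\delta_t} \to 0$.

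The main obstacle is the cosine-similarity perturbation itself: a genuine relative perturbation bound needs a \emph{lower} bound on $\norm{\va}$ and $\norm{\vb}$, whereas the assumptions supply only the upper bound $B$. The clean form with $B$ in the denominator arises from treating the gradient norms as saturated at $B$ (their worst case for the denominator); carefully justifying this normalization, or alternatively carrying the exact norms through and absorbing the discrepancy into $\epsilon'_t$, is the delicate point. Everything else, namely the smoothness drift bounds and the final scalar inequality, is routine. Note also that the drift $\norm{\delta_t}$ is taken as given in the hypothesis, so no separate trajectory-stability argument is required here.
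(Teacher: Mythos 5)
Your proof follows essentially the same route as the paper's: gradient alignment at $\thetaS^t$ via \cref{lem:high_cld_grad_alignment}, $L$-smoothness drift bounds of size $L\norm{\delta_t}$ on both the sample and validation gradients, a four-term expansion of the perturbed inner product controlled by Cauchy--Schwarz and the bound $B$, and the denominator bound $(B+L\norm{\delta_t})^2$; your explicit scalar verification that $\frac{(1-\epsilon'_t)-2u-u^2}{(1+u)^2}\ge 1-(\epsilon'_t+4u+3u^2)$ with $u=L\norm{\delta_t}/B$ is correct and supplies the final algebraic step the paper leaves implicit. The normalization caveat you raise, namely that the ratio bound effectively treats $\norm{\nabla_\theta \ell(\thetaS^t,\vec z_m)}$ and $\norm{\GradV(\thetaS^t)}$ as saturated at $B$ even though the assumptions provide only an upper bound, is a genuine gap, but it is equally present in the paper's own proof, so your proposal matches the paper both in approach and in level of rigor.
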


\revstart
\begin{proof}[Proof outline.]
Compare gradients at $\thetaC^t$ and $\thetaS^t$ using $L$-smoothness to bound deviations by $O(\|\delta_t\|)$.
Expand the inner product and bound cross-terms via Cauchy--Schwarz and the gradient-norm bound $B$.
Plug the base alignment from Lemma~\ref{lem:high_cld_grad_alignment} at $\thetaS^t$ to transfer alignment to $\thetaC^t$, yielding the stated $1-\kappa$ lower bound with linear/quadratic dependence on $\|\delta_t\|$.
\end{proof}
\revend

\begin{proof}
Let $\omega_m = \nabla_{\theta} \ell(\thetaC^t, \vec{z}_m) - \nabla_{\theta} \ell(\thetaS^t, \vec{z}_m)$ and $\omega_\rmV = \GradV(\thetaC^t) - \GradV(\thetaS^t)$ denote the deviations between gradients evaluated on the coreset trajectory and the full dataset trajectory.

By $L$-smoothness of $\ell(\cdot, \vec{z}_m)$ and of the validation-average loss $\hat R_{\rmV}(\theta) \coloneqq \frac{1}{Q}\sum_{j=1}^Q \ell(\theta,\vec q_j)$, we have:
\begin{align}
    \norm{\omega_m} &\leq L \norm{\delta_t}, \\
    \norm{\omega_\rmV} &\leq L \norm{\delta_t}.
\end{align}

Expanding the inner product:
\begin{align}
    \langle \nabla_{\theta} \ell(\thetaC^t, \vec{z}_m), \GradV(\thetaC^t) \rangle
    &= \langle \nabla_{\theta} \ell(\thetaS^t, \vec{z}_m) + \omega_m, \GradV(\thetaS^t) + \omega_\rmV \rangle \\
    &= \langle \nabla_{\theta} \ell(\thetaS^t, \vec{z}_m), \GradV(\thetaS^t) \rangle 
    + \langle \omega_m, \GradV(\thetaS^t) \rangle 
    + \langle \nabla_{\theta} \ell(\thetaS^t, \vec{z}_m), \omega_\rmV \rangle 
    + \langle \omega_m, \omega_\rmV \rangle.
\end{align}

Applying Cauchy--Schwarz inequality and the bounded gradient norm $\norm{\nabla_{\theta} \ell(\theta, \vec{z})} \leq B$, we have:
\begin{align}
    \langle \omega_m, \GradV(\thetaS^t) \rangle &\geq -\norm{\omega_m} \norm{\GradV(\thetaS^t)} \geq -LB \norm{\delta_t}, \\
    \langle \nabla_{\theta} \ell(\thetaS^t, \vec{z}_m), \omega_\rmV \rangle &\geq -\norm{\nabla_{\theta} \ell(\thetaS^t, \vec{z}_m)} \norm{\omega_\rmV} \geq -LB \norm{\delta_t}, \\
    \langle \omega_m, \omega_\rmV \rangle &\geq -\norm{\omega_m} \norm{\omega_\rmV} \geq -L^2 \norm{\delta_t}^2.
\end{align}

Thus,
\begin{equation}
\langle \nabla_{\theta} \ell(\thetaC^t, \vec{z}_m), \GradV(\thetaC^t) \rangle 
\geq 
\langle \nabla_{\theta} \ell(\thetaS^t, \vec{z}_m), \GradV(\thetaS^t) \rangle - 2LB \norm{\delta_t} - L^2 \norm{\delta_t}^2.
\end{equation}

The denominator is upper bounded by:
\begin{equation}
\norm{\nabla_{\theta} \ell(\thetaC^t, \vec{z}_m)} \norm{\GradV(\thetaC^t)} \leq (B + L \norm{\delta_t})^2.
\end{equation}

Combining this result with \cref{lem:high_cld_grad_alignment}, we can conclude:
\begin{equation}
\cos\left( \angle\left( \nabla_{\theta} \ell(\thetaC^t, \vec{z}_m), \GradV(\thetaC^t) \right) \right)
\geq 1 - \left( \epsilon'_t + \frac{4L}{B} \norm{\delta_t} + \frac{3L^2}{B^2} \norm{\delta_t}^2 \right) = 1 - \kappa,
\end{equation}
where $\epsilon'_t$ captures the initial alignment error when training on $\rmS$.
This completes the proof.
\end{proof}

\begin{remark}
\label{rem:alignment_transfer}
\cref{lem:alignment_stability} shows that if a sample's gradient is well-aligned with the validation gradient during training on the full dataset (i.e., $\epsilon'_t$ is small), then this alignment is preserved when training on a coreset $\rmC$, as long as the parameter trajectories $\thetaS^t$ and $\thetaC^t$ remain close.
The degradation in alignment is bounded by terms that are linear and quadratic in $\norm{\delta_t}$. 
Thus, as long as the coreset trajectory stays near the full dataset trajectory, the generalization-relevant properties captured by $\CLD$ remain stable. 
This stability is crucial for ensuring that $\CLD$-based coresets maintain the training dynamics of the full dataset.
\end{remark}

\begin{remark}[Influence of Coreset Size on $\kappa$]
\label{rem:coreset_size_kappa_influence}
It is important to explicitly consider how the coreset size $k = |\rmC|$ (as specified in \cref{lem:alignment_stability}) influences the components of $\kappa = \epsilon'_t + \frac{4L}{B} \norm{\delta_t} + \frac{3L^2}{B^2} \norm{\delta_t}^2$.
\begin{itemize}
    \item The term $\epsilon'_t$, representing the initial alignment error derived from \cref{lem:high_cld_grad_alignment}, is affected by $k$. A smaller coreset size $k$ allows for a more stringent selection criterion for samples based on their $\CLD$ scores. Specifically, one can choose only samples with $\CLD(\vec{z}_m)$ very close to $1$, which corresponds to a smaller $\epsilon$ in the selection rule $\CLD(\vec{z}_m) \ge 1 - \epsilon$ (from \cref{thm:cld_coreset_convergence} and \cref{lem:alignment_stability}). A smaller $\epsilon$ naturally leads to a smaller $\epsilon'_t$.
    \item Conversely, the terms in $\kappa$ that depend on $\norm{\delta_t} = \norm{\thetaC^t - \thetaS^t}$ (the deviation between coreset and full-data parameter trajectories) are also influenced by $k$. While a smaller $k$ allows for higher individual sample quality, a very small $k$ might result in a coreset that is less representative of the full dataset $\rmS$. This reduced representativeness can lead to a larger divergence $\norm{\delta_t}$ during training, as the optimization trajectory on the small coreset may differ more substantially from that on the full data. An increase in $\norm{\delta_t}$ would, in turn, increase the overall value of $\kappa$.
\end{itemize}
Therefore, the selection of an appropriate coreset size $k$ involves an inherent trade-off. A smaller $k$ can be beneficial for the $\epsilon'_t$ component of $\kappa$ by enabling the selection of higher-quality samples. However, if $k$ is too small, it could adversely affect the components of $\kappa$ related to $\norm{\delta_t}$ by making the coreset insufficiently representative. The stability discussed in \cref{rem:alignment_transfer} relies on $\norm{\delta_t}$ remaining small, highlighting the importance of $k$ being chosen to adequately approximate the full dataset's training dynamics while leveraging the benefits of high $\CLD$ scores.
\end{remark}

\medskip
\begin{lemma}[Subset-Gradient Approximation]
\label{lem:subset_grad_approx}
Suppose the conditions in \cref{thm:cld_coreset_convergence} hold, including $L$-smoothness, bounded gradients, and validation representativeness as described in \cref{sec:theory}.

Define the average coreset gradient at step $t$ as
\begin{equation}
\gammatc = \frac{1}{|\rmC|} \sum_{m \in \rmC} \nabla_{\theta} \ell(\thetaC^t, \vec{z}_m).
\end{equation}

Then, for every training step $t$, we have
\begin{equation}
\label{eq:lem:subset_grad_approx_result}
\norm{ \gammatc - \nabla_\theta R_{\rmD}(\thetaC^t) } \leq B \sqrt{2\kappa} + \delta,
\end{equation}
where $\kappa$ captures the alignment error and satisfies $\kappa \to 0$ as $\epsilon \to 0$.
\end{lemma}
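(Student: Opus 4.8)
The plan is to bound the target quantity by inserting the validation-average gradient $\GradV(\thetaC^t)$ as an intermediary and applying the triangle inequality:
\begin{equation}
\norm{\gammatc - \nabla_\theta R_{\rmD}(\thetaC^t)} \;\le\; \norm{\gammatc - \GradV(\thetaC^t)} \;+\; \norm{\GradV(\thetaC^t) - \nabla_\theta R_{\rmD}(\thetaC^t)}.
\end{equation}
The second term is dispatched immediately by \cref{ass:validation_representativeness}, which bounds the deviation of the validation gradient from the true risk gradient at the iterate $\thetaC^t$ by $\delta$ (read class-conditionally, as in the remark preceding the theorem). It then remains to show the first term is at most $B\sqrt{2\kappa}$, after which the two pieces combine to give the stated bound.

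For the first term I would argue sample-by-sample and then aggregate. By \cref{lem:alignment_stability}, every $\vec z_m \in \rmC$ satisfies $\cos(\angle(\nabla_\theta \ell(\thetaC^t,\vec z_m), \GradV(\thetaC^t))) \ge 1-\kappa$. Converting this cosine bound into a Euclidean distance is the crux: writing $\hat g_m$ and $\hat G$ for the unit vectors along $\nabla_\theta \ell(\thetaC^t,\vec z_m)$ and $\GradV(\thetaC^t)$, the identity $\norm{\hat g_m - \hat G}^2 = 2(1 - \langle \hat g_m,\hat G\rangle) \le 2\kappa$ yields $\norm{\hat g_m - \hat G} \le \sqrt{2\kappa}$. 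Equivalently, decomposing each sample gradient into components parallel and orthogonal to $\GradV(\thetaC^t)$, the orthogonal residual is bounded by $\norm{\nabla_\theta \ell(\thetaC^t,\vec z_m)}\sqrt{2\kappa - \kappa^2} \le B\sqrt{2\kappa}$ using the norm cap from \cref{ass:bounded_grads}. Averaging over $\rmC$ and using convexity of the norm (the triangle inequality),
\begin{equation}
\norm{\gammatc - \GradV(\thetaC^t)} \;\le\; \frac{1}{|\rmC|}\sum_{m\in\rmC}\norm{\nabla_\theta \ell(\thetaC^t,\vec z_m) - \GradV(\thetaC^t)} \;\le\; B\sqrt{2\kappa},
\end{equation}
which closes the argument.

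The step I expect to be the main obstacle is the cosine-to-distance conversion carried out so as to land on exactly the constant $B\sqrt{2\kappa}$. The alignment guarantee of \cref{lem:alignment_stability} controls only the \emph{direction} of each sample gradient relative to $\GradV(\thetaC^t)$, not its magnitude, so a naive bound on $\norm{\nabla_\theta \ell - \GradV}$ would also pick up the norm mismatch $\bigl|\,\norm{\nabla_\theta \ell} - \norm{\GradV}\,\bigr|$. To obtain the clean $B\sqrt{2\kappa}$ I would bound only the component of each sample gradient orthogonal to $\GradV(\thetaC^t)$ (which the cosine bound and the cap $B$ genuinely control) and treat the shared parallel direction of the averaged gradient as aligned with $\GradV(\thetaC^t)$; this is where the estimate is tightest and where the regime $\kappa \to 0$ (equivalently $\epsilon \to 0$) is invoked to make the residual vanish. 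Since both the alignment and representativeness bounds hold at every iterate and $\kappa$, $\delta$ enter additively, the per-step bound \eqref{eq:lem:subset_grad_approx_result} then holds uniformly in $t$, as required.
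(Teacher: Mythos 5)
Your proposal matches the paper's proof essentially step for step: the same triangle-inequality decomposition through $\GradV(\thetaC^t)$, the same appeal to \cref{ass:validation_representativeness} for the $\delta$ term, and the same conversion of the cosine bound from \cref{lem:alignment_stability} into a per-sample distance bound of $B\sqrt{2\kappa}$ before averaging (the paper averages via Jensen on squared norms where you use the plain triangle inequality on norms; with a uniform per-sample bound the two are interchangeable). The one point worth recording is that the difficulty you flag at the cosine-to-distance step is real and is \emph{not} resolved by the paper either: expanding $\norm{\nabla_\theta\ell(\thetaC^t,\vec z_m)-\GradV(\thetaC^t)}^2$ by the law of cosines leaves a residual $\bigl(\norm{\nabla_\theta\ell(\thetaC^t,\vec z_m)}-\norm{\GradV(\thetaC^t)}\bigr)^2$ that the alignment bound does not control, and the paper simply asserts the total is $\le 2B^2\kappa$ (implicitly assuming near-equal magnitudes), while your ``bound only the orthogonal component'' workaround likewise leaves the parallel-component mismatch unaccounted for — so both arguments share the same unclosed step, and neither is more rigorous than the other here.
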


\revstart
\begin{proof}[Proof outline.]
Split the error as $\|\gammatc-\GradV\|+\|\GradV-\nabla R_{\rmD}\|$.
The second term is $\delta$ by validation representativeness.
For the first, average the per-sample deviations and apply Jensen to get a mean of squared distances; combine with Lemma~\ref{lem:alignment_stability} to bound each by $2B^2\kappa$, yielding $\|\gammatc-\GradV\|\le B\sqrt{2\kappa}$.
\end{proof}
\revend

\begin{proof}
We decompose the error using the triangle inequality:
\begin{equation}
\norm{ \gammatc - \nabla_\theta R_{\rmD}(\thetaC^t) }
\leq \norm{ \gammatc - \GradV(\thetaC^t) } + \norm{ \GradV(\thetaC^t) - \nabla_\theta R_{\rmD}(\thetaC^t) }.
\end{equation}

The second term $\norm{ \GradV(\thetaC^t) - \nabla_\theta R_{\rmD}(\thetaC^t) }$ is bounded by $\delta$ by the validation representativeness assumption.

To bound the first term $\norm{ \gammatc - \GradV(\thetaC^t) }$, we apply Jensen's inequality:
\begin{align}
\norm{ \gammatc - \GradV(\thetaC^t) }^2
&= \left\| \frac{1}{|\rmC|} \sum_{m \in \rmC} \left( \nabla_{\theta} \ell(\thetaC^t, \vec{z}_m) - \GradV(\thetaC^t) \right) \right\|^2 \\
&\leq \frac{1}{|\rmC|} \sum_{m \in \rmC} \norm{ \nabla_{\theta} \ell(\thetaC^t, \vec{z}_m) - \GradV(\thetaC^t) }^2.
\end{align}

Define $\varphi_m^t$ as the angle between $\nabla_\theta \ell(\thetaC^t, \vec{z}_m)$ and $\GradV(\thetaC^t)$. By \cref{lem:alignment_stability}, we have $\cos \varphi_m^t \geq 1 - \kappa$ for all $m$.

Expanding the squared distance:
\begin{align}
\norm{ \nabla_{\theta} \ell(\thetaC^t, \vec{z}_m) - \GradV(\thetaC^t) }^2
&= \norm{ \nabla_{\theta} \ell(\thetaC^t, \vec{z}_m) }^2 + \norm{ \GradV(\thetaC^t) }^2 - 2 \langle \nabla_{\theta} \ell(\thetaC^t, \vec{z}_m), \GradV(\thetaC^t) \rangle \\
&= \norm{ \nabla_{\theta} \ell(\thetaC^t, \vec{z}_m) } \,^2 + \norm{ \GradV(\thetaC^t) }^2 - 2 \norm{ \nabla_{\theta} \ell(\thetaC^t, \vec{z}_m) } \norm{ \GradV(\thetaC^t) } \cos \varphi_m^t.
\end{align}

Since $\norm{ \nabla_{\theta} \ell(\thetaC^t, \vec{z}_m) }, \norm{ \GradV(\thetaC^t) } \leq B$ and $\cos \varphi_m^t \geq 1 - \kappa$, we have
\begin{equation}
\norm{ \nabla_{\theta} \ell(\thetaC^t, \vec{z}_m) - \GradV(\thetaC^t) }^2 \leq 2B^2 \kappa.
\end{equation}

Substituting back into the Jensen bound,
\begin{equation}
\norm{ \gammatc - \GradV(\thetaC^t) }^2 \leq 2B^2 \kappa.
\end{equation}
Taking square roots gives
\begin{equation}
\norm{ \gammatc - \GradV(\thetaC^t) } \leq B \sqrt{2\kappa}.
\end{equation}

Thus, combining the two bounds,
\begin{equation}
\norm{ \gammatc - \nabla_\theta R_{\rmD}(\thetaC^t) } \leq B \sqrt{2\kappa} + \delta,
\end{equation}
as claimed.
\end{proof}

\begin{remark}
\label{remark:subset_gradient}
\Cref{lem:subset_grad_approx} shows that under mild conditions, the average gradient computed over a coreset selected based on $\CLD$ remains close to the true risk gradient throughout training.
The deviation is controlled by two sources: the alignment error $\kappa$ arising from the selection of high-$\CLD$ samples, and the validation approximation error $\delta$ due to finite sample size.
Consequently, optimization over $\CLD$-coresets closely tracks the gradient flow of the full dataset, ensuring that convergence and generalization properties are preserved. 
This result is crucial for connecting loss trajectory dynamics with practical coreset construction.
\end{remark}

\medskip
\subsection{Proof for \texorpdfstring{\cref{thm:cld_coreset_convergence}}{Theorem cld coreset convergence}}
\label{appendix:Theory_proof_for_theorem}

\revstart
\begin{proof}[Proof outline.]
Apply the $L$-smooth descent lemma to $R_\rmD(\theta)$ with the update $\thetaC^{t+1}=\thetaC^t-\eta\,\gammatc$ to obtain a one-step inequality involving $\langle \nabla R_{\rmD},\gammatc\rangle$.
Decompose $\gammatc= \nabla R_{\rmD} + (\gammatc-\nabla R_{\rmD})$ and bound the error term using Lemma~\ref{lem:subset_grad_approx}: $E_t\!=\!\|\gammatc-\nabla R_{\rmD}\|\!\le\! B\sqrt{2\kappa}+\delta$.
Control the mixed term by Cauchy--Schwarz + Young; bound $\|\gammatc\|$ by $B$.
Sum over $t$ to telescope $R_t-R_{t+1}$ and isolate $\min_t\|\nabla R_{\rmD}(\thetaC^t)\|^2$, yielding the stated bound with $(B\sqrt{2\kappa}+\delta)^2$ and an $L\eta B^2$ residual.
\end{proof}
\revend

\begin{proof}
Define
\begin{equation}
R_t \coloneqq R_\rmD(\thetaC^t), \qquad G_t \coloneqq \nabla_\theta R_\rmD(\thetaC^t), \qquad \gammatc \coloneqq \frac{1}{|\rmC|}\sum_{m \in \rmC} \nabla_\theta \ell(\thetaC^t, \vec{z}_m).
\end{equation}
By $L$-smoothness of $\ell(\cdot)$, and in extension $R_\rmD(\cdot)$, and $\eta\le 1/L$,
\begin{equation}
R_{t+1}\le R_t+\langle G_t,\thetaC^{t+1}-\thetaC^t\rangle + \frac{L}{2}\norm{\thetaC^{t+1}-\thetaC^t}^2 .
\end{equation}
Substituting the model update $\thetaC^{t+1}-\thetaC^t = -\eta \gammatc$:
\begin{equation}
R_{t+1}\le R_t-\eta\langle G_t,\gammatc\rangle +\frac{L\eta^2}{2}\norm{\gammatc}^2.
\end{equation}
Rearranging,
\begin{equation}
\eta\langle G_t,\gammatc\rangle\le R_t-R_{t+1}+\frac{L\eta^2}{2}\norm{\gammatc}^2.
\end{equation}
Decomposing the inner product using the true gradient $G_t$ we get,
\begin{equation}
\langle G_t,\gammatc\rangle = \langle G_t,\,G_t + (\gammatc - G_t)\rangle = \norm{G_t}^2 + \langle G_t,\gammatc - G_t\rangle.
\end{equation}
Substituting this back,
\begin{equation}
\label{eq:thm:before_Et}
\eta\norm{G_t}^2 \le R_t-R_{t+1}-\eta\langle G_t,\gammatc-G_t\rangle + \frac{L\eta^2}{2}\norm{\gammatc}^2.   
\end{equation}
Let $E_t = \norm{\gammatc - G_t}$. \\
Lemma~\ref{lem:subset_grad_approx} under the stated assumptions gives the bound $E_t \le B\sqrt{2\kappa}+\delta$. \\
By Cauchy--Schwarz inequality,
\begin{equation}
-\eta\langle G_t, \gammatc - G_t\rangle \;\le\; \eta \norm{G_t} \norm{\gammatc - G_t} = \eta \norm{G_t} E_t.
\end{equation}
Young's inequality states that $ab \le a^2/(2\gamma) + \gamma b^2 / 2, \quad \forall a,b \ge 0 \text{ and } \gamma > 0$. \\
By using this inequality with $\gamma = 1$,
\begin{equation}
-\eta\langle G_t, \gammatc - G_t\rangle \;\le\; \frac{\eta}{2}\norm{G_t}^{2}+ \frac{\eta}{2}E_t^{2}.
\end{equation}
Substituting this into the inequality in \cref{eq:thm:before_Et},
\begin{equation}
\eta\norm{G_t}^2 \le R_t-R_{t+1} + \tfrac{\eta}{2}\,\norm{G_t}^{2} + \tfrac{\eta}{2}\,E_t^{2} + \frac{L\eta^2}{2}\norm{\gammatc}^2.
\end{equation}
Since all gradients are bounded by $B$, their average $\norm{\gammatc}$ is also bounded by $B$.
\begin{equation}
\therefore\frac{\eta}{2}\,\norm{G_t}^{2} \;\le\; R_t - R_{t+1} + \frac{\eta}{2}\,E_t^{2} + \frac{L\eta^{2}}{2}\,B^{2}. 
\end{equation}
Summing from $t = 0$ to $T-1$:
\begin{equation}
\frac{\eta}{2}\sum_{t=0}^{T-1}\norm{G_t}^{2} \;\le\; (R_0 - R_{T}) + \sum_{t=0}^{T-1} \left( \frac{\eta}{2}\,E_t^{2} + \frac{L\eta^{2}}{2}\,B^{2} \right).
\end{equation}
Let $R_{\inf} = \inf_{\theta} R_\rmD(\theta)$. Then $R_0 - R_{T} \le R_0 - R_{\inf}$. Substituting the bound $E_t \le B\sqrt{2\kappa}+\delta$:
\begin{equation}
\frac{\eta}{2}\sum_{t=0}^{T-1}\norm{G_t}^{2} \;\le\; R_\rmD(\thetaC^0) - R_{\inf} + T \frac{\eta}{2}\,(B\sqrt{2\kappa}+\delta)^{2} + T\frac{L\eta^{2}}{2}\,B^{2}.
\end{equation}
The sum on the left is lower bounded by $T$ times the minimum term, i.e., $\sum_{t=0}^{T-1}\norm{G_t}^2 \ge T \cdot \min_{0 \le t < T}\norm{G_t}^2$.
\begin{equation}
\therefore \frac{\eta T}{2}\min_{0 \le t < T}\norm{G_t}^{2} \;\le\; R_\rmD(\thetaC^0) - R_{\inf} + T\frac{\eta}{2}\,(B\sqrt{2\kappa}+\delta)^{2} + T\frac{L\eta^{2}}{2}\,B^{2}. 
\end{equation}
Since $\eta, T > 0$,
\begin{equation}
\min_{0\le t<T}\norm{G_t}^2\le \frac{2\bigl[R_\rmD(\theta^0_C)-R_{\inf}\bigr]}{\eta T} + (B\sqrt{2\kappa}+\delta)^2 + L\eta B^2 .
\end{equation}
This proves the theorem.
\end{proof}

\section{Datasets, Models, and Experimental Details}
\label{appendix:coreset_gen}
We evaluate our method on two standard image classification benchmarks: CIFAR-100~\citep{cifar} and ImageNet-1k~\citep{imagenet}. CIFAR-100 consists of 50,000 training and 10,000 test images across 100 classes and is publicly available without licensing restrictions. For ImageNet-1k, we use the official release from \url{https://image-net.org/download.php}, which is provided under a standard academic research license and requires user agreement to the terms of access.

\paragraph{Architectures.} Our experiments employ the following CNN backbones:
\begin{itemize}
    \item \textbf{ResNet-18, ResNet-34, and ResNet-50}~\cite{resnet} from \url{https://pytorch.org/vision/stable/models/resnet.html} (BSD-3-Clause license).
    \item \textbf{VGG-19 with batch normalization}~\cite{vgg} from \url{https://pytorch.org/vision/stable/models/vgg.html} (BSD-3-Clause license).
    \item \textbf{DenseNet-121}~\cite{densenet} from \url{https://pytorch.org/vision/stable/models/densenet.html} (BSD-3-Clause license).
\end{itemize}

For baseline comparisons, we use the \texttt{DeepCore} library~\cite{deepcore} (\url{https://github.com/PatrickZH/DeepCore}), which provides standardized implementations of several coreset selection techniques and is licensed under MIT. All training and evaluation code was implemented in PyTorch; dependencies including torchvision are MIT/BSD licensed.

\paragraph{Training Setup for CIFAR-100.} All networks were trained using SGD \cite{sgd} for $164$ epochs with an initial learning rate of $0.1$, decayed by a factor of $0.1$ at epochs $81$ and $121$. Nesterov momentum~\cite{nesterov} with momentum $0.9$ was used, along with weight decay $5 \times 10^{-4}$. Standard augmentations included resizing to $32 \times 32$, random cropping with padding $= 4$, random horizontal flips, and normalization.

\paragraph{Training Setup for ImageNet-1k.} Training followed standard ImageNet protocols: all models were trained with SGD for 90 epochs, with a learning rate of $0.1$ decayed by $0.1$ at epochs 30 and 60. Nesterov momentum with coefficient $0.9$ and weight decay $10^{-4}$ were used. Data augmentations included random resized cropping to $224 \times 224$, horizontal flipping, and normalization.

\paragraph{Reproducibility.} No fine-tuning or additional regularization was applied to any method, including ours, ensuring fairness in coreset comparisons. 
All methods used the same validation split as the validation proxy, and the same train split as the full training set available for each seed when scoring training samples. 
Each experiment was repeated across 5 independent runs with distinct seeds; reported results reflect the mean and standard deviation.

All experiments were conducted on a private compute cluster with access to NVIDIA A40 GPUs (48 GB memory, 300W TDP). All training and evaluation runs were performed in full precision using PyTorch.

\paragraph{Results.} Performance results for CIFAR-100 and ImageNet-1k coreset experiments are shown in Tables \ref{tab:cifar100_scorebased}, \ref{tab:cifar100_opt_tda_based}, \ref{tab:imgnet_scorebased}, and \ref{tab:imgnet_opt_tda_based}. 
Cross-architecture transferability results for $\mathtt{CLD}$ coresets, as discussed in \cref{sec:experiments}, are shown in \cref{tab:transferability}.

\begin{table}[!htbp]
\centering
\caption{Performance (top-1 accuracy) of score-based coreset methods on CIFAR100 trainsplit. The coresets were selected and finetuned on ResNet-18. The full trainset performance was $\mathbf{70.95 \pm 0.68}$. The mean accuracy over 5 runs, along with their standard deviation, is reported.}
\label{tab:cifar100_scorebased}
\renewcommand{\arraystretch}{1.3} 

\end{table*}

\newpage
\section{Additional Ablations}
\label{app:more_ablations}

\subsection{Stability across random seeds}
\label{app:seed_stability}
We measure sensitivity to random initialization by computing per-example $\CLD$ scores across five independent seeds on ImageNet-1k (ResNet-18). 
The pairwise mean absolute error (MAE) between score vectors is consistently below $10^{-5}$, indicating negligible variance and high reproducibility; see \cref{fig:cld_seed_mae}.

\begin{figure}[ht]
  \centering
  \begin{subfigure}[b]{0.42\textwidth}
    \centering
    \includegraphics[width=0.95\linewidth]{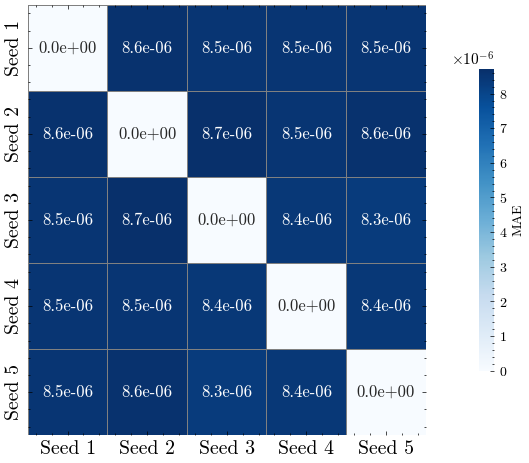}
    \caption{\textbf{Seed reproducibility.} Pairwise MAE of \texttt{CLD} scores across 5 seeds (lower is better).}
    \label{fig:cld_seed_mae}
  \end{subfigure}
  \hfill
  \begin{subfigure}[b]{0.56\textwidth}
    \centering
    \includegraphics[width=\linewidth]{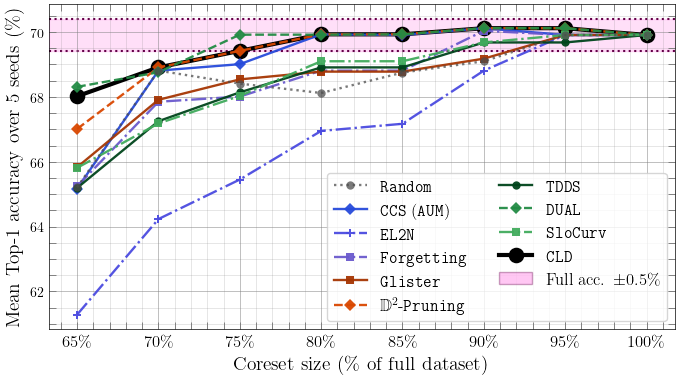}
    \caption{\textbf{Subset size vs.\ accuracy.} ImageNet-1k, ResNet-18. The shaded band marks the $0.5\%$ tolerance below full-data accuracy.}
    \label{fig:fullsubset}
  \end{subfigure}
  \caption{Seed-wise stability and subset-size trade-offs on ImageNet-1k.}
  \label{fig:additional_ablations}
  \vspace{-3mm}
\end{figure}

\subsection{Minimum subset size for full-data accuracy}
\label{app:fullsubset}
We quantify the subset fraction required to recover near full-data performance on ImageNet-1k (ResNet-18). 
As shown in \cref{fig:fullsubset}, $\CLD$ attains test accuracy within $0.5\%$ of the full-data model using only $75\%$ of the training set, on par with $\mathbb{D}^2$-$\mathtt{Pruning}$ and $\mathtt{DUAL}$, and superior to other baselines we evaluated.

\section{Detailed Explanation of Compute and Storage Cost of Coreset Methodologies}
\label{appendix:details_overheads_coresets}

\paragraph{Recap of Notation}
We denote the number of training samples by $N$ ($\rmS \sim \rmD^N$) and the number of query (held-out validation) samples by $Q$ ($\rmV \sim \rmD^Q$).
The model is trained for $T$ epochs. Certain TDA metrics have hyperparameters (denoted $\lambda_\tau$) used to compute the TDA metric $\tau$, which may influence computational cost.

We measure computation in floating-point operations (FLOPs). Let $\flarge$ be the cost of a \emph{single-example} forward pass for the large model with $\plarge$ parameters, and approximate the backward-pass cost as $2\flarge$.
When a proxy (smaller) model is used, we write its per-example forward cost and parameter count as $f$ and $p$, with $f \ll \flarge$ and $p \ll \plarge$.

$R$ is the number of model retrainings (when applicable). $k$ is the coreset size; $d$ is the feature dimensionality; and $B$ is the minibatch size (used during training but not appearing in per-example FLOP counts).
Some methods perform subset \emph{reselection} during training; when used, we denote the reselection interval (in epochs) by $\gamma$.

\paragraph{Reference: full-data training (large model)}
Training the large model on all $N$ points for $T$ epochs costs $3NT\,\flarge$ FLOPs and stores $\plarge$ parameters (ignoring optimizer state).
Totals below are the end-to-end cost to (i) select a coreset of size $k$ and (ii) train the large model on that coreset.

\paragraph{Example scenario (used for all plug-in estimates).}
To further illustrate the computational efficiency of $\mathtt{CLD}$, we provide approximate cost values by substituting the values of the parameters for finding and training a $10\%$ coreset on the ImageNet-1k dataset. 
\begin{itemize}[nosep]
    \item $N{=}1{,}268{,}355$ (99\% of train), $Q{=}12{,}812$ (remaining 1\%), $d{=}224{\times}224{\times}3$, $c{=}1000$.
    \item Size of coreset $k{=}126{,}836$.
    \item Proxy encoder: ResNet-18 with $p{=}11{,}689{,}128$ and per-example forward FLOPs $f{=}1{,}818{,}228{,}160$.
    \item Large model: ResNet-50 with $\plarge{=}25{,}557{,}032$ and $\flarge{\approx}8{,}178{,}000{,}000$.
    \item We use the standard ImageNet recipe of $T{=}T_{\text{proxy}}{=}90$ epochs \citep{bearpaw_github}.
    \item When a method has additional parameters, we use the paper’s choice (e.g., in \texttt{Glister}, $\gamma{=}20$).
\end{itemize}

\subsection{Score-based Methods}
\subsubsection{Kernel Herding (\texttt{Herding})}
\texttt{Herding}~\citep{herding_chen2010} iteratively constructs a representative subset by approximating the data distribution in an RKHS. At iteration $t$, it selects
\[
\vec{z}^{*t} \;=\; \argmax_{\vec{z} \in \rmS}\; \langle w_{t-1},\, \phi(\vec{z}) \rangle,
\]
where $\phi(\cdot)$ is the kernel feature map and $w_{t-1}$ is an RKHS weight vector. Repeating for $k$ iterations yields a size-$k$ coreset.

\textbf{Execution.} One-time selection prior to training the large model (a proxy encoder is used to obtain features):
\begin{enumerate}[label={\roman*)}, nosep]
\item \textit{Train proxy encoder:} train a proxy model for $T_{\text{proxy}}$ epochs (forward cost $f$, parameters $p$).
\item \textit{Encode full dataset:} extract features for all $N$ points using the trained proxy ($Nf$ FLOPs).
\item \textit{Herding selection:} for $t{=}1{:}k$, update candidate scores and select $\vec{z}^{*t}$ using inner products in feature space (explicit features of dim.\ $d$ give $\mathcal{O}(N d)$ per iteration, i.e., $\mathcal{O}(N d k)$ total).
\item \textit{Train on coreset:} train the large model on the selected $k$ points for $T_{\text{late}}$ epochs (here $T_{\text{late}}{=}T$).
\end{enumerate}

\textbf{End-to-end compute (selection + coreset training).}
\[
\boxed{%
\text{Compute}_{\text{\texttt{Herding}}}
=
\underbrace{3N\,T_{\text{proxy}}\,f}_{\text{train proxy}}
\;+\;
\underbrace{N f}_{\text{feature extraction}}
\;+\;
\underbrace{\mathcal{O}(N d k)}_{\text{iterative herding updates}}
\;+\;
\underbrace{3k\,T_{\text{late}}\,\flarge}_{\text{train large on coreset}}
}
\]

\textbf{Selection-stage storage overhead.}
Storing explicit features dominates; caching a $k{\times}k$ Gram among selected points is optional:
\[
\boxed{%
\text{StorageOverhead}_{\text{\texttt{Herding}}}
=\ \mathcal{O}(N d)\ \ (+\ \mathcal{O}(k^2)\ \text{optional Gram})
}
\]

\paragraph{Example scenario values (ImageNet-1k; $T_{\text{late}}{=}90$, $T_{\text{proxy}}{=}90$).}
\begin{align*}
\text{Compute}_{\text{\texttt{Herding}}}\ \text{(PFLOPs only)}
&\approx \underbrace{622.663}_{3N T_{\text{proxy}} f}
\;+\; \underbrace{2.306}_{N f}
\;+\; \underbrace{280.061}_{3k T_{\text{late}}\,\flarge}
\;=\; \mathbf{905.031\ \text{PFLOPs}}, \\[6pt]
& (\text{negligible herding updates}) \\[6pt]
\text{StorageOverhead}_{\text{\texttt{Herding}}} \text{(float32)}
&\approx N d \times 4 = \mathbf{763.692\ \text{GB}}\ (\text{features only}).
\end{align*}

\subsubsection{Example Forgetting (\texttt{Forgetting})}
\texttt{Forgetting}~\citep{forgetting_toneva2018} measures, for each training sample, how many times it transitions from being correctly classified to incorrectly classified during training (``forgetting events''). Examples with higher forgetting counts are ranked as more informative.

\textbf{Execution.} One-time selection prior to training the large model on the coreset. Let $T_{\text{early}}$ be the number of early epochs run on the full dataset to collect forgetting statistics, and $T_{\text{late}} \!=\! T - T_{\text{early}}$ the remaining epochs used to train on the selected coreset:
\begin{enumerate}[label={\roman*)}, nosep]
\item Train on all $N$ points for $T_{\text{early}}$ epochs while tracking, per example, the previous correctness bit and a forgetting counter (constant-time update per visit).
\item Select the top $k$ examples by forgetting count; train the large model on this coreset for $T_{\text{late}}$ epochs.
\end{enumerate}

\textbf{End-to-end compute (selection + coreset training).}
\[
\boxed{%
\text{Compute}_{\text{\texttt{Forgetting}}}
=
\underbrace{3N\,T_{\text{early}}\,\flarge}_{\text{collect forgetting on full data}}
\;+\;
\underbrace{3k\,T_{\text{late}}\,\flarge}_{\text{train large on coreset}}
}
\]

\textbf{Selection-stage storage overhead.}
Streaming the metric requires only one scalar counter (and one correctness bit) per training example:
\[
\boxed{%
\text{StorageOverhead}_{\text{\texttt{Forgetting}}}
=\ \mathcal{O}(N)\ \ (\text{per-sample counter/bit})
}
\]

\paragraph{Example scenario values (ImageNet-1k; $T_{\text{early}}{=}10$, $T_{\text{late}}{=}80$).}
\begin{align*}
\text{Compute}_{\text{\texttt{Forgetting}}}\ \text{(PFLOPs only)}
&\approx \underbrace{311.178}_{3N T_{\text{early}}\,\flarge}
\;+\; \underbrace{248.943}_{3k T_{\text{late}}\,\flarge}
\;=\; \mathbf{560.122\ \text{PFLOPs}},\\[6pt]
\text{StorageOverhead}_{\text{\texttt{Forgetting}}} \text{(float32)}
&\approx N\times 4 = 5{,}073{,}420\ \text{B} = \mathbf{0.005\ \text{GB}}.
\end{align*}

\subsubsection{Area Under Margin (\texttt{AUM})}
\texttt{AUM}~\citep{pleiss2020_aum} scores each training sample by aggregating its \emph{margin} over training (e.g., logit of the true class minus the max non-true logit), producing the \emph{area under the margin} across epochs/updates. Higher absolute AUM indicates more consistently confident predictions; lower AUM can flag ambiguous or noisy samples. We compute AUM with a \emph{proxy} model and then train the large model on the selected coreset.

\textbf{Execution.} One-time selection with a proxy; the large model then trains on the coreset for all $T$ epochs:
\begin{enumerate}[label={\roman*)}, nosep]
\item \textit{Train proxy \& log margins:} train a proxy for $T_{\text{proxy}}$ epochs on all $N$ samples (per-example forward cost $f$, parameters $p$), recording each sample’s margin as it appears in training (no extra forward/backward beyond training).
\item \textit{Compute AUM \& select:} for each sample, aggregate (e.g., sum/average) its logged margins to obtain AUM and select a size-$k$ coreset according to the desired criterion (e.g., highest AUM, or filter low-AUM points).
\item \textit{Train large on coreset:} train the large model on the selected $k$ samples for $T$ epochs.
\end{enumerate}

\textbf{End-to-end compute (selection + coreset training).}
\[
\boxed{%
\text{Compute}_{\text{\texttt{AUM}}}
=
\underbrace{3N\,T_{\text{proxy}}\,f}_{\text{train proxy (margin logging piggybacks)}}
\;+\;
\underbrace{3k\,T\,\flarge}_{\text{train large on coreset}}
}
\]

\textbf{Selection-stage storage overhead.}
AUM can be streamed with a running sum/count per sample:
\[
\boxed{%
\text{StorageOverhead}_{\text{\texttt{AUM}}}
=\ \mathcal{O}(N)\ \ (\text{running sums})
}
\]

\paragraph{Example scenario values (ImageNet-1k; $T_{\text{proxy}}{=}90$, $T{=}90$).}
\begin{align*}
\text{Compute}_{\text{\texttt{AUM}}}\ \text{(PFLOPs only)}
&\approx \underbrace{622.663}_{3N T_{\text{proxy}} f}
\;+\; \underbrace{280.061}_{3k T\,\flarge}
\;=\; \mathbf{902.724\ \text{PFLOPs}},\\[6pt]
\text{StorageOverhead}_{\text{\texttt{AUM}}}\ \text{(float32)}
&\approx N\times4 = 5{,}073{,}420\ \text{B} = \mathbf{0.005\ \text{GB}}.
\end{align*}

\subsubsection{Contrastive Active Learning (\texttt{Cal})} 
\texttt{Cal}~\citep{cal_margatina2021} acquires unlabeled examples that are near labeled ones in feature space yet differ in predictive probabilities (contrastive pairs), using nearest neighbors over encoder features and a simple divergence-based ranking.

\textbf{Execution.}
A one-time selection stage is performed prior to training the large model:
\begin{enumerate}[label={\roman*)}, nosep]
\item train a \emph{proxy} model from scratch on the (growing) labeled set up to size $k$ (per-example forward cost $f \ll \flarge$);
\item encode all $U$ unlabeled points with the trained proxy (here $U{=}N$);
\item run $k$NN-style neighbor search between the unlabeled pool and the labeled set (size~$k$), and select a coreset of size $k$.
\end{enumerate}
\emph{(The divergence computation is much cheaper than (ii)–(iii) and is absorbed into big-$\mathcal{O}$.)}

\textbf{Overall compute (select once, then train-on-coreset).}
\[
\boxed{%
\text{Compute}_{\text{\texttt{Cal}}}
=
\underbrace{3k\,T_{\text{proxy}}\,f}_{\text{train proxy}}
\;+\;
\underbrace{U f}_{\text{encode unlabeled pool}}
\;+\;
\underbrace{\mathcal{O}(U\,k\,d)}_{\text{$k$NN over features}}
\;+\;
\underbrace{3kT\,\flarge}_{\text{train large on coreset}}
}
\]

\textbf{Selection-stage storage overhead.}
Storage overhead is due to the cached features for the unlabeled pool
\[
\boxed{%
\text{StorageOverhead}_{\text{\texttt{Cal}}}
=\ \mathcal{O}(U d)\ \ (\text{proxy features})
}
\]

\paragraph{Example scenario values: ImageNet-1k; $U{=}N$).}
\begin{align*}
\text{Compute}_{\text{\texttt{Cal}}}\ \text{(PFLOPs only)}
&\approx \underbrace{62.267}_{3k T_{\text{proxy}} f}
\;+\; \underbrace{2.306}_{U f\ (U{=}N)}
\;+\; \underbrace{280.061}_{3kT\,\flarge}
\\[-2pt]
&=\; \mathbf{344.634\ \text{PFLOPs}},\\[4pt]
\text{StorageOverhead}_{\text{\texttt{Cal}}}\ \text{(float32)}
&\approx U d\times4=\ \mathbf{763.739\ \text{GB}}.
\end{align*}

\subsubsection{Gradient and Error L2 Norm-based Data Pruning (\texttt{GraNd}, \texttt{EL2N})}
\texttt{GraNd}~\citep{E2LN_grand_paul2021} ranks training examples by the (expected) per-example gradient norm early in training:
\[
\mathrm{GraNd}_t(\vec{z}_m)
=\;\mathbb{E}_{\theta_t}\big\|\nabla_{\theta_t}\,\ell(\theta_t,\vec{z}_m)\big\|_2^2,
\]
averaged over multiple random initializations and early epochs, then retains the top-$k$ examples.

\texttt{EL2N}~\citep{E2LN_grand_paul2021} ranks examples by the (expected) L2 error of predictions early in training:
\[
\mathrm{EL2N}_t(\vec{z}_m)
=\;\mathbb{E}_{\theta_t}\,\big\|\mathbf{p}_\theta(\vec{x}_m)-\mathbf{y}_m\big\|_2,
\]
where $\mathbf{p}_\theta$ are class probabilities and $\mathbf{y}_m$ is the one-hot label. Scores are computed at small $t$ and optionally averaged over $R$ runs.

\textbf{Execution.} One-time selection prior to training the large model. Let $T_{\text{early}}$ be the number of early epochs used for scoring and $T_{\text{late}} \!=\! T - T_{\text{early}}$ the remaining epochs used to train on the coreset:
\begin{enumerate}[label={\roman*)}, nosep]
\item For each of $R$ initializations, train on all $N$ points for $T_{\text{early}}$ epochs (costing $3NT_{\text{early}}\flarge$) while logging per-example predictions.
\begin{enumerate}
    \item \texttt{EL2N}: compute scores directly from the logged predictions (no extra passes).
    \item  \texttt{GraNd}: run an \emph{additional scoring sweep} to obtain per-sample gradients (one forward + one backward pass per example per early epoch).
\end{enumerate}
\item Average scores across runs; keep the top $k$; train the large model on the coreset for $T_{\text{late}}$ epochs.
\end{enumerate}

\textbf{End-to-end compute (selection + train-on-coreset).}
\[
\boxed{\begin{aligned}
\text{Compute}_{\text{\texttt{EL2N}}}
&=
\underbrace{3N\,T_{\text{early}}\,R\,\flarge}_{\text{early training (errors reused)}}
\;+\;
\underbrace{3k\,T_{\text{late}}\,\flarge}_{\text{train large on coreset}},\\[6pt]
\text{Compute}_{\text{\texttt{GraNd}}}
&=
\underbrace{3N\,T_{\text{early}}\,R\,\flarge}_{\text{early training}}
\;+\;
\underbrace{3N\,T_{\text{early}}\,R\,\flarge}_{\text{extra per-sample gradient sweeps}}
\;+\;
\underbrace{3k\,T_{\text{late}}\,\flarge}_{\text{train large on coreset}}.
\end{aligned}}
\]

\textbf{Selection-stage storage overhead.}
During scoring, a running vector of $N$ scalar scores need to be stored:
\[
\boxed{\begin{aligned}
\text{StorageOverhead}_{\text{\texttt{EL2N}}} &= \mathcal{O}(N),\\
\text{StorageOverhead}_{\text{\texttt{GraNd}}} &= \mathcal{O}(N).
\end{aligned}}
\]

\paragraph{Example scenario values (ImageNet-1k; $R{=}10$, $T_{\text{early}}{=}10$, $T_{\text{late}}{=}80$).}
\begin{align*}
\text{Compute}_{\text{\texttt{EL2N}}}\ \text{(PFLOPs only)}
&\approx \underbrace{3{,}111.782}_{3N T_{\text{early}} R\,\flarge}
\;+\; \underbrace{248.943}_{3k T_{\text{late}}\,\flarge}
\;=\; \mathbf{3{,}360.725\ \text{PFLOPs}},\\[8pt]
\text{Compute}_{\text{\texttt{GraNd}}}\ \text{(PFLOPs only)}
&\approx \underbrace{3{,}111.782}_{\text{early training}}
\;+\; \underbrace{3{,}111.782}_{\text{extra gradient sweeps}}
\;+\; \underbrace{248.943}_{3k T_{\text{late}}\,\flarge}
\;=\; \mathbf{6{,}472.507\ \text{PFLOPs}},\\[8pt]
\text{StorageOverhead}\ \text{(float32)}
&\approx N\times 4 =\ \mathbf{0.005\ \text{GB}}\ \text{for each.}
\end{align*}

\subsubsection{Using Class Feature Medians (\texttt{Moderate})}
\texttt{Moderate}~\citep{xia2022moderate} builds a representative coreset by selecting, \emph{within each class}, the samples whose feature-to-center distances are closest to that class’s \emph{median} distance (thus avoiding both easy near-center redundancies and far-out outliers). We compute class centers and distances in a proxy feature space.

\textbf{Execution.} One-time selection with a proxy, then train the large model on the coreset for all $T$ epochs:
\begin{enumerate}[label={\roman*)}, nosep]
\item \textit{Train proxy:} train a proxy encoder on all $N$ samples for $T_{\text{proxy}}$ epochs (per-example forward cost $f$, parameters $p$).
\item \textit{Encode dataset:} extract proxy features for all $N$ samples (cost $Nf$ FLOPs).
\item \textit{Class-median selection:} for each class, compute the class center and all sample distances, then select the per-class quota of samples whose distances are closest to the class-wise median (distance computation $\mathcal{O}(Nd)$; median/quantile selection $\mathcal{O}(N\log N)$ or linear-time selection).
\item \textit{Train large on coreset:} train the large model on the selected $k$ samples for $T$ epochs.
\end{enumerate}

\textbf{End-to-end compute (selection + coreset training).}
\[
\boxed{%
\text{Compute}_{\text{\texttt{Moderate}}}
=
\underbrace{3N\,T_{\text{proxy}}\,f}_{\text{train proxy}}
\;+\;
\underbrace{N f}_{\text{feature extraction}}
\;+\;
\underbrace{\mathcal{O}\!\big(Nd + N\log N\big)}_{\text{class centers, distances, medians}}
\;+\;
\underbrace{3k\,T\,\flarge}_{\text{train large on coreset}}
}
\]

\textbf{Selection-stage storage overhead.}
The main storage overhead is from caching the features during selection
\[
\boxed{%
\text{StorageOverhead}_{\text{\texttt{Moderate}}}
=\ \mathcal{O}(N d)\ \ (\text{proxy features})
}
\]

\paragraph{Example scenario values (ImageNet-1k; $T_{\text{proxy}}{=}90$, $T{=}90$).}
\begin{align*}
\text{Compute}_{\text{\texttt{Moderate}}}\ \text{(PFLOPs only)}
&\approx \underbrace{622.663}_{3N T_{\text{proxy}} f}
\;+\; \underbrace{2.306}_{N f}
\;+\; \underbrace{280.061}_{3k T\,\flarge}
\;=\; \mathbf{905.031\ \text{PFLOPs}},\\[6pt]
\text{StorageOverhead}_{\text{\texttt{Moderate}}} \text{(float32)}
&\approx N d\times4=\ \mathbf{763.739\ \text{GB}}.
\end{align*}

\subsubsection[Message Passing (\texttt{D2-Pruning})]{Message Passing ($\mathbb{D}^2-\mathtt{Pruning}$)}
$\mathbb{D}^2\mathtt{Pruning}$~\citep{modelpred_d2_maharana2023} selects a coreset by \emph{balancing difficulty and diversity} via message passing on a dataset graph built from proxy features. Initial per-sample difficulty scores (from the proxy) are diffused over a $k$NN graph so that each example’s score incorporates information from its neighbors; a graph-based sampler then selects a subset that covers diverse yet difficult regions.

\textbf{Execution.} One-time selection with a proxy, then train the large model on the coreset for all $T$ epochs:
\begin{enumerate}[label={\roman*)}, nosep]
\item \textit{Train proxy:} train a proxy encoder on all $N$ samples for $T_{\text{proxy}}$ epochs (per-example forward cost $f$, parameters $p$).
\item \textit{Encode dataset:} extract proxy features for all $N$ samples (cost $Nf$ FLOPs).
\item \textit{Build graph:} construct a $k$NN graph over the features (e.g., ANN); cost $\mathcal{O}(N\,k\,d)$ (or $\mathcal{O}(N\,d\log N)$).
\item \textit{Message passing \& sampling:} run $H$ rounds of (forward/reverse) message passing on the $N\!\times\!k$ edges to update difficulty-aware scores, then sample a size-$k$ coreset (cost $\mathcal{O}(H\,N\,k)$ plus linear-time sampling).
\item \textit{Train large on coreset:} train the large model on the selected $k$ samples for $T$ epochs.
\end{enumerate}

\textbf{End-to-end compute (selection + coreset training).}
\[
\boxed{%
\text{Compute}_{\mathbb{D}^2\mathtt{Pruning}}
=
\underbrace{3N\,T_{\text{proxy}}\,f}_{\text{train proxy}}
\;+\;
\underbrace{N f}_{\text{feature extraction}}
\;+\;
\underbrace{\mathcal{O}(N\,k\,d) + \mathcal{O}(H\,N\,k)}_{\text{graph build \& message passing}}
\;+\;
\underbrace{3k\,T\,\flarge}_{\text{train large on coreset}}
}
\]

\textbf{Selection-stage storage overhead.}
Caching proxy features dominates; the $k$NN adjacency is linear in $N$ and smaller in practice:
\[
\boxed{%
\text{StorageOverhead}_{\mathbb{D}^2\mathtt{Pruning}}
=\ \mathcal{O}(N d)\ +\ \mathcal{O}(N\kappa)\ \ (\text{features + $k$NN graph})
}
\]

\paragraph{Example scenario values (ImageNet-1k; $T_{\text{proxy}}{=}90$, $T{=}90$).}
\begin{align*}
\text{Compute}_{\mathbb{D}^2\mathtt{Pruning}}\ \text{(PFLOPs only)} 
&\approx \underbrace{622.663}_{3N T_{\text{proxy}} f}
\;+\; \underbrace{2.306}_{N f}
\;+\; \underbrace{280.061}_{3k T\,\flarge}
\;=\; \mathbf{905.031\ \text{PFLOPs}}, \quad \text{(excluding graph terms)}\\[6pt]
\text{StorageOverhead}_{\mathbb{D}^2\mathtt{Pruning}}\ \text{(float32)}
&\approx N d\times4 =\ \mathbf{763.692\ \text{GB}}.
\end{align*}

\subsection{Optimization-based Methods}

\subsubsection{Gradient Matching Optimization (\texttt{CRAIG})}
\texttt{CRAIG}~\citep{craig_mirzasoleiman2020} selects a subset whose (aggregated) gradients closely match those of the full dataset across training, typically using a submodular (stochastic-greedy) objective over \emph{per-sample gradient embeddings} at selected \emph{anchor} epochs. We compute embeddings with a proxy model and then train the large model on the coreset for all $T$ epochs.

\textbf{Execution.} One-time selection with a proxy, then train the large model:
\begin{enumerate}[label={\roman*)}, nosep]
\item \textit{Train proxy:} train a proxy network on all $N$ samples for $T_{\text{proxy}}$ epochs (per-example forward cost $f$, parameters $p$).
\item \textit{Per-sample gradient embeddings at anchors:} every $\gamma_{\text{anc}}$ epochs (anchors $A{=}T_{\text{proxy}}/\gamma_{\text{anc}}$), compute for each sample the \emph{last-layer} gradient embedding using only forward-pass outputs:
\[
g_i^{(a)} \;=\; \big(\mathbf{p}_\theta^{(a)}(\vec{x}_i) - \mathbf{y}_i\big)\ \oplus\ h_\theta^{(a)}(\vec{x}_i),
\]
where $h_\theta$ is the penultimate representation (dim.\ $F$) and $\mathbf{p}_\theta - \mathbf{y}_i \in \mathbb{R}^C$ is the class-probability error; this avoids backward passes (piggybacks on training). Selection then runs stochastic-greedy on these embeddings per anchor.
\item \textit{Select \& train large:} union the anchor-wise selections to a size-$k$ coreset and train the large model on it for all $T$ epochs.
\end{enumerate}

\textbf{End-to-end compute (selection + coreset training).}
\[
\boxed{%
\text{Compute}_{\text{\texttt{CRAIG}}}
=
\underbrace{3N\,T_{\text{proxy}}\,f}_{\text{train proxy (embeddings piggyback)}}
\;+\;
\underbrace{\mathcal{O}\big(A\,(N k + N\log(1/\epsilon))\,D_{\text{eff}}\big)}_{\text{stochastic-greedy over anchors}}
\;+\;
\underbrace{3k\,T\,\flarge}_{\text{train large on coreset}}
}
\]
Here $D_{\text{eff}} \approx F{+}C$ is the embedding dimensionality (penultimate features and class-probability error); the submodular arithmetic is negligible in FLOPs relative to training and is kept in big-$\mathcal{O}$.

\textbf{Selection-stage storage overhead.}
We stream anchor processing so only a single anchor’s embeddings need be cached at once:
\[
\boxed{%
\text{StorageOverhead}_{\text{\texttt{CRAIG}}}
=\ \mathcal{O}\big(N(F{+}C)\big)\ \ (\text{per-anchor embeddings, streamed per anchor})
}
\]

\paragraph{Example scenario values (ImageNet-1k; $T_{\text{proxy}}{=}90$, $T{=}90$, $F{=}512$, $C{=}1000$).}
\begin{align*}
\text{Compute}_{\text{\texttt{CRAIG}}}\ \text{(PFLOPs only; excluding big-}\mathcal{O}\text{ selection)}
&\approx \underbrace{622.663}_{3N T_{\text{proxy}} f}
\;+\; \underbrace{280.061}_{3k T\,\flarge}
\;=\; \mathbf{902.724\ \text{PFLOPs}},\\[6pt]
\text{Storage}_{\text{\texttt{CRAIG}}}\ \text{(float32)}
&\approx N(F{+}C)\times4
\;=\; \mathbf{7.671\ \text{GB}}.
\end{align*}

\subsubsection{Generalization-based Data Subset Selection for Efficient and Robust Learning (\texttt{Glister})}
\texttt{Glister}~\citep{glister_killamsetty2021} selects $\rmS_j$ of size $k$ via a mixed discrete–continuous bi-level objective:
\[
\argmax_{\rmS_j \subseteq \rmS,\; |\rmS_j|\le k}\; \mathcal{L}_\rmV\!\big(\theta^\star(\rmS_j)\big)
\quad\text{where}\quad
\theta^\star(\rmS_j) = \argmax_{\theta}\, \mathcal{L}_\rmS(\theta;\rmS_j).
\]

\textbf{Execution.} \texttt{Glister} \emph{replaces the training loop}: it interleaves training on the current subset with periodic re-selection every $\gamma$ epochs (using stochastic-greedy with a Taylor approximation).

\textbf{Overall compute (train-on-coreset).}
Training on the coreset over $T$ epochs costs $3kT\,\flarge$.
Selection across $T$ epochs at frequency $\gamma$ costs
\(
\mathcal{O}\!\left(\frac{\big(kQ + N\log(1/\epsilon)\big)\, \flarge\, T}{\gamma}\right).
\)
Hence
\[
\boxed{%
\text{Compute}_{\text{\texttt{Glister}}}
=
3kT\,\flarge
+
\mathcal{O}\!\left(\frac{\big(kQ + N\log(1/\epsilon)\big)\, \flarge\, T}{\gamma}\right)
}
\]

\textbf{Selection-stage storage overhead.}
Storage overhead is from validation caches:
\[
\boxed{%
\text{StorageOverhead}_{\text{\texttt{Glister}}}
=\ \mathcal{O}(Q)\ \ (\text{validation cache})
}
\]

\paragraph{Example scenario values:}
\begin{align*}
\text{Compute}_{\text{\texttt{Glister}}}
&\approx \underbrace{280.061}_{3kT\,\flarge\ \text{(PFLOPs)}}
\;+\;
\underbrace{60{,}017.420}_{\frac{T}{\gamma}(kQ + N\log(1/\epsilon))\,\flarge\ \text{(PFLOPs; }\epsilon{=}0.01,\ \gamma{=}20\text{)}}
\\[-2pt]
&=\; \mathbf{60{,}297.481\ \text{PFLOPs}},\\[4pt]
\text{StorageOverhead}_{\text{\texttt{Glister}}}
&\approx Q\times 4 = 51{,}248\ \text{B} \approx \mathbf{5.1\times 10^{-5}\ \text{GB}}\ (\approx 0.05\ \text{MB}).
\end{align*}

\subsubsection{GraphCut-based Data Subset Selection (\texttt{GraphCut})} 
\texttt{GraphCut}~\citep{graphcut_iyer2021} selects $\rmS_j$ via the generalized graph-cut function
\[
f(\rmS_j)
= \lambda \!\!\sum_{m \in \rmS}\sum_{j \in \rmS_j}\! s(m,j)
\;-\;
\!\!\sum_{j_1,j_2 \in \rmS_j}\! s(j_1,j_2),
\quad \lambda \ge 2.
\]

\textbf{Execution.}
\texttt{GraphCut} \emph{adds} a one-time selection stage prior to training (it does not replace training).
The procedure is:
\begin{enumerate}[label={\roman*)}, nosep]
    \item train a proxy model;
    \item extract features for all $N$ training points using the trained proxy (per-example cost $f \ll \flarge$);
    \item run (stochastic-)greedy selection to build a size-$k$ subset.
\end{enumerate}
\emph{(Similarity operations are typically much cheaper than (ii)–(iii), so we absorb them into big-$\mathcal O$.)}

\textbf{Overall compute (train-on-coreset).}
One-time selection (including proxy training) + large-model training:
\[
\boxed{%
\text{Compute}_{\text{\texttt{GraphCut}}}
=
\underbrace{3N\,T_{\text{proxy}}\,f}_{\text{train proxy}}
\;+\;
\underbrace{N f}_{\text{feature extraction}}
\;+\;
\underbrace{\mathcal{O}(N^{2} k)}_{\text{greedy selection}}
\;+\;
\underbrace{3kT\,\flarge}_{\text{train large on coreset}}
}
\]

\textbf{Selection-stage storage overhead.}
Storage overhead is from storing pairwise similarities
\[
\boxed{%
\text{StorageOverhead}_{\text{\texttt{GraphCut}}}
=\ \mathcal{O}(N^{2})\ \ (\text{pairwise similarities / kernel})
}
\]

\paragraph{Example scenario values:}
\begin{align*}
\text{Compute}_{\text{\texttt{GraphCut}}}\ \text{(PFLOPs only)}
&\approx \underbrace{622.663}_{3N T_{\text{proxy}} f}
\;+\; \underbrace{2.306}_{N f}
\;+\; \underbrace{280.061}_{3kT\,\flarge}
\\[-2pt]
&=\; \mathbf{905.031\ \text{PFLOPs}},\\[4pt]
\text{StorageOverhead}_{\text{\texttt{GraphCut}}}\ \text{(float32)}
&:\ \text{full kernel peak} \approx \mathbf{6{,}434.944\ \text{GB}},\\
&\quad\text{half kernel peak} \approx \mathbf{3{,}217.493\ \text{GB}}.
\end{align*}

\subsubsection{Reconstructing the Decision Boundary (\texttt{BoundarySet-CCS})}
\texttt{BoundarySet-CCS}~\citep{mindboundary_yang2024} selects samples \emph{near the model’s decision boundary} and then enforces \emph{coverage} across distance bands. 
Distance-to-boundary is approximated per sample by the minimum number of PGD steps required to flip its prediction; \texttt{CCS} (coverage-centric sampling) then allocates the coreset budget across bands to preserve distribution coverage.

\textbf{Execution.} One-time selection with a proxy, then train the large model on the coreset for all $T$ epochs:
\begin{enumerate}[label={\roman*)}, nosep]
\item \textit{Train proxy:} train a proxy network on all $N$ samples for $T_{\text{proxy}}$ epochs (per-example forward cost $f$, parameters $p$).
\item \textit{Distance-to-boundary (PGD):} for each sample, run projected gradient steps until misclassification (cap at $K_{\max}$ steps). If the stopping step is $k$, define $d(x)=k$. Each PGD step requires one forward \& one backward; we use $3f$ per step in our convention. Let $\bar K \le K_{\max}$ be the average steps per sample.
\item \textit{CCS selection:} partition samples by $d(x)\in\{0,\dots,K_{\max}\}$ and allocate the size-$k$ budget across bands (linear-time bucketting and sampling).
\item \textit{Train large on coreset:} train the large model on the selected $k$ points for \emph{all} $T$ epochs.
\end{enumerate}

\textbf{End-to-end compute (selection + coreset training).}
\[
\boxed{%
\text{Compute}_{\text{\texttt{BoundarySet-CCS}}}
=
\underbrace{3N\,T_{\text{proxy}}\,f}_{\text{train proxy}}
\;+\;
\underbrace{3N\,\bar K\,f}_{\text{PGD distance-to-boundary sweeps}}
\;+\;
\underbrace{3k\,T\,\flarge}_{\text{train large on coreset}}
}
\]

\textbf{Selection-stage storage overhead.}
Storage is mainly through the scalar distance per sample during selection:
\[
\boxed{%
\text{StorageOverhead}_{\text{\texttt{BoundarySet-CCS}}}
=\ \mathcal{O}(N)\ \ (\text{per-sample distance})
}
\]

\paragraph{Example scenario values (ImageNet-1k; $T_{\text{proxy}}{=}90$, $K_{\max}{=}50$ so $\bar K{\approx}50$, $T{=}90$).}
\begin{align*}
\text{Compute}_{\text{\texttt{BoundarySet-CCS}}}\ \text{(PFLOPs only)}
&\approx \underbrace{622.663}_{3N T_{\text{proxy}} f}
\;+\; \underbrace{345.924}_{3N \bar K f}
\;+\; \underbrace{280.061}_{3k T\,\flarge}
\;=\; \mathbf{1{,}248.648\ \text{PFLOPs}},\\[6pt]
\text{StorageOverhead}_{\text{\texttt{BoundarySet-CCS}}}\ \text{(float32)}
&\approx N\times4 = \mathbf{0.005\ \text{GB}}.
\end{align*}

\subsection{Training Property-based Methods}
\subsubsection{Samples with Low Loss Curvature (\texttt{SloCurv})}
\texttt{SloCurv}~\citep{slocurves_garg2023} scores each training sample by an \emph{input-loss curvature} proxy computed at the end of (proxy) training. For a sample $\vec{z}_m$, with model parameters $\theta^T$ and random Rademacher directions $v_r$ scaled by $h$, the score is
\[
\mathrm{Curv}(\vec{z}_m;\theta^T)
\;=\;
\frac{1}{R}\sum_{r=1}^{R}
\left\|
\nabla_{\vec{z}}\!\left[\ell\big(\theta^T,\vec{z}_m + h v_r\big)-\ell\big(\theta^T,\vec{z}_m\big)\right]
\right\|_2^2.
\]
Samples with the lowest curvature are retained to form a size-$k$ coreset.

\textbf{Execution.} One-time selection prior to training the large model; \emph{a proxy model is used for scoring}:
\begin{enumerate}[label={\roman*)}, nosep]
\item \textit{Train proxy:} train a proxy encoder with per-example forward FLOPs $f$ and parameters $p$ for $T_{\text{proxy}}$ epochs on all $N$ points.
\item \textit{Curvature scoring:} at the end of proxy training, for each sample compute $\mathrm{Curv}(\vec{z}_m;\theta^T)$ using $R$ Hutchinson repeats. This requires $(R{+}1)$ gradient evaluations per sample (one at $\vec{z}_m$ and one for each $\vec{z}_m{+}hv_r$), each costing $\approx (1\text{ fwd }{+}\;1\text{ bwd}) \approx 3f$ in our convention.
\item \textit{Train on coreset:} select the $k$ lowest-curvature samples and train the large model on this coreset for $T_{\text{late}}{=}T$ epochs.
\end{enumerate}

\textbf{End-to-end compute (selection + coreset training).}
\[
\boxed{%
\text{Compute}_{\text{\texttt{SloCurv}}}
=
\underbrace{3N\,T_{\text{proxy}}\,f}_{\text{train proxy}}
\;+\;
\underbrace{3N\,(R{+}1)\,f}_{\text{curvature scoring at end of proxy training}}
\;+\;
\underbrace{3k\,T_{\text{late}}\,\flarge}_{\text{train large on coreset}}
}
\]

\textbf{Selection-stage storage overhead.}
Storage overhead is from keeping a track of the running curvature values and the directions probed. 
\[
\boxed{%
\text{StorageOverhead}_{\text{\texttt{SloCurv}}}
=\ \mathcal{O}(N)\ +\ \mathcal{O}(R d)\ \ (\text{running stats + $R$ probe dirs})
}
\]

\paragraph{Example scenario values (ImageNet-1k; $T_{\text{proxy}}{=}90$, $T_{\text{late}}{=}90$, $R{=}10$).}
\begin{align*}
\text{Compute}_{\text{\texttt{SloCurv}}}\ \text{(PFLOPs only)}
&\approx \underbrace{622.663}_{3N T_{\text{proxy}} f}
\;+\; \underbrace{76.103}_{3N (R{+}1) f}
\;+\; \underbrace{280.061}_{3k T_{\text{late}}\,\flarge}
\;=\; \mathbf{978.827\ \text{PFLOPs}},\\[6pt]
\text{StorageOverhead}_{\text{\texttt{SloCurv}}}\ \text{(float32)}
&\approx N\times4 + R d\times4= 11{,}094{,}540\ \text{B}
= \mathbf{0.011\ \text{GB}}.
\end{align*}

\subsubsection{Temporal Dual-Depth Scoring (\texttt{TDDS})}
\texttt{TDDS}~\citep{zhang2024_tdds} builds a coreset by combining two temporal depths of signal from training with a \emph{proxy} model. 
Depth 1 computes, for each epoch, the projection of each sample’s \emph{per-sample gradient} onto the epoch’s accumulated gradient direction. 
Depth 2 then aggregates these per-epoch contributions over a sliding window of length $J$ and emphasizes their \emph{temporal variability} (e.g., windowed variance). 
We maintain windowed statistics in a streaming manner (constant-time updates), so full trajectories need not be stored.

\textbf{Execution.} One-time selection with a proxy; the large model then trains on the coreset for the full $T$ epochs:
\begin{enumerate}[label={\roman*)}, nosep]
\item \textit{Train proxy:} train a proxy for $T_{\text{proxy}}$ epochs on all $N$ samples (per-example forward FLOPs $f$, parameters $p$); accumulate the epoch gradient direction.
\item \textit{Per-sample gradients:} after each proxy epoch, run a scoring sweep to compute per-sample gradients and their projections onto the epoch direction (costing one forward+backward per sample); update the $J$-length windowed statistics and \texttt{TDDS} score (streaming).
\item \textit{Select \& train large:} rank by \texttt{TDDS} and keep the top-$k$; train the large model on these $k$ samples for \emph{all} $T$ epochs.
\end{enumerate}

\textbf{End-to-end compute (selection + coreset training).}
\[
\boxed{%
\text{Compute}_{\text{\texttt{TDDS}}}
=
\underbrace{3N\,T_{\text{proxy}}\,f}_{\text{train proxy}}
\;+\;
\underbrace{3N\,T_{\text{proxy}}\,f}_{\text{per-sample gradient sweeps for TDDS}}
\;+\;
\underbrace{3k\,T\,\flarge}_{\text{train large on coreset}}
}
\]

\textbf{Selection-stage storage overhead.}
Streaming \texttt{TDDS} requires a $J$-length buffer of scalar contributions per example (and a temporary epoch-direction vector):
\[
\boxed{%
\text{StorageOverhead}_{\text{\texttt{TDDS}}}
=\ \mathcal{O}(N J)\ \ (\text{windowed logs})
}
\]

\paragraph{Example scenario values (ImageNet-1k; $T{=}90$, $T_{\text{proxy}}{=}90$, $J{=}10$).}
\begin{align*}
\text{Compute}_{\text{\texttt{TDDS}}}\ \text{(PFLOPs only)}
&\approx \underbrace{622.663}_{3N T_{\text{proxy}} f}
\;+\; \underbrace{622.663}_{\text{per-sample gradient sweeps}}
\;+\; \underbrace{280.061}_{3k T\,\flarge}
\;=\; \mathbf{1{,}525.387\ \text{PFLOPs}},\\[6pt]
\text{StorageOverhead}_{\text{\texttt{TDDS}}}\ \text{(float32)}
&\approx N J\times4
= 50{,}734{,}200\ \text{B}
= \mathbf{0.051\ \text{GB}}.
\end{align*}

\subsubsection{Using Prediction Uncertainty with a Proxy (\texttt{Dyn-Unc}, \texttt{DUAL})}
\texttt{Dyn-Unc}~\citep{uncertainity_he2024_dynunc} measures prediction uncertainty via a sliding window of length $J$ over per-example target-class probabilities and averages the windowed uncertainty across proxy training. 

\texttt{DUAL}~\citep{cho2025_DUAL} combines \emph{uncertainty} with \emph{difficulty} (window-mean prediction) and computes scores from an \emph{early} stage of proxy training.

\textbf{Execution.} One-time selection with a proxy; the large model then trains on the coreset for the full $T$ epochs:
\begin{enumerate}[label={\roman*)}, nosep]
\item \textit{Train proxy:} train a proxy with per-example forward FLOPs $f$ and parameters $p$ for $T_{\text{proxy}}$ epochs; maintain sliding-window statistics (length $J$) via $O(1)$ updates per visit.
\item \textit{Score \& select:}
\begin{itemize}[nosep,leftmargin=1.4em]
  \item \texttt{Dyn-Unc}: use windowed uncertainty (variance over the last $J$ predictions), averaged over all proxy epochs $T_{\text{proxy}}$.
  \item \texttt{DUAL}: use the product of windowed uncertainty and difficulty (window mean), averaged over the \emph{early} proxy epochs $T_{\text{proxy,early}} \le T_{\text{proxy}}$.
  \item \textit{Beta sampling} (\texttt{DUAL}): apply pruning-ratio–adaptive sampling based on a Beta distribution to stabilize extreme pruning. This adds negligible compute and storage.
\end{itemize}
\item \textit{Train on coreset (large model):} train for \emph{all} $T$ epochs on the  top-$k$ points.
\end{enumerate}

\textbf{End-to-end compute (selection + coreset training).}
\[
\boxed{\begin{aligned}
\text{Compute}_{\text{\texttt{Dyn-Unc}}}
&=
\underbrace{3N\,T_{\text{proxy}}\,f}_{\text{train proxy + logging}}
\;+\;
\underbrace{3k\,T\,\flarge}_{\text{train large on coreset}},\\[4pt]
\text{Compute}_{\text{\texttt{DUAL}}}
&=
\underbrace{3N\,T_{\text{proxy,early}}\,f}_{\text{early proxy scoring}}
\;+\;
\underbrace{3k\,T\,\flarge}_{\text{train large on coreset}}.
\end{aligned}}
\]

\textbf{Selection-stage storage overhead.}
Only require a scalar window of values per example.
\[
\boxed{\begin{aligned}
\text{StorageOverhead}_{\text{\texttt{Dyn-Unc}}} &= \mathcal{O}(N J),\\
\text{StorageOverhead}_{\text{\texttt{DUAL}}}    &= \mathcal{O}(N J).
\end{aligned}}
\]

\paragraph{Example scenario values (ImageNet-1k; $T{=}90$, $T_{\text{proxy}}{=}90$, $T_{\text{proxy,early}}{=}50$, $J{=}10$).}
\begin{align*}
\text{Compute}_{\text{\texttt{Dyn-Unc}}}\ \text{(PFLOPs only)}
&\approx \underbrace{622.663}_{3N T_{\text{proxy}} f}
\;+\; \underbrace{280.061}_{3k T\,\flarge}
\;=\; \mathbf{902.724\ \text{PFLOPs}},\\[6pt]
\text{Compute}_{\text{\texttt{DUAL}}} \text{(PFLOPs only)}
&\approx \underbrace{345.924}_{3N T_{\text{proxy,early}} f}
\;+\; \underbrace{280.061}_{3k T\,\flarge}
\;=\; \mathbf{625.985\ \text{PFLOPs}},\\[6pt]
\text{StorageOverhead (float32)}
&\approx N J\times4
= \mathbf{0.051\ \text{GB}}\ \text{for each.}
\end{align*}

\subsection{Our Method - Correlation of Loss Differences (\texttt{CLD})}
\texttt{CLD} builds a coreset by leveraging only \emph{loss values} over training: it records the per-epoch losses of all training points and a small held-out query set, then ranks training examples using the correlation of loss \emph{differences} across epochs between train and query. No gradients or Hessians are required.

\textbf{Execution.} One-time selection with a proxy, followed by large-model training:
\begin{enumerate}[label={\roman*)}, nosep]
\item \textit{Train proxy:} train a proxy model for $T_{\text{proxy}}$ epochs on all $N$ samples (per-example forward cost $f$, parameters $p$).
\item \textit{Collect losses:} during proxy training, record per-epoch losses for all $N$ training samples (no extra compute beyond the training pass), and run a forward pass on all $Q$ query samples each epoch to record their losses ($Q f$ FLOPs per epoch).
\item \textit{Score \& select:} compute \texttt{CLD} scores (correlations of loss differences over epochs) and select a size-$k$ coreset. The arithmetic for correlations/ranking is linear-time in the number of stored losses and is negligible compared to FLOPs above.
\item \textit{Train on coreset:} train the large model on the selected $k$ points for $T$ epochs.
\end{enumerate}

\textbf{End-to-end compute (selection + coreset training).}
\[
\boxed{%
\text{Compute}_{\text{\texttt{CLD}}}
=
\underbrace{3N\,T_{\text{proxy}}\,f}_{\text{train proxy}}
\;+\;
\underbrace{Q\,T_{\text{proxy}}\,f}_{\text{query loss collection}}
\;+\;
\underbrace{3k\,T\,\flarge}_{\text{train large on coreset}}
\quad\text{(FLOPs)}
}
\]

\textbf{Selection-stage storage overhead.}
We store loss scalars for all $N$ training and $Q$ query samples across $T_{\text{proxy}}$ epochs:
\[
\boxed{%
\text{StorageOverhead}_{\text{\texttt{CLD}}}
=\ \mathcal{O}\!\big((N{+}Q)T_{\text{proxy}}\big)\ \ (\text{loss logs})
}
\]

\paragraph{Example scenario values (ImageNet-1k; $T_{\text{proxy}}{=}90$, $T{=}90$).}
\begin{align*}
\text{Compute}_{\text{\texttt{CLD}}}\ \text{(PFLOPs only)}
&\approx \underbrace{622.663}_{3N T_{\text{proxy}} f}
\;+\; \underbrace{2.097}_{Q T_{\text{proxy}} f}
\;+\; \underbrace{280.061}_{3k T_{\text{late}}\,\flarge}
\;=\; \mathbf{904.821\ \text{PFLOPs}},\\[6pt]
\text{StorageOverhead}_{\text{\texttt{CLD}}}\ \text{(float32)}
&\approx (N{+}Q)T_{\text{proxy}}\times4
= 461{,}220{,}120\ \text{B}
= \mathbf{0.461\ \text{GB}}.
\end{align*}

\section{Comparison of \texttt{CLD} with Influence}
\label{appendix:TDA}
The impact measured by $\CLD$ closely aligns with the  ``\textit{influence}" of individual training samples on a model's predictions that are measured by \textit{Training Data Attribution} (TDA) methods. 
TDA methods have been widely employed for tasks such as debugging datasets, interpreting models, and optimizing training efficiency~\cite{Koh2017, representer_yeh2018, FZ_infl_feldman2020}. 

The earliest TDA methods utilized \textit{Leave-One-Out} (LOO) training, which involves retraining the model after removing specific data points and observing the changes in performance. 
While straightforward, LOO retraining is computationally prohibitive for modern deep learning models due to the need for multiple retraining cycles~\cite{Koh2017}. 
Recent TDA metrics, such as FZ-Influence ($\mathtt{Infl}$)~\cite{FZ_infl_feldman2020} and $\mathtt{Datamodels}$~\cite{datamodels_ilyas2022}, have gained popularity owing to precomputed scores for widely-used datasets in computer vision. 
These methods, however, face scalability challenges. 

A prominent alternative that arose was \textit{Influence Functions}, which estimated the effect of downweighting individual samples using first-order (gradient) and second-order (Hessian) computations~\cite{Koh2017, influence_fragile_basu2021} performed at the end of training. 
Methods like $\mathtt{RandSelect}$~\cite{Randselect_wojnowicz2016} and $\mathtt{Arnoldi}$ iterations~\cite{Arnoldi_schioppa2022} improved computational efficiency by approximating the Hessian. 
Similarly, $\mathtt{TRAK}$~\cite{trak_2023park} combined random projections, gradient-based methods, and ensembling to estimate the influence of training samples. 
However, these approaches often rely on strong assumptions, such as convergence to a unique optimal solution, which limits their applicability to neural networks. 
Additionally, Hessian computations introduce significant computational overhead.
To address these challenges, \textit{unrolling-based} methods that observe the learning process across training iterations have been proposed. 
These techniques approximate the impact of samples by differentiating through the optimization trajectory~\cite{dataclensing_hara2019}. 
Among these, $\mathtt{TracIn}$~\cite{tracin_pruthi2020} is a highly efficient method that estimates influence using gradients tracked throughout training. 
Its practical implementation, $\mathtt{TracInCP}$, uses intermediate checkpoints to alleviate computational burdens. 
While effective, unrolling methods require storing intermediate training states, leading to high storage and computational costs.

In contrast, $\CLD$ solely relies on loss trajectories rather than first- or second-order quantities (e.g., gradients and Hessians). 

In order to measure the ``influence" of a training sample $\vec{z}_m$ on an individual unseen (or query) sample $\vec{z}_q$, we modified \cref{def:cld} slightly to be 
\begin{equation}
\CLD_{\texttt{infl}}(\vec{z}_m, \vec{z}_q) \coloneqq \rho\left( \vec{\Delta}_m, \vec{\Delta}_q \right)
\end{equation}

We will now compare the impact measured by this metric ($\CLD_{\texttt{infl}}$) to the influence measured by TDA metrics, by utilizing the \textit{linear datamodeling score} ($\mathtt{LDS}$) introduced by \cite{trak_2023park}.
$\mathtt{LDS}$ measures the correlation between group-level attribution scores ($\CLD_{\texttt{infl}}$ or influence) and their observed impact on model predictions when subsets of training data are used. 

\textbf{$\mathbf{LDS}$ definition} For a query data point $z_q$, random subsets $\{\mathcal{S}_j\}_{j=1}^C$ are sampled from the training dataset, where each subset $\mathcal{S}_j$ contains $\lceil \alpha N \rceil$ points, with $\alpha \in (0, 1)$ as the sampling ratio. 
Each subset $\mathcal{S}_j$ is used to retrain the model $R$ times with different initializations $\{\xi_r\}_{r=1}^R$ and training parameters $\lambda$, resulting in the model $\theta^T_{S_j,\xi_r}$. 
This trained model is then used to compute a measurable quantity $f(\vec{z}_q, \theta^T_{\mathcal{S}_j,\xi_r})$. 
A \textit{group attribution score}, $g_\tau(\vec{z}_q, \mathcal{S}_j, \mathcal{S})$, is calculated as $ g_\tau(\vec{z}_q, \mathcal{S}_j, \mathcal{S}) \coloneqq \sum_{\vec{z} \in \mathcal{S}_j} \tau(\vec{z}_q, \vec{z},\mathcal{S}),$ where $\tau(\vec{z}_q, \vec{z}, \mathcal{S})$ is the attribution score for a training point $\vec{z}$ with respect to $\vec{z}_q$. 
The $\mathtt{LDS}$ is then obtained using Spearman's rank~\cite{spearman1904} correlation ($\rho_s$):
{\small
\begin{align}
\mathtt{LDS}(\vec{z}_q, \alpha) \coloneqq \rho_s \bigg( 
\left\{ \frac{1}{R} \sum_{r=1}^R f\left(\vec{z}_q, \theta^T_{\mathcal{S}_j,\xi_r}\right) \right\}_{j=1}^C, \left\{ g_\tau\left(\vec{z}_q, \mathcal{S}_j, \mathcal{S}\right) \right\}_{j=1}^C 
\bigg)
\end{align}
}
\textbf{Experimental Setup:} We compared the $\mathtt{LDS}$ scores of $\CLD_{\texttt{infl}}$ against those of $\mathtt{TRAK}$, $\mathtt{Arnoldi}$, $\mathtt{TracIn}$, $\mathtt{Infl}$, and $\mathtt{Datamodels}$. 
Precomputed scores for $\mathtt{Infl}$ and $\mathtt{Datamodels}$ were used for the CIFAR-10 dataset~\cite{cifar} with ResNet-9~\cite{resnet}, while 10 models were trained for $\mathtt{TRAK}$, $\mathtt{Arnoldi}$, $\mathtt{TracIn}$, and $\CLD_{\texttt{infl}}$. 
The evaluation employed $C=100$ random subsets, sampling ratios $\alpha$ ranging from $0.3$ to $\frac{N-1}{N}$, a query set of 200 samples, and $R=10$ seeds. 
The measurable quantity was the accuracy of query samples.

\begin{figure}[t]
  \centering
  \begin{subfigure}[b]{0.48\textwidth}
  \centering
    \includegraphics[width=1\linewidth]{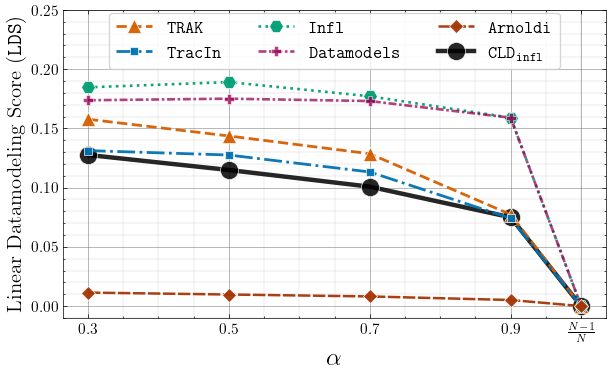}
    \caption{Linear datamodeling scores (LDS) of existing TDA metrics compared to $\CLD_{\texttt{infl}}$. The scores were evaluated on CIFAR-10, ResNet-9 with 200 (randomly selected) query samples evaluated over 100 subsets.}
    \label{fig:lds_comparison}
  \end{subfigure}
  \hfill
  \begin{subfigure}[b]{0.48\textwidth}
  \centering
  \includegraphics[width=1\linewidth]{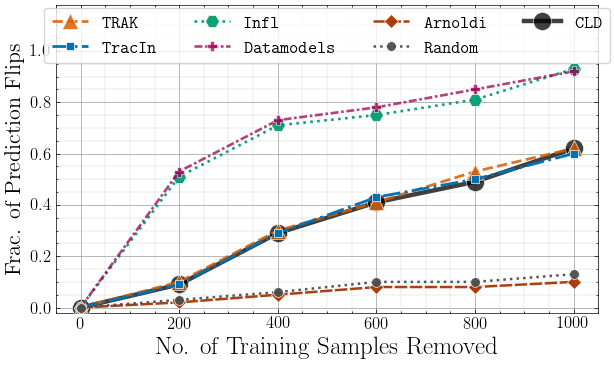}
    \caption{Prediction brittleness of $\CLD$ and TDA metrics on CIFAR-10 with ResNet-9. The top-$k$ influential training samples were removed, and the average prediction flips for 200 query samples over 5 seeds are shown.}
    \label{fig:prediction_brittleness}
  \end{subfigure}
  \caption{Comparison of \texttt{CLD} to TDA metrics. (Best viewed in color.)}
  \label{fig:tda_experiments}
  \vspace{-5mm}
\end{figure}

\textbf{Results and Observations:} The results presented in \cref{fig:lds_comparison} reveal that while the impact captured by $\CLD_{\texttt{infl}}$ is distinct from the influence measured by traditional TDA metrics, it aligns closely with methods such as $\mathtt{TracIn}$ and $\mathtt{TRAK}$ in terms of behavior while being resource-efficient. 
Notably, the performance gap between these computationally intensive methods and $\CLD_{\texttt{infl}}$ narrows as $\alpha$ increases.
The drop in $\mathtt{LDS}$ scores at $\alpha=\frac{N-1}{N}$ is due to the stochastic nature of model retraining\footnote{This observation is consistent with the findings of previous research \cite{revisitinglds_karthikeyan2021, source_bae2024}.}.

\textbf{Takeaways:} Although $\CLD_{\texttt{infl}}$ (and in essence $\CLD$) fundamentally differs from influence-based TDA metrics, it mirrors their trends at higher sampling ratios while maintaining superior computational efficiency, solidifying its utility as a practical tool for analyzing training dynamics.

\subsection{Importance of the Top-k Samples}
\label{appendix:prediction_brittleness}
We demonstrate that the samples identified by $\CLD$ are indeed pivotal for generalization, addressing the question: \textit{“Are the training samples with the top-$k$ scores truly the most critical for forming a coreset?”} 
This is evaluated using the \textit{prediction brittleness} metric. 
This is also mentioned briefly in \cref{sec:Discussion}.

\textbf{Experimental Setup:} To quantify the influence of top-$k$ samples, we systematically removed the most impactful data points identified by their $\CLD$ scores, from the training set and retrained the model. 
The metric of interest was the fraction of prediction flips observed in a held-out query set after retraining. 
If these samples are truly critical for generalization, their removal should cause substantial prediction changes. 
This experiment also included a comparative analysis with the top-$k$ influential samples identified by TDA scores, discussed in this section. 
Experiments were performed on the CIFAR-10 dataset using a ResNet-9 architecture, with a randomly selected query set of 200 samples. 
For each configuration, once the top-$k$ samples were excluded, the model was retrained 5 times to account for randomness, and the average fraction of prediction flips was recorded.

\textbf{Results and Observations:} The results, summarized in \cref{fig:prediction_brittleness}, illustrate that the top-$k$ samples identified by $\CLD$ have a comparable influence on prediction outcomes to those identified by TDA-based metrics such as $\mathtt{TracIn}$ and $\mathtt{TRAK}$. 
Notably, removing the top-800 samples of CIFAR-10, which constitutes just 1.6\% of the dataset, results in prediction flips for over half of the query set. 
This highlights the significant role of the samples identified by $\CLD$ in supporting model generalization. 
While metrics like $\mathtt{Datamodels}$ and $\mathtt{Infl}$ exhibit greater impact, they are computationally prohibitive, rendering them unsuitable for large-scale coreset generation. 

\textbf{Takeaways:} $\CLD$ emerges as an effective and computationally efficient approach for identifying training samples critical to generalization, making it a practical tool for coreset selection in large-scale machine learning pipelines.

\end{document}